\title[Robust learning under clean-label attack]{Robust learning under clean-label attack}
\def\set@curr@file#1{\def\@curr@file{#1}} 
\DeclareMathOperator{\Log}{Log}
\DeclareMathOperator{\poly}{poly}
\newcommand{\inner}[2]{\left\langle #1, #2 \right\rangle}
\newcommand{\ceil}[1]{\left\lceil#1\right\rceil}
\newcommand{\floor}[1]{\left\lfloor#1\right\rfloor}
\newcommand{\la}{\leftarrow}
\newcommand{\Ber}{\mathrm{Ber}}
\newcommand{\Unif}{\mathrm{Unif}}
\newcommand{\Bin}{\mathrm{Bin}}
\newtheorem{conj}{Conjecture}
\newtheorem{lm}{Lemma}
\newtheorem{thm}{Theorem}
\newtheorem{dfn}{Definition}
\newtheorem{crl}{Corollary}
\newtheorem{rmk}{Remark}
\newtheorem{eg}{Example}
\newcommand{\R}{\mathbb{R}}
\newcommand{\nats}{\mathbb{N}}
\newcommand{\NN}{{\mathbb N}}
\newcommand{\ind}[1]{\mathds{1}[#1]}
\newcommand{\bZero}{{\bm 0}}
\renewcommand{\P}{\mathbb{P}}
\newcommand{\E}{\mathbb{E}}
\newcommand{\EE}[1]{\mathbb{E}\left[#1\right]}
\newcommand{\EEs}[2]{\mathbb{E}_{#1}\left[#2\right]}
\newcommand{\PP}[1]{\mathbb{P}\left(#1\right)}
\newcommand{\PPs}[2]{\mathbb{P}_{#1}\left(#2\right)}
\newcommand{\norm}[1]{\left\|#1\right\|}
\newcommand{\abs}[1]{\left|#1\right|}
\newcommand*{\eqdef}{\triangleq}
\newcommand{\cA}{\mathcal{A}}
\newcommand{\cB}{\mathcal{B}}
\newcommand{\cD}{\mathcal{D}}
\newcommand{\cE}{\mathcal{E}}
\newcommand{\cH}{\mathcal{H}}
\newcommand{\cL}{\mathcal{L}}
\newcommand{\cM}{\mathcal{M}}
\newcommand{\cP}{\mathcal{P}}
\newcommand{\cX}{\mathcal{X}}
\newcommand{\cY}{\mathcal{Y}}
\newcommand{\eps}{\varepsilon}
\renewcommand{\epsilon}{\varepsilon}
\renewcommand{\hat}{\widehat}
\renewcommand{\tilde}{\widetilde}
\renewcommand{\bar}{\overline}
\newcommand{\nothere}[1]{}
\newcommand{\err}{\mathrm{err}}
\newcommand{\vcd}{\mathrm{VCdim}}
\newcommand{\VS}{\mathrm{VS}}
\newcommand{\SVM}{\mathrm{SVM}}
\newcommand{\DIS}{\mathrm{DIS}}
\newcommand{\ATK}{\mathrm{ATK}}
\newcommand{\Adv}{\mathrm{Adv}}
\newcommand{\atk}{\mathrm{atk}}
\newcommand{\trn}{\mathrm{trn}}
\newcommand{\ERM}{{\mathrm{ERM}}}
\newcommand{\Proj}{{\mathrm{Proj}}}
\newcommand{\Rfl}{{\mathrm{Ref}}}
\newcommand{\A}{\mathcal{A}}
\newcommand{\Major}{{\mathrm{Major}}}
\newcommand{\supp}{{\mathrm{supp}}}
\newcommand{\po}{\leq_f^\cH}
\newcommand{\clsr}{\mathrm{Closure}}
\newcommand{\sph}{\Gamma}
\newcommand{\rbst}{{\mathrm{rbst}}}
\newcommand{\wt}[1]{\widetilde{#1}}
\newcommand{\CR}{\mathrm{CR}}
\renewenvironment{proof}[1][]{\par\noindent{\bf Proof #1\ }}{\hfill\BlackBox\\[2mm]}
\begin{document}

\maketitle
\begin{abstract}%
    We study the problem of robust learning under clean-label data-poisoning attacks, where the attacker injects (an arbitrary set of) \emph{correctly-labeled} examples to the training set to fool the algorithm into making mistakes on \emph{specific} test instances at test time. The learning goal is to minimize the attackable rate (the probability mass of attackable test instances), which is more difficult than optimal PAC learning. As we show, any robust algorithm with diminishing attackable rate can achieve the optimal dependence on $\epsilon$ in its PAC sample complexity, i.e., $O(1/\epsilon)$. On the other hand, the attackable rate might be large even for some optimal PAC learners, e.g., SVM for linear classifiers. Furthermore, we show that the class of linear hypotheses is not robustly learnable when the data distribution has zero margin and is robustly learnable in the case of positive margin but requires sample complexity exponential in the dimension. For a general hypothesis class with bounded VC dimension, if the attacker is limited to add at most $t>0$ poison examples, the optimal robust learning sample complexity grows almost linearly with $t$.
    \end{abstract}
    
    \begin{keywords}%
      adversarial machine learning, data poisoning, clean-label attack, PAC learning, sample complexity.
    \end{keywords}
\section{Introduction}
Data poisoning is an attack on machine learning algorithms where the attacker adds examples to the training set with the goal of causing the algorithm to produce a classifier that makes specific mistakes the attacker wishes to induce at test time. In this paper, we focus on clean-label attacks in which an attacker, with knowledge of the training set $S$ and the test instance $x$, injects a set of examples labeled by the target function into the training set with the intent of fooling the learner into misclassifying the test instance $x$. This type of attack is called a clean-label attack because the attacker can only add correctly-labeled examples to the training set, and it has been proposed and studied empirically by~\cite{shafahi2018poison}.

In the realizable setting when the target function belongs to the hypothesis class $\cH$, any empirical risk minimizer (ERM) will achieve error of $\tilde{O}(\frac{\text{VCdim}(\cH)}{m})$ with training set size $m$.  This means that an ERM learner will still have error rate at most $\tilde{O}(\frac{\text{VCdim}(\cH)}{m})$ even in the presence of a clean-label attack; i.e., the attacker cannot significantly increase the overall {\em error rate}.   However, an attacker could still cause the ERM learner to make {\em specific} mistakes that the attacker wishes. For example, consider an ERM learner for the hypothesis class of intervals over $[0,1]$ that predicts the positive interval of maximum length consistent with the training data, in the case that the target function labels all of $[0,1]$ negative. Then any test instance not in the training set is attackable for this ERM learner by an adversary that adds enough poison examples so that the interval that the test instance is in becomes the largest interval in the training set.  On the other hand, for any target interval, for the ERM learner that outputs the {\em smallest} consistent interval, the attackable test instances will only have probability mass $O(1/m)$ (see Example~\ref{eg:interval} for more details).  Also, notice that for the hypothesis class of threshold functions over $[0,\infty)$, any ERM learner has a small portion of attackable test instances because the disagreement region of all consistent hypotheses is small and only test instances in the disagreement region are attackable.

From these examples, we can see that given an ERM learning algorithm $\A$ and a training set $S$, the probability mass of the attackable region (the set of attackable test instances) is at least as large as the error rate of the ERM learner and no greater than the disagreement region of all consistent hypotheses, and it depends on the specific algorithm $\A$. In this paper, we study the problem of whether we can obtain a small rate of attackable test instances in binary classification.
In the process we find interesting connections to existing literature on the sample complexity of PAC learning, and complexity measures arising in that literature.  Specifically, we study this problem in the realizable setting as it is unclear how to best define ``clean-label'' in the agnostic case.



\paragraph{Related work}
Clean-label data-poisoning attacks have been studied extensively in the literature \citep{shafahi2018poison,suciu2018does}, and  \cite{shafahi2018poison} show that clean-label attacks can be very  effective on neural nets empirically. 
For example, \cite{shafahi2018poison} show that in natural image domains, given the knowledge of the training model and of the test point to be attacked, the attacker can cause the model retrained with an injection of clean-label poisoned data to misclassify the given test instance with high success rate. Moreover, the attacker is able to succeed even though the overall error rate of the trained classifier remains relatively unchanged. 

\cite{mahloujifar2017blockwise, mahloujifar2018learning, mahloujifar2019universal} study a class of clean-label poisoning attacks called $p$-tampering attacks, where the attacker can substitute each training example with a correctly labeled poison example with independent probability $p$, and its variants.  \cite{mahloujifar2019can,mahloujifar2019curse,etesami2020computational} consider a more powerful adversary that can attack training examples of its choosing (rather than chosen at random) and show that the attacker can increase the probability of failing on a particular test instance from any non-negligible probability $\Omega(1/\poly(m))$ to $\approx 1$ by replacing $\tilde{O}(\sqrt{m})$ training examples with other correctly labeled examples.  In contrast, in our setting the attacker cannot modify any of the existing training examples and can only add new ones.  In addition, we mainly focus on attacks with an unlimited budget.

Data poisoning without requiring the poisoned data to be clean has been studied extensively (see \cite{biggio2012poisoning,barreno2006can,papernot2016towards,steinhardt2017certified} for a non-exhaustive list). Robustness to data poisoning with a small portion of poison examples has been studied by~\cite{ma2019data,levine2020deep}. The concurrent work of \cite{gao2021learning} studies the instance-targeted poisoning risk (which is the probability mass of the attackable region in the classification task) by various attacker classes, which have a budget controlling the amount of training data points they can change. They mainly focus on the relationship between robust learnability and the budget.

There are other studied attacking methods, including perturbation over training examples~\citep{koh2017understanding}, perturbation over test examples~\citep{szegedy2013intriguing,goodfellow2014explaining,bubeck2019adversarial,cullina2018pac,montasser2019vc,montasser2020efficiently} and etc. Another different notion of robust learning is studied by~\cite{xu2012robustness}, where the data set is partitioned into several subsets and the goal is to ensure the losses of instances falling into the same subset are close. Another line of related work is covariate shift, where the training distribution is different from the test distribution (see~\cite{quionero2009dataset} for an extensive study).

\paragraph{Notation}
For any vectors $u,v$, we let $\norm{u}$ denote the $\ell_2$ norm of $u$ and $\theta(u,v)$ denote the angle of $u$ and $v$. We denote by $e_i\in \R^n$ the one-hot vector with the $i$-th entry being one and others being zeros. We let $\cB^n(c,r) = \{x|\norm{x-c} \leq r\}$ denote the the ball with radius $r$ centered at $c\in \R^n$ in the $n$-dimensional space and $\sph^n(c,r)$ denote the sphere of $\cB^n(c,r)$. We omit the supscript $n$ when it is clear from the context. For any $a,b\in \R$, denote $a\wedge b = \min (a,b)$ and $a\vee b = \max (a,b)$. 
We use $\ln$ to represent natural logarithms and $\log$ to represent logarithms with base $2$.
Given a data set $S=\{(x_1,y_1),\ldots,(x_m,y_m)\}$ with size $m$, for any hypothesis $h$, we let $\err_S(h) = \frac{1}{m}\sum_{i=1}^m \ind{h(x_i)\neq y_i}$ denote the empirical error of $h$ over $S$. For a data distribution $\cD$, we let $\err_\cD(h) = \EEs{(x,y)\sim \cD}{\ind{h(x)\neq y}}$ denote the error of $h$. For any $A\subseteq \cX$, we let $\cP_\cD(A) = \PPs{(x,y) \sim \cD}{x \in A}$ denote the probability mass of $A$. The subscript $\cD$ is omitted when it is clear from the context. For any data set $S$, we let $S_\cX=\{x|(x,y)\in S\}$ and for $(x,y)\sim \cD$, we let $\cD_{\cX}$ denote the marginal distribution of $x$. For a finite set of hypotheses $\cH$, we let $\Major(\cH)$ denote the majority vote of $\cH$ and for simplicity denote $\Major(\cH,x) = \Major(\cH)(x) \triangleq \ind{\sum_{h\in\cH}h(x)\geq \ceil{{\abs{H}}/{2}}}$.
\section{Problem setup and summary of results}
Let $\cX$ denote the instance space and $\cY = \{0,1\}$ denote the label space. Given a hypothesis class $\cH\subseteq \cY^\cX$, we study the realizable case where there exists a deterministic target function $h^*\in \cH$ such that the training set and the test set are realized by $h^*$. Let $D_{h^*} = \{(x,h^*(x))|x\in \cX\}$ denote the data space where every instance is labeled by $h^*$. A learning algorithm $\A$ is a map (possibly including randomization), from a labeled data set $S$ (an unordered multiset) of any size, to a hypothesis $h$, and for simplicity we denote by $\cA(S,x) = \cA(S)(x)$ the prediction of $\cA(S)$ at an instance $x$. An attacker $\Adv$ maps a target function $h^*$, a training data set $S_\trn$ and a specific test instance $x$ to a data set $\Adv(h^*,S_\trn,x)$ (a multiset) and injects $\Adv(h^*,S_\trn,x)$ into the training set with the intent of making the learning algorithm misclassify $x$. We call $\Adv$ a \emph{clean-label} attacker if $\Adv(h^*,S_\trn,x)$ is consistent with $h^*$. Then for any deterministic algorithm $\A$, we say a point $x\in\cX$ is attackable if there exists a clean-label attacker $\Adv$ such that
\begin{align*}
    \A(S_\trn\cup \Adv(h^*,S_\trn,x),x)\neq h^*(x)\,.
\end{align*}
To be clear, we are defining $S_\trn\cup \Adv(h^*,S_\trn,x)$ as an unordered multiset. 
Formally, we define clean-label attackable rate as follows.
\begin{dfn}[clean-label attackable rate]\label{dfn:rate}
  For a target function $h^*$, a training data set $S_\trn$ and a (possibly randomized) algorithm $\A$, for any distribution $\cD$ over $D_{h^*}$, the attackable rate by $\Adv$ for $(h^*, S_\trn, \A)$ is defined as
    \begin{align*}
      \atk_\cD(h^*, S_\trn,\A,\Adv) \triangleq \EEs{(x,y)\sim \cD,\cA}{\ind{\A(S_\trn\cup \Adv(h^*,S_\trn,x),x)\neq h^*(x)}}\,.  
    \end{align*}
    The clean-label attackable rate is defined by the supremum over all clean-label attackers, i.e., 
    \begin{align*}
      \atk_\cD(h^*, S_\trn,\A) \triangleq \sup_\Adv \atk_\cD(h^*, S_\trn,\A,\Adv)\,.  
    \end{align*}
\end{dfn}
Then we define our learning problem as follows.
\begin{dfn}[$(\epsilon,\delta)$-robust learnability]\label{dfn:learnablity}
  For any $\epsilon,\delta \in (0,1)$, the sample complexity of $(\epsilon,\delta)$-robust learning of $\cH$, denoted by $\cM_{\rbst}(\epsilon,\delta)$, is defined as the smallest $m\in \NN$ for which there exists an algorithm $\A$ such that for every target function $h^*\in\cH$ and data distribution over $D_{h^*}$, with probability at least $1-\delta$ over $S_\trn\sim \cD^m$,
  \[\atk_\cD(h^*, S_\trn,\A)\leq \epsilon\,.\]
  If no such $m$ exists, define $\cM_{\rbst}(\epsilon,\delta)=\infty$. We say that $\cH$ is $(\epsilon,\delta)$-robust learnable if $\forall \epsilon,\delta \in (0,1)$, $\cM_{\rbst}(\epsilon,\delta)$ is finite.
\end{dfn}
It is direct to see that the error of $\A(S_\trn)$ is the attackable rate by attacker $\Adv_0$ which injects an empty set to the training set, i.e., $\Adv_0(\cdot)=\emptyset$. Therefore, for any algorithm $\A$, we have 
\[
  \atk_\cD(h^*, S_\trn,\A)\geq \atk_\cD(h^*, S_\trn,\A,\Adv_0) = \err_\cD(\A(S_\trn))\,,
  \]
which indicates any hypothesis class that is not PAC learnable is not robust learnable. For any deterministic $\A$, let us define $\ATK(h^*, S_\trn,\A,\Adv) \triangleq \{x\in \cX| \A(S_\trn\cup \Adv(h^*,S_\trn,x),x)\neq h^*(x)\}$ the attackable region by $\Adv$. For any ERM learner and any clean-label attacker $\Adv$, we have $\ATK(h^*, S_\trn,\ERM,\Adv)\subseteq \DIS(\VS_{\cH,S_\trn})$,
where $\VS_{\cH,S_\trn}$ is the version space of $S_\trn$, i.e., the set of all hypotheses in $\cH$ that classify $S_\trn$ correctly and $\DIS(\VS_{\cH,S_\trn}) = \{x|\exists h,h'\in \VS_{\cH,S_\trn}, h(x)\neq h'(x)\}$ is the disagreement region of the version space. Therefore, we have
\[
  \inf_\A\atk_\cD(h^*, S_\trn,\A) \leq \cP_{\cD}(\DIS(\VS_{\cH,S_\trn}))\,.
\]
However, large $\cP_{\cD}(\DIS(\VS_{\cH,S_\trn}))$ does not always result in large clean-label attackable rate. Below is an example showing the gap between them.


\begin{eg}[Interval over ${[0,1]}$]\label{eg:interval}
  The hypothesis class $\cH=\{\ind{(a,b)}: 0\leq a\leq b\leq 1\}\cup \{\ind{[a,b]}: 0\leq a\leq b\leq 1\}$ contains all intervals on $[0,1]$. We consider the following two learners.
  \begin{itemize}
    \setlength{\itemsep}{0pt}
    \setlength{\parsep}{0pt}
    \setlength{\parskip}{0pt}
    \item $\A_1(S)$: return $\ind{\emptyset}$ (the empty interval) if there are no positive examples in $S$ and return the consistent positive closed interval with minimum length otherwise.
    \item $\A_2(S)$: return the consistent positive open interval with maximum length.
    \end{itemize}
  Both are ERM learners for $\cH$. For any $h^* \in \cH$,  let the data distribution $\cD$ be a distribution on $D_{h^*}$ and $S_\trn\sim \cD^m$ for any $m>0$, then $\A_1$'s attackable rate is $\atk_\cD(h^*, S_\trn,\A_1)=\err_\cD(\A_1(S_\trn)) = \tilde{O}(1/m)$. However, consider algorithm $\A_2$ with $h^* = \ind{\emptyset}$. For $S_\trn = \{(x_1,y_1),\ldots,(x_m,y_m)\}$, w.l.o.g. assume that $x_1\leq \ldots\leq x_m$ and let $x_0=0$, $x_{m+1}=1$ for notation simplicity. Then for any $x\in (x_i,x_{i+1})$, the attacker can add enough poison data points to intervals $\{(x_j,x_{j+1})\}_{j\neq i}$ to make $(x_i,x_{i+1})$ be the interval with the maximum length. Therefore, so long as $\cD$ has no point masses, $\A_2$'s attackable rate is $\atk_\cD(h^*, S_\trn,\A_2) = \cP_{\cD}(\DIS(\VS_{\cH,S_\trn}))=1$.
\end{eg}

\paragraph{Main results}
We summarize the main contributions of this work.
\begin{itemize}
  \item In Section~\ref{sec:examples}, we present results on robust learnability under assumptions based on some known structural complexity measures, e.g., VC dimension $d=1$, hollow star number $k_o=\infty$, etc. In addition, we show that all robust algorithms can achieve optimal dependence on $\epsilon$ in their PAC sample complexity.
  
  \item In Section~\ref{sec:linear}, we show that the $n$-dimensional linear hypothesis class with $n\geq 2$ is not $(\epsilon,\delta)$-robust learnable. Then we study the linear problem in the case where the data distribution $\cD$ has margin $\gamma>0$. We propose one algorithm with sample complexity $O({n}{({2}/{\gamma})^n}\log(2/\gamma))$ and show that the optimal sample complexity is $ e^{\Omega(n)}$. We propose another algorithm in $2$-dimensional space with sample complexity $O(\log({1}/{\gamma})\log\log({1}/{\gamma}))$. We also show that even in the case where $\gamma$ is large and the attacker is only allowed to inject one poison example into the training set, SVM requires at least $ e^{\Omega(n)}$ samples to achieve low attackable rate.
 
  \item In Section~\ref{sec:finite}, we show that for any hypothesis class $\cH$ with VC dimension $d$, when the attacker is restricted to inject at most $t$ poison examples, $\cH$ is robust learnable with sample complexity $\widetilde{O}(\frac{dt}{\epsilon})$. We also show that there exists a hypothesis class with VC dimension $d$ such that any algorithm requires $\Omega(\frac{dt}{\epsilon})$ samples to achieve $\epsilon$ attackable rate.

\end{itemize}

\section{Connections to some known complexity measures and PAC learning}\label{sec:examples}
In this section, we analyze the robust learnability of hypothesis classes defined by a variety of known structural complexity measures. For some of these, we show they have the good property that there exists an algorithm such that adding clean-label points can only change the predictions on misclassified test instances and thus, the algorithm can achieve $\atk(h^*,S_\trn,\cA)\leq \err(\A(S_\trn))$. For some other structure, we prove that there will be a large attackable rate for any consistent proper learner. We also show the connection to optimal PAC learning in Section~\ref{sec:opt-pac}.

\subsection{Connections to some known complexity measures}
\paragraph{Hypothesis classes with VC dimension $d=1$ are $(\epsilon,\delta)$-robust learnable.} 
First, w.l.o.g., assume that for every $x\neq x'\in\cX$, there exists $h\in\cH$ such that $h(x)\neq h(x')$ (otherwise, operate over the appropriate equivalence classes). Then we adopt the partial ordering $\leq^\cH_f$ for any $f\in\cH$ over $\cX$ proposed by~\cite{ben20152} defined as follows.
\begin{dfn}[partial ordering $\leq^\cH_f$]
  For any $f\in\cH$, 
  \begin{align*}
    \leq^\cH_f \eqdef \{(x,x')|\forall h\in\cH, h(x')\neq f(x')\Rightarrow h(x)\neq f(x)\}\,.
  \end{align*}
\end{dfn} 
By Lemma~5 of~\cite{ben20152}, $\leq^\cH_f$ for $d=1$ is a tree ordering. Due to this structural property of hypothesis classes with VC dimension $d=1$, there is an algorithm 
originally proposed by \cite{ben20152} (Algorithm~\ref{alg:vc1} in Appendix~\ref{appx:vcd1}) such that adding clean-label poison points can only narrow down the error region (the set of misclassified instances). 
Roughly, the algorithm finds a maximal (by $\leq^\cH_f$) point $x'$ in the data such that 
$h^*(x') \neq f(x')$, and outputs the classifier 
labeling all $x \leq^\cH_f x'$ as $1-f(x)$ 
and the rest as $f(x)$.
We show that this algorithm can robustly learn $\cH$ using $m$ samples, where
  \[
    m= \frac{2\ln(1/\delta)}{\epsilon}\,.
  \]
The detailed algorithm and proof are given in Appendix~\ref{appx:vcd1}.
\paragraph{Intersection-closed hypothesis classes are $(\epsilon,\delta)$-robust learnable.}
A hypothesis class $\cH$ is called intersection-closed if the collection of sets $\{\{x|h(x)=1\}|h\in \cH\}$ is closed under intersections, i.e., $\forall h,h'\in\cH$, the classifier $x\mapsto \ind{h(x)=h'(x)=1}$ is also contained in $\cH$. For intersection-closed hypothesis classes, there is a general learning rule, called the Closure algorithm~\citep{helmbold:90,auer:07}. For given data $S$, the algorithm outputs $\hat{h} = \ind{\{x|\forall h\in\VS_{\cH,S}, h(x)=1\}}$. Since $\hat{h}(x)=1$ implies $h^*(x)=1$, and since adding clean-label poison points will only increase the region being predicted as positive,  we have $\atk(h^*, S_\trn,\clsr) = \err(\clsr(S_\trn))$. Then by Theorem~5 of~\cite{hanneke2016refined}, for any intersection-closed hypothesis class $\cH$ with VC dimension $d$, the Closure algorithm can robustly learn $\cH$ using $m$ samples, where
\[
  m= \frac{1}{\epsilon}(21d+16\ln(3/\delta))\,.
  \]

\paragraph{Unions of intervals are $(\epsilon,\delta)$-robust learnable.}
Let $\cH_k = \cup_{k'\leq k}\{\ind{\cup_{i=1}^{k'} (a_i,b_i)}|0\leq a_i< b_i\leq 1,\forall i\in [k']\}$ denote the union of at most $k$ positive open intervals for any $k\geq 1$. This hypothesis class is a generalization of Example~\ref{eg:interval}. There is a robust learning rule: output $\ind{\emptyset}$ if there is no positive sample and otherwise, output the consistent union of minimum number of closed intervals, each of which has minimum length. More specifically, given input (poisoned) data $S= \{(x_1,y_1),\ldots,(x_{m'},y_{m'})\}$ with $x_1\leq x_2\leq \ldots\leq x_{m'}$ w.l.o.g., for notation simplicity, let $y_0 =y_{m'+1}=0$. Then the algorithm $\A$ outputs $\hat{h}=\ind{X}$ where $X = \cup \{[x_i,x_j]|\forall i\leq l \leq j\in[m'], y_{i-1}=y_{j+1}=0, y_i=y_l=y_j=1\}$. The algorithm $\cA$ can robustly learn union of intervals $\cH_k$ using $m$ samples, where
\[
  m= O\!\left(\frac{1}{\epsilon}(k\log(1/\epsilon)+\log(1/\delta))\right)\,.
  \]
The detailed proof can be found in Appendix~\ref{appx:unionintvls}.
\paragraph{Hypothesis classes with finite star number are $(\epsilon,\delta)$-robust learnable.}
The star number, proposed by~\cite{hanneke2015minimax}, can measure the disagreement region of the version space.
\begin{dfn}[star number]
  The star number $\mathfrak{s}$ is the largest integer $s$ such that there exist distinct points $x_1,\ldots,x_s \in \cX$ and classifiers $h_0,\ldots,h_s$ with the property that $\forall i\in [s]$, $\DIS(\{h_0,h_i\})\cap \{x_1,\ldots,x_s\} = \{x_i\}$; if no such largest integer exists, define $\mathfrak{s}=\infty$.
\end{dfn}
By Theorem~10 of~\cite{hanneke2016refined}, for any $\cH$ with star number $\mathfrak{s}$, with probability at least $1-\delta$ over $S_\trn\sim \cD^m$, $\cP(\DIS(\VS_{\cH,S_\trn}))\leq\epsilon$ where
\[
  m= \frac{1}{\epsilon}(21\mathfrak{s}+16\ln(3/\delta))\,.
\]
As aforementioned, $\ATK(h^*, S_\trn,\ERM,\Adv)\subseteq \DIS(\VS_{\cH,S_\trn})$ for any clean-label attacker $\Adv$ and thus any ERM can robustly learn $\cH$ using $m$ samples.
\paragraph{Hypothesis classes with infinite hollow star number are not consistently properly $(\epsilon,\delta)$-robust learnable.}
The hollow star number, proposed by~\cite{bousquet2020proper}, characterizes proper learnability. For any set $S= \{(x_1,y_1),\ldots,(x_k,y_k)\}$, $S^{i} =\{(x_1,y_1'),\ldots,(x_k,y_k')\}$ is said to be a neighbor of $S$ if $y_i'\neq y_i$ and $y_j'=y_j$ for all $j\neq i$, for any $i\in[k]$.

\begin{dfn}[hollow star number]
  The hollow star number $k_o$ is the largest integer $k$ such that there is a set $S = \{(x_1,y_1),\ldots,(x_k,y_k)\}$ (called the hollow star set) which is not realizable by $\cH$, however every set $S'$ which is a neighbor of $S$ is realizable by $\cH$. If no such largest $k$ exists, define $k_o=\infty$.
\end{dfn}
For any hypothesis class $\cH$ with hollow star number $k_o$, for any consistent proper learner $\A$, there exists a target function $h^*$ and a data distribution $\cD$ such that if $m\leq \floor{(k_o-1)/2}$, then the expected attackable rate
\[
  \EEs{S_\trn\sim \cD^m}{\atk(h^*,S_\trn,\A)}\geq 1/4\,,
\]
which implies $\PPs{S_\trn}{\atk(h^*,S_\trn,\A)>1/8}\geq 1/7$ by Markov's inequality. The construction of the target function, the data distribution and the attacker is as described below. 
Consider a hollow star set $S$ as above, with size $k$. By definition, there exists a set of hypotheses $\{h_1,\ldots,h_{k}\}\subseteq \cH$ such that each neighbor $S^{i}$ is realized by $h_i$ for any $i\in[k]$. Consider the target function being $h_{i^*}$ where $i^*$ is drawn uniformly at random from $[k]$ and the marginal data distribution is a uniform distribution over $\{x_i|i\in[k]\setminus \{i^*\}\}$.  For any $\floor{(k-1)/{2}}$ i.i.d. samples from the data distribution, there are at least $k-\floor{(k-1)/{2}}$ instances in $S$ not sampled. To attack an unseen instance $x_i$, the attacker adds all examples in $S$ except $x_i, x_{i^*}$. Then any algorithm cannot tell whether $h_{i^*}$ or $h_i$ is the true target and any consistent proper learner will misclassify $\{x_i,x_{i^*}\}$ with probability $1/2$. 

For hypothesis classes with $k_o=\infty$, there is a sequence of hollow star sets with increasing sizes $\{k_i\}_{i=1}^\infty$. Therefore, any hypothesis class with $k_o=\infty$ is not consistently properly robust learnable. The detailed proof is included in Appendix~\ref{appx:hollow}.

\subsection{All robust learners are optimal PAC learners}\label{sec:opt-pac}

There is an interesting connection between algorithms robust to clean-label poisoning attacks
and the classic literature on the sample complexity of PAC learning.  Specifically, we can show that 
\emph{any} learning algorithm that is robust to clean-label poisoning attacks necessarily obtains
the optimal dependence on $\epsilon$ in its 
PAC sample complexity: that is, $O(1/\epsilon)$.
This is a very strong property, and not many such learning algorithms are known, as most learning 
algorithms have at least an extra $\log(1/\epsilon)$ factor in their sample complexity (see e.g., 
\citealp*{haussler:94,auer:07,hanneke:thesis,hanneke:16a,hanneke2016refined,darnstadt:15,bousquet2020proper}).  
Thus, this property can be very informative regarding what types of learning algorithms one should consider 
when attempting to achieve robustness to clean-label poisoning attacks.  This claim is 
formalized in the following result.  
Its proof is presented in Appendix~\ref{appx:opt-pac}.

\begin{thm}
\label{thm:opt-pac}
Fix any hypothesis class $\cH$.
Let $\A$ be a deterministic learning algorithm that always outputs a deterministic hypothesis.
Suppose there exists a non-negative 
sequence $R(m) \to 0$ such that, 
$\forall m \in \mathbb{N}$, 
for every target function $h^* \in \cH$ 
and every distribution $\cD$ over $D_{h^*}$, 
for $S_\trn \sim \cD^m$, 
with probability at least $1/2$, 
$\atk_{\cD}(h^*,S_\trn,\A) \leq R(m)$.
Then there exists an ($R$-dependent) finite constant $c_{R}$ such that,
for every $\delta \in (0,1)$, 
$m \in \mathbb{N}$, $h^* \in \cH$, and every distribution $\cD$ over $D_{h^*}$, 
for $S_\trn \sim \cD^m$, 
with probability at least $1-\delta$, 
$\atk_{\cD}(h^*,S_\trn,\A) \leq \frac{c_{R}}{m}\log\frac{2}{\delta}$.
\end{thm}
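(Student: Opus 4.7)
My plan combines a monotonicity lemma for the attackable rate with a block-based confidence-boosting argument, and then sharpens the resulting bound to the optimal $O(\log(2/\delta)/m)$ rate.

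First, I would prove the monotonicity lemma: for any two clean-labeled data sets $S\subseteq S'$, $\atk_\cD(h^*, S', \cA)\le \atk_\cD(h^*, S, \cA)$. Given any clean-label attacker $\Adv'$ for $S'$, define the attacker $\Adv$ for $S$ by $\Adv(h^*, S, x)\eqdef \Adv'(h^*, S', x)\cup (S'\setminus S)$. Since $S'\setminus S$ is consistent with $h^*$, $\Adv$ is clean-label; since $S\cup\Adv(h^*,S,x)=S'\cup\Adv'(h^*,S',x)$ as multisets, $\cA$ produces identical predictions, so $\ATK(h^*,S',\cA,\Adv')\subseteq \ATK(h^*,S,\cA,\Adv)$. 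Taking the supremum over $\Adv'$ yields the lemma.

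Next, I would partition $S_\trn$ into $k=\lceil\log_2(2/\delta)\rceil$ disjoint blocks of size $\lfloor m/k\rfloor$, each distributed i.i.d.\ as $\cD^{\lfloor m/k\rfloor}$. By the hypothesis, each block independently satisfies $\atk\le R(\lfloor m/k\rfloor)$ with probability $\ge 1/2$, so the probability that every block fails is at most $2^{-k}\le\delta/2$. Monotonicity then lifts the bound from any one successful block to the full sample, yielding $\atk_\cD(h^*,S_\trn,\cA)\le R(\lfloor m/\log_2(2/\delta)\rfloor)$ with probability $\ge 1-\delta/2$.

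The main obstacle is sharpening this $o(1)$-rate bound to the claimed $O(\log(2/\delta)/m)$ rate, since under the bare hypothesis $R(m)\to 0$ the quantity $R(m/\log(1/\delta))$ need not be $O(1/m)$. I would instead take \emph{constant}-size blocks of size $n_0=n_0(R)$ with $R(n_0)\le 1/4$: this yields $\atk\le 1/4$ with doubly-exponential probability $1-2^{-\lfloor m/n_0\rfloor}$. Conditional on this event, every classifier reachable from the successful block by further clean additions, namely every element of $\{\cA(S^{(j)}\cup \tilde S):\tilde S\subseteq D_{h^*}\}$, agrees with $h^*$ outside a region of $\cD$-mass at most $1/4$, so the reachable family has an effective complexity depending only on $n_0$. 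A Hanneke-style majority-vote/compression argument on the remaining $m-n_0$ samples should then give the optimal $O(1/m)$ rate, with $n_0$ and the bootstrap constants absorbed into $c_R$. Rigorously formalizing the effective complexity of the reachable family, and verifying that clean-label monotonicity propagates through the compression bounds without accumulating an extra $\log m$ factor, is the principal technical hurdle.
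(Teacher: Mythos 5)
Your monotonicity lemma and the block-based confidence-boosting step match the paper's proof exactly (the paper proves $\ATK_S$ is non-increasing in $S$ and then uses disjoint blocks to get $\atk_\cD(h^*,S_\trn,\A)\le R(m/\lceil\log_2(1/\delta)\rceil)$ with probability $1-\delta$). You also correctly identify the central obstacle: this bound is not $O(1/m)$ unless $R$ decays that fast, which the hypothesis does not give.

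However, your proposed resolution of that obstacle has a genuine gap. The claim that ``every element of $\{\cA(S^{(j)}\cup \tilde S):\tilde S\subseteq D_{h^*}\}$ agrees with $h^*$ outside a region of $\cD$-mass at most $1/4$, so the reachable family has an effective complexity depending only on $n_0$'' does not hold: bounding the $\cD$-mass of each member's error region says nothing about the \emph{combinatorial} complexity (e.g.\ VC dimension) of the reachable family, which can be unbounded even when every member has error $\le 1/4$. Moreover, the theorem concerns a fixed given algorithm $\A$; you cannot swap it for a majority vote or a compression scheme without proving something about $\A$ itself, so ``Hanneke-style majority-vote/compression'' is not directly applicable. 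The doubly-exponential confidence from constant-size blocks, while true, does not help here either, since the issue is the rate (constant vs.\ $O(1/m)$), not the confidence.

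The paper closes the gap by a recursive conditioning (halving) argument, which is a genuinely different mechanism from anything you propose. Fix $\cD$ and induct on $m$: write $S_\trn = S_{\lfloor m/2\rfloor}\cup(\text{rest})$ and let $T$ be those points of the second half landing in $\ATK_{S_{\lfloor m/2\rfloor}}$. Conditioned on $S_{\lfloor m/2\rfloor}$ and $|T|$, the points of $T$ are i.i.d.\ from the \emph{conditional} distribution $\cD(\cdot \mid \ATK_{S_{\lfloor m/2\rfloor}}\times\cY)$, which is itself a distribution over $D_{h^*}$, so the $R(\cdot)$ guarantee (amplified via blocks) applies to $T$ and yields $\P_{(x,y)\sim\cD}(x\in\ATK_T \mid x\in\ATK_{S_{\lfloor m/2\rfloor}})\le R(|T|/\Log(3/\delta))$ with high probability. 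A Chernoff bound lower-bounds $|T|$ by roughly $\atk_\cD(h^*,S_{\lfloor m/2\rfloor},\A)\cdot m/4$, and monotonicity gives $\ATK_{S_\trn}\subseteq\ATK_{S_{\lfloor m/2\rfloor}}\cap\ATK_T$, hence $\atk_\cD(h^*,S_\trn,\A)\le \atk_\cD(h^*,S_{\lfloor m/2\rfloor},\A)\cdot R\bigl(\atk_\cD(h^*,S_{\lfloor m/2\rfloor},\A)\,m/(4\Log(3/\delta))\bigr)$. Plugging the inductive hypothesis into the right-hand side, and defining $R'(\alpha)=\frac{1}{\alpha}\sup_{1\le\alpha'\le\alpha}\alpha'R(\alpha')$ (so that $\alpha R'(\alpha)$ is nondecreasing and $R'(\alpha)\to 0$), one can choose $c_R$ large enough that the multiplicative factor $R'(3c_R/4)\le 1/9$ closes the induction. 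This conditional-and-halve scheme is what turns the mass-$1/4$ bound into a geometrically shrinking one and produces the $O(\log(1/\delta)/m)$ rate; it is the ingredient missing from your proposal.
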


An immediate implication of this result 
(together with Markov's inequality) 
is that any deterministic $\A$ outputting 
deterministic predictors, if 
$\E_{S_\trn \sim \cD^m}[ \atk_{\cD}(h^*,S_\trn,\A) ] \leq R(m)/2 \to 0$
for all $h^* \in \cH$ and $\cD$ on $D_{h^*}$, 
then for any $h^* \in \cH$, 
$\cD$ on $D_{h^*}$, $\delta \in (0,1)$, 
$\err_{\cD}(\A(S_\trn)) \leq \frac{c_{R}}{m}\log\frac{2}{\delta}$ 
with probability at least $1-\delta$.
As mentioned, this is a strong 
requirement of the learning algorithm $\A$; 
for instance, for many classes $\cH$, many 
ERM learning rules would have an extra 
$\log(m)$ factor \citep*{hanneke2016refined}.
This also establishes a further connection 
to the hollow star number, which in some cases strengthens the result mentioned above 
(and detailed in Appendix~\ref{appx:examples}).
Specifically, \citet*{bousquet2020proper} have 
shown that when $k_o = \infty$, 
for any fixed $\delta$ sufficiently small, 
any proper learning algorithm has, 
for some infinite sequence of $m$ values, 
that $\exists h^* \in \cH$ and $\cD$ on $D_{h^*}$ for which, 
with probability greater than $\delta$, 
$\err_{\cD}(\A(S_\trn)) \geq \frac{c \log(m)}{m}$
for a numerical constant $c$.
Together with Theorem~\ref{thm:opt-pac}, 
this implies that for such classes, any 
deterministic proper learning algorithm cannot have a 
sequence $R(m) \to 0$ as in the above theorem.
Formally, using the fact that $\atk_{\cD}(h^*,S,\A)$ 
is non-increasing in $S$ (see the proof of 
Theorem~\ref{thm:opt-pac}), 
we arrive at the following corollary, 
which removes the ``consistency'' 
requirement from the result for  
classes with $k_o=\infty$ stated above, 
but adds a requirement of being deterministic.

\begin{crl}
\label{cor:proper-not-robust}
If $k_o = \infty$, then 
for any deterministic \emph{proper} learning algorithm $\A$ 
that always outputs a deterministic hypothesis,
there exists a constant $c > 0$ such that, 
for every $m \in \nats$, 
$\exists h^* \in \cH$ and distribution 
$\cD$ on $D_{h^*}$ 
such that 
$\E_{S_\trn \sim \cD^m}[ \atk_{\cD}(h^*,S_\trn,\A) ] > c$.
\end{crl}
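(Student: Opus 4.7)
The plan is to argue by contradiction, combining Theorem~\ref{thm:opt-pac}, the monotonicity of the attackable rate in the training set, and the $\Omega(\log(m)/m)$ proper PAC lower bound for classes with $k_o=\infty$ from \cite{bousquet2020proper} recalled just above the corollary statement.

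Suppose toward contradiction that for every $c>0$ there exists $m_c\in\nats$ with $\sup_{h^*,\cD}\E_{S_\trn\sim\cD^{m_c}}[\atk_\cD(h^*,S_\trn,\A)]\leq c$. The first step is to promote this to a uniform decay in $m$: by monotonicity, writing an $m$-sample $S_\trn$ as $S_{m_c}\cup S'$ with $S'\sim\cD^{m-m_c}$ independent gives $\atk_\cD(h^*,S_{m_c}\cup S',\A)\leq \atk_\cD(h^*,S_{m_c},\A)$ pointwise, so $\sup_{h^*,\cD}\E[\atk_\cD]\leq c$ for every $m\geq m_c$. Applying this with $c_n=1/n$ and a strictly increasing sequence $(m_n)$, I would set $R(m)=2/n$ on $\{m_n,\ldots,m_{n+1}-1\}$ and $R(m)=2$ for $m<m_1$, so that $R(m)\to 0$ and $\sup_{h^*,\cD}\E[\atk_\cD]\leq R(m)/2$ for all $m$. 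Markov's inequality then yields $\P(\atk_\cD(h^*,S_\trn,\A)\leq R(m))\geq 1/2$ uniformly in $(h^*,\cD)$, meeting the hypothesis of Theorem~\ref{thm:opt-pac}.

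Theorem~\ref{thm:opt-pac} therefore supplies a constant $c_R$ such that, for every $\delta\in(0,1)$, every $m\in\nats$, and every $(h^*,\cD)$, with probability at least $1-\delta$ we have $\atk_\cD(h^*,S_\trn,\A)\leq (c_R/m)\log(2/\delta)$. Using $\err_\cD(\A(S_\trn))\leq \atk_\cD(h^*,S_\trn,\A)$ via the empty-injection attacker $\Adv_0$, this upgrades to a uniform $O(\log(1/\delta)/m)$ PAC bound on the $0$-$1$ error of the deterministic proper learner $\A$. To close the contradiction, I would fix $\delta$ equal to the small numerical constant from the lower bound in \cite{bousquet2020proper}: since $k_o=\infty$, there exist $c_0>0$ and an infinite sequence of $m$ for which some $(h^*,\cD)$ forces $\P(\err_\cD(\A(S_\trn))\geq c_0\log(m)/m)>\delta$. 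At this same $\delta$ the two estimates together require $c_0\log(m)\leq c_R\log(2/\delta)$ along the sequence, which fails for large $m$.

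The main step to treat carefully is the monotonicity claim used in the first paragraph. It follows by observing that any clean-label attacker $\Adv'$ acting on the enlarged training set $S_{m_c}\cup S'$ can be simulated by an attacker acting on $S_{m_c}$ that injects $S'\cup \Adv'(h^*,S_{m_c}\cup S',x)$; this remains clean-label because $S'$ is already labeled by $h^*$, and produces an identical combined training set and hence the same output of $\A$. This yields the inclusion of attack regions and the pointwise inequality on $\atk_\cD$ used throughout, completing the argument.
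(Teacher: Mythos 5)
Your proof is correct and follows essentially the same argument the paper sketches in the text surrounding the corollary: use monotonicity of $\atk_{\cD}(h^*,S,\A)$ in $S$ to promote the (assumed-for-contradiction) failure into a sequence $R(m)\to 0$ satisfying the hypothesis of Theorem~\ref{thm:opt-pac}, apply that theorem to get an $O(\log(1/\delta)/m)$ high-probability bound on $\atk$ (hence on $\err$ via the empty attacker), and contradict the $\Omega(\log(m)/m)$ proper-learning lower bound of \cite{bousquet2020proper} for $k_o=\infty$. You have just made explicit the construction of $R$ and the simulation argument behind monotonicity, both of which the paper leaves implicit.
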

\section{Linear hypothesis class}\label{sec:linear}
In this section, we first show that $n$-dimensional linear classifiers $\cH = \{\ind{\inner{w}{x}+b\geq 0}|(w,b)\in \R^{n+1}\}$ with $n\geq 2$ are not robust learnable. Then we study a restrictive case where the support of the data distribution has a positive margin to the boundary. We introduce two robust learners and prove a robust learning sample complexity lower bound. In addition, we also show the vulnerability of SVM. 

\subsection{Linear hypothesis class is not robust learnable}
In this section, we show that the class of linear hypotheses is not robust learnable.
\begin{thm}\label{thm:lblinear}
    For $n\geq 2$, the class of linear hypotheses is not robust learnable.
\end{thm}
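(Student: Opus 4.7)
My plan is to prove the theorem by constructing, for any deterministic algorithm $\cA$ outputting deterministic hypotheses and any sample size $m$, a target $h^* \in \cH$ and distribution $\cD$ on $D_{h^*}$ such that $\Pr_{S_\trn \sim \cD^m}[\atk_{\cD}(h^*, S_\trn, \cA) > \epsilon_0] > \delta_0$ for absolute constants $\epsilon_0, \delta_0$. Since halfspaces in $\R^2$ embed into halfspaces in $\R^n$ for $n \geq 2$ via zero-padding of the weight vector, it suffices to work in the plane. Randomized algorithms are reduced to the deterministic case by conditioning on the internal coins.

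The construction exploits the zero-margin structure of halfspaces. Concretely, I would place the support of $\cD$ on or arbitrarily close to the decision boundary of the target, so that many distinct halfspaces remain consistent with any finite realization and these disagree significantly on $\cD$-typical test points. The core is a two-target indistinguishability argument: pair $h^*$ with an alternative $h^{**} \in \cH$ that disagrees with $h^*$ on a test point $x$ but agrees with $h^*$ on most of $\cD$'s support, and construct a clean-label augmentation $T$ so that the poisoned dataset $S_\trn \cup T$ is simultaneously realizable by $h^*$ and $h^{**}$. The deterministic output $\cA(S_\trn \cup T, x)$ is then a single value agreeing with exactly one of $h^*(x), h^{**}(x)$, so averaging over which of the two is the true target forces the attack to succeed with probability $1/2$ on $x$.

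Because a single alternative $h^{**}$ yields only $O(1/m)$ attackable mass (halfspaces are PAC-learnable, so $\Pr[S_\trn \subseteq \text{agreement region}]$ is bounded below only when the disagreement region has mass $O(1/m)$), I would boost this to a constant via a layered construction: a countable family $\{h^{**}_k\}_{k\geq 1}$ of alternative targets with disagreement regions $B_k$ of geometrically decreasing mass $\Theta(2^{-k})$ embedded into one distribution $\cD$. For any $m$ there is a level $k = k(m)$ at which $\Pr[S_\trn \cap B_k = \emptyset]$ is bounded below and the per-level argument yields attackable rate $\Omega(2^{-k})$; combining across levels gives a constant uniform in $m$.

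The main obstacle is constructing an explicit attacker augmentation $T$ that genuinely realizes both $h^*$ and $h^{**}_k$ simultaneously and forces the algorithm's prediction at $x$ into the ambiguous bit. The trivial $T = \emptyset$ argument only lower bounds the attackable rate by the error rate of $\cA$, which for PAC-learnable halfspaces is $O(1/m)$ and is therefore insufficient to rule out robust learnability. A nontrivial $T$ must place clean-label points carefully --- likely by concentrating attacker mass near a common pivot where the two candidate boundaries meet, so that $S_\trn \cup T$ supports both labelings for the algorithm's purposes --- and checking that this forcing works against an arbitrary deterministic $\cA$ is the technical heart of the argument.
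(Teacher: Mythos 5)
Your two-target indistinguishability framing is correct, but the layered construction you propose to boost the attackable rate to a constant cannot work, and this is the genuine gap. With discrete alternatives $h^{**}_k$ whose disagreement regions $B_k$ have $\cD$-mass $p_k = \Theta(2^{-k})$, level $k$ is exploitable only on the event $S_\trn \cap B_k = \emptyset$, which has probability $(1-p_k)^m$; this is bounded away from $0$ only once $p_k = O(1/m)$, i.e.\ $k \geq c \log m$ for a constant $c$, at which point the attackable mass contributed by level $k$ is already $O(1/m)$. Summing over all exploitable levels gives $\sum_{k \geq c\log m} 2^{-k} = O(1/m)$, and the earlier levels contribute exponentially little since $(1-p_k)^m$ is tiny for $k < c\log m$. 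So the ladder does not produce a constant uniform in $m$; it reproduces exactly the $O(1/m)$ barrier you were trying to circumvent. Any scheme in which the alternative hypotheses are fixed in advance and disagree with $h^*$ on a set of positive $\cD$-mass runs into this wall.

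The paper escapes the barrier by a different mechanism. For $n=3$ it makes the support of $\cD$ a measure-zero curve --- the circle $C_{w^*}$ where the target's boundary plane meets the unit sphere --- and chooses the alternative hypothesis to \emph{depend on the test point} $x_0$: the fake boundary has normal $w' = 2\inner{x_0}{w^*}x_0 - w^*$, and its circle $C_{w'}$ is tangent to $C_{w^*}$, meeting it only at $x_0$. Because of this tangency, $h^*$ and the alternative agree on $C_{w^*}\setminus\{x_0\}$, so the disagreement region has zero $\cD$-mass and $S_\trn$ misses it almost surely for \emph{every} $m$. The attacker injects the reflection of $S_\trn$ onto $C_{w'}$ labeled negative, which is clean under both $h^*$ and the alternative, and the reflection symmetry of the construction forces any algorithm to err at $x_0$ with probability $1/2$, giving expected attackable rate approaching $1/2$ uniformly in $m$. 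This is the idea your remark about a ``common pivot where the two candidate boundaries meet'' gestures at, but it is specifically the tangency killing the $\cD$-mass of the disagreement region, not a mere pivot, that breaks the $O(1/m)$ barrier. (The $n=2$ case, where no measure-zero curve lives inside a halfspace boundary, is handled in the paper by an $m$-dependent distribution on $2m$ delicately placed points with exponentially spaced turning angles and a reflection attack --- a flat distribution, not a geometric ladder of masses --- so reducing to the plane and zero-padding is actually the harder route; working directly in $\R^3$ is substantially cleaner.)
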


\paragraph{Proof sketch}
We present the proof idea in the case of $n=3$ here and for simplicity, we allow the decision boundary to be either positive or negative. The construction details of limiting the boundary to be positive and the construction for $n=2$ are deferred to Appendix~\ref{appx:lb_linear}.

Consider the case where $\cX  = \sph^3(\bZero,1)$ is the sphere of the $3$-dimensional unit ball centered at the origin and the target function is uniformly randomly chosen from all linear classifiers with the decision boundary at distance $1/2$ from the origin, and the boundary labeled different from $\bZero$, i.e., $h^*\sim \Unif(\cH^*)$, where $\cH^* = \{\ind{\inner{w}{x}-\frac{1}{2}\geq 0}|\norm{w}=1\}\cup \{1-\ind{\inner{w}{x}-\frac{1}{2}\geq 0}|\norm{w}=1\}$. W.l.o.g., suppose $h^* = \ind{\inner{w^*}{x}-\frac{1}{2}\geq 0}$. The data distribution is the uniform distribution over the intersection of the decision boundary and the sphere, i.e., $\cD_\cX = \Unif(C_{w^*})$, where $C_{w^*} = \{x|\inner{w^*}{x}-\frac{1}{2}= 0\}\cap \sph^3(\bZero,1)$. Then all training data come from the circle $C_{w^*}$ and are labeled positive. 

Given training data $S_\trn\sim \cD^m$ and a test point $x_0 \in C_{w^*}$ (not in $S_{\trn,\cX}$), the attacker constructs a fake circle $C_{w'}$ tangent to $C_{w^*}$ at point $x_0$, i.e., $C_{w'}= \{x|\inner{w'}{x}-\frac{1}{2}= 0\}\cap \sph^3(\bZero,1)$ where $w' = 2\inner{x_0}{w^*}x_0 -w^*$. Then the attacker adds $m$ i.i.d. samples from the uniform distribution over $C_{w'}$ and labels them negative. Any algorithm cannot tell which circle is the true circle and which one of $\{\ind{\inner{w^*}{x}-\frac{1}{2}\geq 0},1-\ind{\inner{w'}{x}-\frac{1}{2}\geq 0} \}$ is the true target. Hence, any algorithm will misclassify $x_0$ with probability $1/2$.

\subsection{Linear hypothesis class is robust learnable under distribution with margin}\label{subsec:linearmrg}
In this section, we discuss linear classifiers in the case where the distribution has a positive margin. Specifically, considering the instance space $\cX\subseteq\cB^n(\bZero,1)$, we limit the data distribution $\cD$ to satisfy that $\forall (x,y)\in \supp(\cD), (2y-1)(\inner{w^*}{x}+b^*)\geq \gamma\norm{w^*}/2$ for some margin $\gamma\in(0,2]$ and target function $h^*(x) = \ind{\inner{w^*}{x}+b^*\geq 0}$.

\vspace{-3pt}
\subsubsection{A learner for arbitrary \texorpdfstring{$n>0$}{n>0}}

The learner $\cA$ fixes a $\gamma/2$-covering $V$ of $\cX$, i.e., $\forall x\in \cX,\exists v\in V, x\in \cB(v,\gamma/2)$, where $\abs{V}\leq (2/\gamma)^n$. It is easy to check that such a $V$ always exists. Then given input data $S$, the learner outputs a classifier: for $x\in\cB(v,\gamma/2)$, if $\exists(x',y')\in S$ s.t. $x'\in \cB(v,\gamma/2)$, predicting $h(x) = y'$; otherwise, predicting randomly. Note that $\Adv$ does not necessarily need to be restricted to such margin.

\begin{thm}
  The algorithm can robustly learn linear classifiers with margin $\gamma$ using $m$ samples where
  \[
    m=\frac{(2/\gamma)^n}{\epsilon}\left(n\ln \frac{2}{\gamma} +\ln \frac{1}{\delta}\right) \,.
  \]
\end{thm}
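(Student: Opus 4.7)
My plan is to combine a standard covering-plus-concentration argument with the geometric consequence of the margin condition. The core structural observation is the following: if two points $x_1,x_2$ have opposite $h^*$-labels and both satisfy the margin condition, then
\[
\inner{w^*}{x_1 - x_2} \geq \gamma\norm{w^*},
\]
so by Cauchy--Schwarz $\norm{x_1 - x_2} \geq \gamma$. Since the ball $\cB(v,\gamma/2)$ has diameter at most $\gamma$, any two points in the same cover ball that both enjoy margin must share the same $h^*$-label (the equality case $\norm{x_1 - x_2} = \gamma$ is a measure-zero condition under the margin distribution and can be absorbed into the failure event). This is the property that makes the cover-based prediction rule correct.

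For the sample count, I fix the $\gamma/2$-covering $V$ of $\cX$ with $\abs{V} \leq (2/\gamma)^n$ and partition the attackable-rate analysis across the cover. Call $v \in V$ \emph{heavy} if $\cP_\cD(\cB(v,\gamma/2)) \geq \epsilon/\abs{V}$, and \emph{light} otherwise. The union of light balls has $\cD$-mass at most $\abs{V}\cdot \epsilon/\abs{V} = \epsilon$, and I am willing to concede those test points to the adversary. For each heavy ball, the probability that $m$ i.i.d.\ samples all miss it is at most $(1 - \epsilon/\abs{V})^m \leq e^{-m\epsilon/\abs{V}}$; a union bound over $\abs{V}$ balls shows that
\[
m \;\geq\; \frac{\abs{V}}{\epsilon}\!\left(\ln\abs{V} + \ln\frac{1}{\delta}\right)
\]
suffices for every heavy ball to contain at least one legitimate training example of $S_\trn$ with probability at least $1-\delta$. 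Plugging in $\abs{V} \leq (2/\gamma)^n$ and $\ln\abs{V} \leq n\ln(2/\gamma)$ recovers the stated bound.

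To conclude the attackable-rate estimate, I combine the two ingredients: conditioned on the good event from the concentration step, a test point $x \sim \cD$ either lands in a light ball (contributing at most $\epsilon$ in total) or in a heavy ball $\cB(v,\gamma/2)$ that contains some legitimate $(x',y') \in S_\trn$. In the latter case both $x$ and $x'$ satisfy the margin condition and lie in a ball of diameter $\gamma$, so by the structural observation $y' = h^*(x')=h^*(x)$, and the learner's prediction $h(x) = y'$ is correct.

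The step I expect to be the main obstacle is controlling the adversary's freedom to place clean-label poison inside $\cB(v,\gamma/2)$ with a label different from that of the legitimate examples there (the margin argument above applies only to points that themselves satisfy the margin). The key remaining point is therefore to argue that the prediction rule is insensitive to such poison --- for example by noting that any opposite-labeled clean point $x''\in\cB(v,\gamma/2)$ has $\norm{x-x''}\geq\gamma/2$ and hence can be separated from the legitimate sample by the ball structure, so the learner can be implemented to defer to the legitimate label --- and to bookkeeping a constant-factor slack in $\gamma$ so that the $(2/\gamma)^n$ covering bound is preserved.
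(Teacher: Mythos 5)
Your main argument is the same as the paper's: fix a $\gamma/2$-cover $V$ of $\cX$, partition cover balls into heavy (mass $\geq \epsilon/\abs{V}$) and light ones, observe that the light balls contribute at most $\epsilon$ in total, and use a union bound over the $\abs{V}$ heavy balls to show that $m \geq \frac{\abs{V}}{\epsilon}\ln\frac{\abs{V}}{\delta}$ samples suffice for every heavy ball to contain a clean training example. The sample-count computation and the Cauchy--Schwarz consequence of the margin condition both track the paper's reasoning exactly.

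Where you depart from the paper --- and this is to your credit --- is that you explicitly flag that the margin constraint only binds $\supp(\cD)$ and not the poison, so the adversary may place a correctly-labeled $x''$ inside a heavy ball $\cB(v,\gamma/2)$ with $h^*(x'')$ opposite to the label of the legitimate training point there, since a radius-$\gamma/2$ ball can straddle the decision boundary even while containing a margin-$\gamma/2$ point. The paper's proof only asserts that one label has zero $\cD$-mass in each ball, which is a statement about $\supp(\cD)$ and does not preclude such an $x''$, and the lookup rule ``predict the label of some training example in the ball'' is ambiguous once both labels appear; the paper never addresses this. However, your proposed resolution is not fully worked out. From $\norm{x - x''} > \gamma/2$ the clean way to close the gap is to shrink the cover radius below $\gamma/4$, so a cover ball cannot simultaneously contain a support point and an opposite-labeled clean point and the rule becomes unambiguous by geometry alone; this does change the cover cardinality to roughly $(4/\gamma)^n$, so the $(2/\gamma)^n$ factor in the statement is \emph{not} literally preserved as you claim. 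The phrase ``the learner can be implemented to defer to the legitimate label'' is also not an available move: the learner cannot distinguish legitimate from poisoned examples, so the fix has to come from making conflicts geometrically impossible, not from an oracle for legitimacy.
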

\begin{proof}
First, for every $v\in V$, at least one of $y \in \{0,1\}$ has $\cD( x \in \cB(v,\gamma/2) : h^*(x)=y ) = 0$. Then with probability at least $1-\abs{V}(1-\epsilon/\abs{V})^m$ over $S_\trn\sim \cD^m$, for every ball $\cB(v,\gamma/2)$ with probability mass at least $\epsilon/\abs{V}$, there exists $(x',y')\in S_\trn$ such that $x'\in \cB(v,\gamma/2)$. Let $m={\abs{V}\ln(\abs{V}/\delta)}/{\epsilon}$, we have with probability at least $1-\delta$, $\atk(h^*, S_\trn, \A)\leq \epsilon$.
\end{proof}
\vspace{-20pt}
\subsubsection{A learner for \texorpdfstring{$n=2$}{n=2}}
In the $2$-dimensional case, the hypothesis class can be represented as $\cH = \{h_{\beta,b}|\beta\in[0,2\pi), b\in[-2,2]\}$ where $h_{\beta,b} = \ind{(\cos \beta,\sin \beta)\cdot x + b \geq 0}$. When there is no ambiguity, we use $(\beta,b)$ to represent $h_{\beta,b}$. The target is $h^* =h_{\beta^*,b^*}$. Then we propose a robust algorithm based on binary-search for the target direction $\beta^*$ as shown in Algorithm~\ref{alg:linear2d}.
\begin{algorithm}[t]\caption{Robust algorithm for $2$-dimensional linear classifiers}\label{alg:linear2d}
  \begin{algorithmic}[1]
    \STATE \textbf{input}: data $S$
    \STATE \textbf{initialize} $l\la 0$, $h\la 2\pi$ and $\beta\la \frac{l+h}{2}$
    \STATE \textbf{if} $\exists b\in [-2,2]$ s.t. $(0,b)$ is consistent \textbf{then} output $(0,b)$
    \WHILE{$\nexists b\in [-2,2]$ s.t. $(\beta,b)$ is consistent with $S$}
    \STATE \textbf{if} $\exists \beta\in (l,\frac{l+h}{2})$ s.t. $\exists b\in [-2,2]$, $(\beta,b)$ is consistent with $S$ \textbf{then} let $h\la \frac{l+h}{2}$, $\beta \la \frac{l+h}{2}$
    \STATE \textbf{else} let $l\la \frac{l+h}{2}$, $\beta \la \frac{l+h}{2}$
    \ENDWHILE
    \STATE \textbf{return} $(\beta,b)$ with any consistent $b$
  \end{algorithmic}
\end{algorithm}

\begin{thm}\label{thm:lin2d}
  For any data distribution $\cD$, let $f(\epsilon'')=\max \{s\geq 0|\cP(\{x|(\cos \beta^*,\sin \beta^*)\cdot x + b^*\in [-s,0]\})\leq \epsilon'', \cP(\{x|(\cos \beta^*,\sin \beta^*)\cdot x + b^*\in [0,s]\})\leq \epsilon''\}$ for $\epsilon''\in[0,1]$ denote the maximum distance between the boundary and two parallel lines (on positive side and negative side respectively) such that the probability between the boundary and either line is no greater than $\epsilon''$. With probability at least $1-\delta$ over $S_\trn\sim \cD^m$, Algorithm~\ref{alg:linear2d} achieves
  \[
    \atk(h^*, S_\trn, \cA)  \leq \log(\frac{32}{f(\epsilon'')\wedge 2})2\epsilon' + 2\epsilon'' \,,
  \]
  for any $\epsilon''\in[0,1]$ using $m$ samples where
  \[
  m= \frac{24}{\epsilon'} \log \frac{13}{\epsilon'}+\frac{4}{\epsilon'} \log \frac{2}{\delta}\,.
  \]
\end{thm}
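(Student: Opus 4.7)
The plan is to decompose the attackable rate into two contributions: an $O(N\epsilon')$ term from at most $N:=\lceil\log(32/(f(\epsilon'')\wedge 2))\rceil$ binary-search branching decisions, and a $2\epsilon''$ terminal contribution from stopping once the working interval becomes smaller than the angular resolution associated with $f(\epsilon'')$. The structural key is a \emph{monotonicity} property of the consistent set: for any clean-label attacker $\Adv$, the poisoned sample $S_\trn\cup\Adv$ is still labeled by $h^*$, so $C(S_\trn\cup\Adv):=\{\beta\in[0,2\pi):\exists b\in[-2,2],\,(\beta,b)\text{ consistent with }S_\trn\cup\Adv\}\subseteq C(S_\trn)$. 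Thus the attacker can only prune directions from $C$, never create new ones. Since Algorithm~\ref{alg:linear2d} outputs some midpoint $\hat\beta\in C(S_\trn\cup\Adv)$, the task is to show that $\hat\beta$ is close enough to $\beta^*$ for its halfplane to disagree with $h^*$'s on a small-$\cD$-mass set.

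I would first establish the stopping criterion: once the working interval $[l,h]$ has width at most $\Theta(f(\epsilon''))$, every $\hat\beta\in[l,h]\cap C(S_\trn\cup\Adv)$ produces a line whose symmetric difference with the true boundary, restricted to $\cB^2(\bZero,1)$, lies in the slab $\{x:|(\cos\beta^*,\sin\beta^*)\cdot x+b^*|\le f(\epsilon'')\}$; by the definition of $f(\epsilon'')$, this slab has $\cD$-mass at most $2\epsilon''$. Because the initial width is $2\pi$ and each iteration halves it, this happens within $N$ halvings, which both isolates the $2\epsilon''$ term and caps the number of branching levels that can still contribute error.

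Next I would define a good event $E$ on $S_\trn$ under which every branching decision at levels $i\le N$ is trustworthy against \emph{every} clean-label attacker simultaneously. The intuition: when the algorithm decides whether the left half $(l,(l+h)/2)$ contains a consistent direction, the ``wrong'' decision corresponds to a disagreement wedge $W\subseteq\cX$ --- the set of points on which some $(\beta,b)$ with $\beta$ on the wrong side could disagree with $h^*$. Whenever $\cP(W)\ge\epsilon'$, I would ask $S_\trn$ to contain at least one point of $W$; by monotonicity this point survives in $S_\trn\cup\Adv$ and rules out all offending $(\beta,b)$'s, forcing the correct branch. The relevant $W$'s are intersections of halfspaces, so a uniform-convergence argument over the halfspace class in $\R^2$ (VC dimension $3$) yields $\P(E)\ge 1-\delta$ provided $m\ge\frac{24}{\epsilon'}\log\frac{13}{\epsilon'}+\frac{4}{\epsilon'}\log\frac{2}{\delta}$, with no dependence on $f(\epsilon'')$ -- the constants $24=8\cdot 3$ and $13$ are exactly the ones coming from the standard VC bound for dimension $3$.

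Under $E$, at each of the $N$ branching levels the attacker can alter the outcome only on a $\cD$-subset of mass at most $2\epsilon'$ (two one-sided disagreement wedges per level), so summing over levels and adding the $2\epsilon''$ terminal slab yields $\atk(h^*,S_\trn,\cA)\le 2\epsilon' N+2\epsilon''=\log(32/(f(\epsilon'')\wedge 2))\cdot 2\epsilon'+2\epsilon''$, as claimed. The hardest step will be the reduction in the third paragraph: the naive ``one witness per attacker'' formulation ranges over an infinite adversary family, so one must identify a \emph{finite} halfspace-like witness family whose non-emptiness in $S_\trn$ jointly certifies the correctness of every binary-search branching decision against every clean-label adversary -- only then does Chernoff plus VC deliver a sample complexity independent of $f(\epsilon'')$ and with the precise constants appearing in the statement.
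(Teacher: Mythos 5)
Your overall decomposition (attackable rate $\le N\cdot 2\epsilon' + 2\epsilon''$ with $N = \log(32/(f(\epsilon'')\wedge 2))$, sample complexity from uniform convergence over $2$-dimensional halfspaces) matches the paper's, as does the monotonicity observation $C(S_\trn\cup\Adv)\subseteq C(S_\trn)$. But the centerpiece of your argument --- a probabilistic ``good event'' $E$ on $S_\trn$ certifying that every branching decision is ``trustworthy'' against all attackers --- addresses a problem that does not exist, and the way you propose to certify it has a real hole. A single witness point $x\in W\cap S_{\trn,\cX}$ only rules out those $(\beta,b)$ that misclassify $x$; it does not simultaneously rule out every classifier with $\beta$ in the ``wrong'' half. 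You flag this yourself as ``the hardest step'' and leave it open, but as stated the witness argument does not close the gap, and any attempt to patch it by union-bounding over the $N$ levels would inject a spurious $\log N$ (hence $\log\log(1/\gamma)$-type) factor into the sample complexity that the theorem does not have.

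The paper's resolution is to show that branching correctness needs no good event at all: it is a \emph{deterministic} property of the algorithm. If the midpoint $(l+h)/2$ is not consistent with the (possibly poisoned) data $S$ and there were consistent $(\beta_1,b_1)$ and $(\beta_2,b_2)$ with $\beta_1$ and $\beta_2$ on opposite sides, then taking a positive conic combination $\alpha_1(\cos\beta_1,\sin\beta_1,b_1) + \alpha_2(\cos\beta_2,\sin\beta_2,b_2)$ with weights chosen to land exactly at the midpoint direction yields a consistent classifier $((l+h)/2, b)$ for some $b\in[-2,2]$ --- a contradiction. Since $\beta^*$ is always consistent with any clean-label poisoned set, the search always descends toward $\beta^*$; the attacker can only force the search to go deeper, never to deviate. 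Once this is established, your monotonicity lemma, the per-level $2\epsilon'$ error bound from uniform convergence (your paragraph that already has VC dimension $3$ and the constants $24$ and $13$), and your terminal $2\epsilon''$ slab argument assemble exactly as you intended, with a single $1-\delta$ uniform-convergence event and no per-level union bound.
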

\paragraph{Proof sketch} First, by uniform convergence bound in PAC learning~\citep{blumer1989learnability}, when $m \geq \frac{24}{\epsilon'} \log \frac{13}{\epsilon'}+\frac{4}{\epsilon'} \log \frac{2}{\delta}$, every linear classifier consistent with $S_\trn$ has error no greater than $\epsilon'$. For any fixed $\beta$, the probability mass of union of error region of all $(\beta,b)$ consistent with the training data is bounded by $2\epsilon'$. Then given a target $(\beta^*,b^*)$, the binary-search path of $\beta$ is unique and adding clean-label poison examples will only change the depth of search.  When $h-l< \arctan(f(\epsilon'')/2)$, the attackable rate caused by deeper search is at most $2\epsilon''$. Combining these results together proves the theorem. The formal proof of Theorem~\ref{thm:lin2d} is included in Appendix~\ref{appx:lin2d}.

\begin{thm}\label{thm:lin2dmargin}
  For any $\gamma \in(0,2]$, Algorithm~\ref{alg:linear2d} can $(\epsilon,\delta)$-robustly learn $2$-dimensional linear classifiers with margin $\gamma$ using $m$ samples where
  \[
    m=\frac{48\log(64/\gamma)}{\epsilon}  \log \frac{26\log(64/\gamma)}{\epsilon}+\frac{8\log(64/\gamma)}{\epsilon} \log\frac 2\delta\,.\]
\end{thm}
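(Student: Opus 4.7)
The plan is to derive Theorem~\ref{thm:lin2dmargin} as a corollary of Theorem~\ref{thm:lin2d} by choosing the free parameters $\epsilon'$ and $\epsilon''$ in that theorem appropriately. The attackable-rate bound of Theorem~\ref{thm:lin2d} splits as
\[
\atk(h^*,S_\trn,\cA)\;\leq\; 2\epsilon' \log\!\bigl(32/(f(\epsilon'')\wedge 2)\bigr) \;+\; 2\epsilon'',
\]
so under a positive-margin assumption the natural choice is $\epsilon''=0$ and letting the margin control $f(0)$ from below.

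First I would translate the margin assumption into a lower bound on $f$. Since $w^*=(\cos\beta^*,\sin\beta^*)$ has unit norm, the margin condition $(2y-1)(\langle w^*,x\rangle + b^*)\geq \gamma\|w^*\|/2$ on $\supp(\cD)$ says $|\langle w^*,x\rangle + b^*|\geq \gamma/2$ for every $(x,y)\in\supp(\cD)$. Consequently the strips $\{x: \langle w^*,x\rangle + b^* \in (-\gamma/2,0]\}$ and $\{x: \langle w^*,x\rangle + b^* \in [0,\gamma/2)\}$ carry no probability mass, which gives $f(0)\geq \gamma/2$. Because $\gamma\leq 2$, we then also have $f(0)\wedge 2 \geq \gamma/2$, so that $\log(32/(f(0)\wedge 2)) \leq \log(64/\gamma)$.

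Second, with $\epsilon''=0$ the attackable rate bound from Theorem~\ref{thm:lin2d} becomes at most $2\epsilon'\log(64/\gamma)$. Setting
\[
\epsilon' \;=\; \frac{\epsilon}{2\log(64/\gamma)}
\]
makes this $\leq \epsilon$. Plugging this choice into the sample size $(24/\epsilon')\log(13/\epsilon') + (4/\epsilon')\log(2/\delta)$ from Theorem~\ref{thm:lin2d} yields exactly
\[
m \;=\; \frac{48\log(64/\gamma)}{\epsilon}\log\frac{26\log(64/\gamma)}{\epsilon} \;+\; \frac{8\log(64/\gamma)}{\epsilon}\log\frac{2}{\delta},
\]
which matches the claim.

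Hence the argument is essentially a parameter-tuning exercise; the only mildly subtle point is whether $f(0)\geq \gamma/2$ is genuinely attained when $\cD$ places a point mass at $\langle w^*,x\rangle + b^* = \pm\gamma/2$. This is not a real obstacle: one may either read the $\max$ in the definition of $f$ as a supremum, take $s$ slightly smaller than $\gamma/2$ and let $s\uparrow\gamma/2$ (which loses only an arbitrarily small additive factor inside the outer logarithm that is absorbed by slack), or choose $\epsilon''$ just large enough to cover the boundary mass without affecting the final rate. All three resolutions give the same bound up to constants, so the result follows directly from Theorem~\ref{thm:lin2d}.
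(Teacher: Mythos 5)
Your proof is correct and takes the same route the paper does: the paper states in one line that Theorem~\ref{thm:lin2dmargin} is the immediate consequence of Theorem~\ref{thm:lin2d} via $f(0)=\gamma/2$, and you simply make the parameter substitution $\epsilon''=0$, $\epsilon'=\epsilon/(2\log(64/\gamma))$ explicit and verify that the resulting sample size matches. Your side remark about whether $f(0)$ is genuinely attained versus a supremum is a reasonable nitpick of the paper's definition of $f$, and your suggested resolutions are all fine; the arithmetic otherwise matches exactly.
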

Theorem~\ref{thm:lin2dmargin} is the immediate result of Theorem~\ref{thm:lin2d} as $f(0)=\gamma/2$. 

\subsubsection{SVM requires \texorpdfstring{${e^{\Omega(n)}}/{\epsilon}$}{Omega(exp(n)/epsilon)} samples against one-point attacker}
SVM is a well-known optimal PAC learner for linear hypothesis class~\citep{bousquet2020proper}. In this section, we show that even in the case where $\gamma\geq 1/8$ and the attacker is limited to add at most one poison point, SVM requires $e^{\Omega(n)}/\epsilon$ samples to achieve $\epsilon$ attackable rate.
\begin{thm}\label{thm:linSVM}
    For $n$-dim linear hypothesis class, for any $\epsilon< 1/16$, there exists a target $h^*\in\cH$ and a distribution $\cD$ over $D_{h^*}$ with margin $\gamma = 1/8$ such that $\EEs{S_\trn\sim \cD^m}{\atk_\cD(h^*, S_\trn,\SVM)}> \epsilon$ when the sample size $m< \frac{e^{n/128}}{768\epsilon}\vee \frac{1}{8\epsilon}$.
\end{thm}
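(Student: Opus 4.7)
The plan is to exhibit, in $n$ dimensions, a distribution with margin $\gamma=1/8$ on which SVM is fragile: a single clean-label poison example will flip SVM's prediction on an exponentially large (in $n$) fraction of test positives that are not seen during training. The complementary $1/(8\epsilon)$ branch then follows from a standard one-point realizable-PAC argument.

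\textbf{Construction.} By a standard sphere-packing / random-projection argument, fix $N=\lceil e^{n/128}\rceil$ unit vectors $u_1,\ldots,u_N$ in the hyperplane $\{x\in\R^n:x_1=0\}$ with $|\inner{u_i}{u_j}|\leq 1/8$ for all $i\neq j$. Set $w^*=e_1$, $b^*=0$, so $h^*(x)=\ind{x_1\geq 0}$. Define positive ``petals'' $p_i=\frac{1}{4}e_1+\frac{\sqrt{15}}{4}u_i\in \sph^n(\bZero,1)$ (with $\inner{w^*}{p_i}=1/4\geq \gamma/2$) and a negative anchor $q=-\frac{1}{2}e_1$. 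Let $\cD$ put mass $1-2\epsilon$ on $(q,0)$ and mass $2\epsilon/N$ on each $(p_i,1)$; the margin condition on $\supp(\cD)$ is then satisfied.

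\textbf{Key steps.} (i) \emph{Most positive mass remains unseen.} Writing $J=\{j:p_j\notin S_{\trn,\cX}\}$, a direct expected-value calculation gives $\E[\cP_\cD(\{p_j:j\in J\})]=2\epsilon(1-2\epsilon/N)^m\geq 2\epsilon\, e^{-2m\epsilon/N}$; for $m<e^{n/128}/(768\epsilon)$ the exponent is at most $1/384$, yielding $\E[\cP_\cD(\{p_j:j\in J\})]>1.99\,\epsilon$. (ii) \emph{A single clean-label poison flips every unseen petal.} For $j^*\in J$, the attacker injects $(x',0)$ with $x'=-\delta e_1+\frac{\sqrt{15}}{4}u_{j^*}$, $\delta\in(0,1/16)$; this is clean-labeled ($x'_1<0$) and lies in $\cB^n(\bZero,1)$. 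Geometrically, $x'$ is much closer to $p_{j^*}$ (distance $\approx 1/4+\delta$) than to any other training positive $p_{i_\ell}$ with $i_\ell\neq j^*$ (distance $\gtrsim \sqrt{109/64}$ by near-orthogonality), so the SVM's max-margin hyperplane — forced to separate $x'$ from the training positives — naturally places $p_{j^*}$ on the negative side. Quantitatively, using the SVM dual $\inner{w}{u_{j^*}}=\sum_\ell \alpha_\ell\inner{p_{i_\ell}}{u_{j^*}}-\alpha_{x'}\inner{x'}{u_{j^*}}$, with $|\inner{p_{i_\ell}}{u_{j^*}}|\leq \sqrt{15}/32$ and $\inner{x'}{u_{j^*}}=\sqrt{15}/4$, together with the dual equality $\sum_i\alpha_i y_i=0$ (under sign convention $y\in\{-1,+1\}$) and an upper bound on $\norm{w}$ extracted from a cheap feasible primal $w_0=\alpha e_1-\beta u_{j^*}$, one deduces $\inner{w_\SVM}{u_{j^*}}\leq -\Omega(1)$ and hence $\inner{w_\SVM}{p_{j^*}}+b_\SVM<0$.

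Combining (i) and (ii) yields $\E[\atk_\cD(h^*,S_\trn,\SVM)]>1.99\,\epsilon>\epsilon$, establishing the exponential branch. For the complementary branch, one applies the standard realizable PAC argument to the one-petal distribution $\cP(p_1)=4\epsilon$, $\cP(q)=1-4\epsilon$ with $n\geq 1$: for $m<1/(8\epsilon)$ no positive is sampled with probability $>1/2$, and either SVM is undefined (counted adversarially) or its boundary is easily flippable on $p_1$ by a single poison, giving $\E[\atk]>\epsilon$. The main obstacle is the quantitative analysis in step~(ii): rigorously showing that SVM's optimal $w$ has its $u_{j^*}$-coordinate dominated by the $\alpha_{x'}$ term rather than by ``averaging'' over many training positives. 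The intended route is to (a)~exhibit the feasible $w_0$ above to upper-bound $\norm{w_\SVM}$, (b)~use this bound with KKT complementary slackness and the dual equality to lower-bound $\alpha_{x'}$ by a constant fraction of $\sum_\ell \alpha_\ell$, and (c)~use the near-orthogonality bound $|\inner{u_i}{u_j}|\leq 1/8$ to conclude that the $u_{j^*}$-coordinate of $w_\SVM$ is at most $-c_0<0$, which, combined with an $e_1$-coordinate estimate, produces the desired sign flip of $\inner{w_\SVM}{p_{j^*}}+b_\SVM$.
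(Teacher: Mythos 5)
Your discrete ``petal'' construction is a genuine alternative to the paper's continuous half-sphere with mass $8\epsilon$ (the paper then uses a measure-concentration lemma to argue that w.h.p.\ all sampled positives are nearly orthogonal to the test direction, whereas you get near-orthogonality by fiat from the packing). That part is fine, and so is your step~(i). The real gap is step~(ii), and you yourself flag it as ``the main obstacle'': the dual/KKT route you sketch (bounding $\alpha_{x'}$, then bounding $\inner{w_{\SVM}}{u_{j^*}}$) is not carried out, and it is considerably more delicate than necessary because the sign of $\inner{w_{\SVM}}{p_{j^*}} + b_{\SVM}$ depends on a cancellation between an $e_1$ term and a $u_{j^*}$ term, both of which would need matching two-sided control. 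The paper sidesteps all of this with a much simpler \emph{margin} argument: exhibit one explicit consistent separator for the poisoned set with margin strictly larger than $\norm{x'-p_{j^*}}$; since SVM's margin is at least that, and $x'$ is labeled $0$, the test point must land on the negative side. Your construction actually supports this: take $w_0 = \alpha e_1 - \tfrac{2\alpha}{\sqrt{15}}u_{j^*}$ and $b_0 = \tfrac{5\alpha}{32}$; on $\{q,x'\}\cup\{p_{i_\ell}\}_{i_\ell\neq j^*}$ this has normalized margin $\tfrac{11}{32}\sqrt{15/19}\approx 0.305$, whereas $\norm{x'-p_{j^*}}=\tfrac14+\delta$, so the argument closes provided $\delta$ is taken small enough (note your stated range $\delta\in(0,1/16)$ is too generous; you need roughly $\delta < 0.05$). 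Without that observation, step~(ii) is not a proof.

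Two smaller issues. First, the $m<1/(8\epsilon)$ branch is hand-waved: ``SVM is undefined (counted adversarially)'' is not how the paper (or the definition of attackable rate) treats it — the paper handles the no-positive-sample case by having the attacker inject a specific clean \emph{positive} point $(-v_2,1)$, which forces SVM to output a particular hyperplane that demonstrably misclassifies most of the positive mass, and you would need an analogous concrete argument for your one-petal distribution. Second, it's worth being explicit that $\cX$ contains the poison point $x'$ and that the margin condition in Section~\ref{subsec:linearmrg} only constrains $\supp(\cD)$, not the attacker's injections (the paper states this explicitly); otherwise a reader might object that $x'$ has margin only $\delta$.
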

\paragraph{Proof sketch}
Consider the case where $\cX = \{x\in\R^3|\norm{x}=1, \inner{x}{e_1}\geq0\}\cup \{-e_1\}$ is the union of a half sphere and a point $-e_1$. The target function is $h^* = \ind{\inner{w^*}{x}\geq -\gamma/2}$ with $w^* = e_1$ and margin $\gamma= 1/8$. Note that $h^*$ labels all points on the half sphere positive and $-e_1$ negative. Then we define the data distribution $\cD_\cX$ by putting probability mass $1-8\epsilon$ on $-e_1$ and putting probability mass $8\epsilon$ uniformly on the half sphere.

Then we draw training set $S_\trn\sim \cD^m$ and a test point $(x_0,y_0)\sim \cD$. Condition on that $x_0$ is on the half sphere, with high probability, $\inner{x_0}{w^*}\leq 1/8$. Then we define two base vectors $v_1 = w^*$ and $v_2 = \frac{x_0 - \inner{x_0}{w_*}w^*}{\norm{x_0 - \inner{x_0}{w_*}w^*}}$ in the $2$-dimensional space defined by $w^*$ and $x_0$. With high probability over the choice of $S_\trn$, for all positive training examples $x$ on the half sphere, we have $\inner{x}{v_1}\leq 1/8$ and $\inner{x}{v_2}\leq 1/8$. Then the attacker injects a poison point at $-\gamma v_1+\sqrt{1-\gamma^2}v_2$, which is closer to $x_0$ than all the positive samples in $S_\trn$. Since the poison point is classified as negative by the target function, SVM will misclassify $x_0$ as negative. The detailed proof can be found in Appendix~\ref{appx:linSVM}.

\subsubsection{Lower bound}
Here we show that robust learning of linear hypothesis class under distribution with margin $\gamma>0$ requires sample complexity 
${e^{\Omega(n)}}/{\epsilon}$.
\begin{thm}\label{thm:lblinmrg}
For $n$-dimensional linear hypothesis class with $n> 256$, for any $\epsilon\leq 1/16$ and for any algorithm $\cA$, there exists a target function $h^*\in\cH$ and a distribution $\cD$ over $D_{h^*}$ with margin $\gamma = 1/8$ such that $\EEs{S_\trn\sim \cD^m}{\atk_\cD(h^*, S_\trn,\A)}> \epsilon$ when the sample size $m\leq \frac{e^\frac{n-1}{128}}{192\epsilon}$. For convenience, here we relax the instance space by allowing $\cX\subseteq B^n(\bZero, 9/8)$. 
\end{thm}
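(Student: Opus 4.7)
My plan is to prove this sample-complexity lower bound by Yao's minimax principle, combining a high-dimensional spherical packing with a swap-style indistinguishability argument in the spirit of Theorem~\ref{thm:lblinear}. First I would construct a family of $N = \lceil e^{(n-1)/128}\rceil$ candidate targets from an exponentially large spherical code. Setting $\gamma = 1/8$, I would draw $v^{(1)},\ldots,v^{(N)}$ iid uniformly from the unit sphere in $e_1^\perp \cong \R^{n-1}$ and apply the Hoeffding-type tail $\Pr[|\inner{v^{(i)}}{v^{(j)}}| > \gamma] \leq 2e^{-(n-1)\gamma^2/2} = 2e^{-(n-1)/128}$ together with a union bound over $\binom{N}{2}$ pairs to obtain unit vectors with $|\inner{v^{(i)}}{v^{(j)}}| \leq \gamma$ for all $i \neq j$; this is feasible for $n > 256$. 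Taking $X_0 = \{x^{(i)} := v^{(i)}\}$ and $\cX = \{-e_1\} \cup X_0 \subseteq \cB^n(\bZero, 9/8)$, I let $\cD_\cX$ place mass $1-8\epsilon$ on $-e_1$ and mass $8\epsilon$ uniformly on $X_0$. For each $i$, define $h^{(i)}(x) = \ind{\inner{(e_1 - v^{(i)})/\sqrt{2}}{x} + 1/2 \geq 0}$; a direct calculation using the packing bound shows that $h^{(i)}$ labels $x^{(i)}$ and $-e_1$ negative and labels $\{x^{(j)}\}_{j \neq i}$ positive, each with margin at least $\gamma/2 = 1/16$, so $h^{(i)} \in \cH$ realizes the margin-$\gamma$ distribution $\cD^{(i)}$.

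The argument then proceeds by Yao's principle. Fix an arbitrary deterministic $\cA$, draw $i^* \sim \Unif([N])$, then $S_\trn \sim (\cD^{(i^*)})^m$ and $x_0 \sim \cD_\cX$; it suffices to exhibit an attacker for which the expected attackable rate exceeds $\epsilon$. Let $E$ be the event that $x_0 = x^{(j)}$ for some $j \neq i^*$ with $x^{(i^*)}, x^{(j)} \notin S_{\trn,\cX}$. The sample bound $m \leq e^{(n-1)/128}/(192\epsilon) = N/(192\epsilon)$ forces $8m\epsilon/N \leq 1/24$, and a union bound gives $\Pr[E] \geq 8\epsilon(1 - 1/N)(1 - 1/12) > 7\epsilon$. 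Conditioned on $E$, the hypotheses $h^{(i^*)}$ and $h^{(j)}$ agree on every point of $\cX \setminus \{x^{(i^*)}, x^{(j)}\}$, and in particular on every point appearing in $S_\trn$, so the conditional distribution of $S_\trn$ on $E$ is identical whether the true target is $h^{(i^*)}$ or $h^{(j)}$.

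The attacker's strategy is to inject a poison set $P = P(S_\trn, x_0, \{i^*, j\})$ that depends only on the unordered pair $\{i^*, j\}$, carefully randomized so that (i) every point of $P$ is clean-label under both $h^{(i^*)}$ and $h^{(j)}$, and (ii) the pattern of present/absent $x^{(k)}$ in $S_\trn \cup P$ does not betray which element of the pair is the true target. A natural template combines many copies of $(-e_1, 0)$ with $(x^{(k)}, 1)$ for $k$ in a random subset of $[N] \setminus \{i^*, j\}$; each such label agrees under both candidate hypotheses precisely because they disagree only on the pair $\{x^{(i^*)}, x^{(j)}\}$. Under the swap $i^* \leftrightarrow j$, the joint distribution of $(S_\trn \cup P, x_0)$ is invariant, so the deterministic output $\cA(S_\trn \cup P)(x_0)$ is the same random variable for both assignments, whereas $h^{(a)}(x_0) \neq h^{(b)}(x_0)$ for the pair $\{a, b\} = \{i^*, j\}$. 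The prediction therefore errs for exactly one of the two targets, giving conditional error $\geq 1/2$; combining, $\E[\atk] \geq \tfrac{1}{2}\Pr[E] > \epsilon$.

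The main technical obstacle is rigorously establishing step (ii): preventing the learner from inferring $i^*$ from the list of $X_0$ indices missing from the augmented data. I would handle this by designing the random subset in $P$ so that many candidate targets remain ambiguously consistent with the data (so the posterior on $i^*$ remains supported on both elements of $\{i^*, j\}$), and by leveraging the packing condition $|\inner{v^{(i)}}{v^{(j)}}| \leq \gamma$ to rule out out-of-family consistent linear classifiers that would otherwise assign a definite label to $x_0$ and break the swap symmetry.
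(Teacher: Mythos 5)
There is a fatal conceptual gap in your construction: conditional on your event $E$, the test point $x_0 = x^{(j)}$ with $j\neq i^*$ is \emph{always} labeled positive by the true target, since $h^{(i^*)}$ labels every point of $X_0$ other than $x^{(i^*)}$ positive. The only point of $X_0$ that is ever negative, namely $x^{(i^*)}$, carries probability mass $8\epsilon/N$, which is exponentially small in $n$. Consequently the trivial learner that ignores its input entirely and predicts negative at $-e_1$ and positive on all of $X_0$ has attackable rate at most $8\epsilon/N \ll \epsilon$ against \emph{any} clean-label poison set, so no attacker can produce the claimed rate against it. Your swap argument does not rescue this: swapping $i^*\leftrightarrow j$ moves the test point from $x^{(b)}$ to $x^{(a)}$, so the law of $(S_\trn\cup P,x_0)$ is \emph{not} invariant; and even bracketing that, there is no forced disagreement, since $h^{(a)}(x^{(b)}) = h^{(b)}(x^{(a)}) = 1$ means the algorithm may return positive on both $x^{(a)}$ and $x^{(b)}$ and be correct under both assignments. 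You flagged ``hiding $i^*$'' as the main obstacle, but it is not: even a learner that learns nothing about $i^*$ beyond the data should still predict positive on $x_0$, because under your construction the answer is positive with overwhelming probability. (Separately, the packing calculation overshoots: with the tail bound $2e^{-(n-1)/128}$ per pair, a union bound over $\binom{N}{2}$ pairs only supports $N\approx e^{(n-1)/256}$ codewords, not $e^{(n-1)/128}$.)

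The ingredient the paper's proof has and your proposal is missing is a sign bit $j\in\{\pm1\}$ in the target $h_{w^*,j}=\ind{\inner{(jw^*,1)}{(x,z)}\geq j\gamma/2}$, under which the \emph{entire} $8\epsilon$-mass half-sphere is labeled positive when $j=1$ and negative when $j=-1$; that is, the label of the test point's region is itself the unknown bit. The attacker then reflects the training data through a hyperplane fixed by $x_0$ to produce a second half-sphere carrying the opposite label, so the poisoned sample is consistent with both $h_{w^*,j}$ and $h_{\tilde w^*,-j}$, and crucially these two disagree at $x_0$. The coin-flip your argument wants ($h^{(a)}(x_0)\neq h^{(b)}(x_0)$ with the algorithm's output independent of which is true) then holds because $x_0$ is a fixed point of the reflection and the two candidate targets give $x_0$ opposite labels. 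Your discrete packing could in principle be repaired along similar lines --- introduce a $\pm$ freedom so that the test point's label is genuinely ambiguous, and design an attack that produces a symmetric alternative explanation under the opposite sign --- but as written the construction gives the learner no reason to ever misclassify $x_0$, and the claimed $1/2$ factor does not appear.
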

The construction of the target function and the data distribution is similar to that in the proof of Theorem~\ref{thm:linSVM}. To attack a test instance $x_0$, the attacker adds the reflection points of all training points through the hyperplane defined by $x_0$ and $w^*$ such that any algorithm will misclassify $x_0$ with probability $1/2$. The detailed proof is included in Appendix~\ref{appx:lb_linmrg}.

\section{Results for finite-point attackers}\label{sec:finite}
In this section, instead of considering the case where the attacker can add a set of poison examples of arbitrary size, we study a restrictive case where the attacker is allowed to add at most $t$ poison examples for some $t<\infty$, i.e., $\abs{\Adv(h^*,S_\trn,x_0)}\leq t$ for any $h^*,S_\trn,x_0$. Following Definition~\ref{dfn:rate} and \ref{dfn:learnablity}, we define $t$-point clean-label attackable rate and $(t,\epsilon,\delta)$-robust learnability as follows. 
\begin{dfn}[$t$-point clean-label attackable rate]
For a target function $h^*$, a training data set $S_\trn$ and a (possibly randomized) algorithm $\A$, for any distribution $\cD$ over $D_{h^*}$, the $t$-point clean-label attackable rate is
\begin{align*}
    \atk_\cD(t, h^*, S_\trn,\A) \triangleq \sup_\Adv \atk_\cD(h^*, S_\trn,\A,\Adv)\text{ s.t. } \abs{\Adv(h^*,S_\trn,x)}\leq t,\forall x\in\cX\,.
\end{align*}
\end{dfn}
\begin{dfn}[$(t,\epsilon,\delta)$-robust learnability]
    A hypothesis class $\cH$ is $(t,\epsilon,\delta)$-robust learnable if there exists a learning algorithm $\A$ such that $\forall \epsilon,\delta\in (0,1)$, $\exists m(t,\epsilon,\delta)\in \NN$ such that $\forall h^*\in\cH, \forall \cD$ over $D_{h^*}$, with probability at least $1-\delta$ over $S_\trn\sim \cD^m$,
    \[\atk_\cD(t,h^*, S_\trn,\A)\leq \epsilon\,.\]

\end{dfn}
\subsection{\texorpdfstring{Algorithms robust to $t$}{t}-point attacker}
Robustness to a small number of poison examples has been studied by~\cite{ma2019data,levine2020deep}. \cite{ma2019data} show that differentially-private learners are naturally resistant to data poisoning when the attacker can only inject a small number of poison examples. \cite{levine2020deep} propose an algorithm called Deep Partition Aggregation (DPA), which partitions the training set into multiple sets by a deterministic hash function, trains base classifiers over each partition and then returns the majority vote of base classifiers. They show that for any instance $x$, the prediction on $x$ is unchanged if the number of votes of the output exceeds half of the number of the total votes by $t$. But the attackable rate of DPA is not guaranteed. Here we propose several algorithms similar to DPA but with guarantees on the attackable rate. In Algorithm~\ref{alg:impropert}, we provide a protocol converting any given ERM learner $\cL$ to a learner with small $t$-point clean-label attackable rate. 
\begin{algorithm}[t] \caption{A robust protocol for $t$-point attacker}\label{alg:impropert}
    {\begin{algorithmic}[1]
        \STATE \textbf{input}: A proper ERM learner $\cL$, data $S$
        \STATE divide $S$ into $10t+1$ blocks $\{S^{(1)},S^{(2)},\dots,S^{(10t+1)}\}$ with size $\floor{\frac{\abs{S}}{10t+1}}$ randomly without replacement (throw away the remaining $\abs{S}-(10t+1) \floor{\frac{\abs{S}}{10t+1}}$ points)
        \STATE \textbf{return} $\Major(\cH')$ where $\cH'=\{\cL(S^{(i)})|i\in[10t+1]\}$
    \end{algorithmic}}
\end{algorithm}

\begin{thm}\label{thm:timproper}
    For any hypothesis class $\cH$ with VC dimension $d$ with any proper ERM learner $\cL$, Algorithm~\ref{alg:impropert} can $(t,\epsilon,\delta)$-robustly learn $\cH$ using $m$ samples where
    \[
        m=O\left(\frac{dt}{\epsilon}\log\frac{dt}{\epsilon}+ \frac{d}{\epsilon} \log \frac{1}{\delta}  \right)\,.
    \]
\end{thm}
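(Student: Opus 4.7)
Since the attacker injects at most $t$ poison examples and the algorithm partitions $S_\trn \cup \Adv(h^*,S_\trn,x)$ uniformly at random into $10t+1$ equal-sized blocks of size $m' = \lfloor |S|/(10t+1) \rfloor$, at most $t$ blocks can contain any poison. Call the remaining $\geq 9t+1$ blocks \emph{clean}; their base classifiers $h_i = \cL(S^{(i)})$ depend only on $S_\trn$ and the random partition $\pi$, not on the attacker's poison. The majority vote errs on a test point $x_0$ only if at least $5t+1$ of the $10t+1$ base classifiers misclassify $x_0$, and since at most $t$ wrong classifiers can come from poisoned blocks, this forces at least $4t+1$ clean blocks to be wrong on $x_0$.

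The key distributional observation is that, conditional on block $i$ being clean, its content is a uniformly random $m'$-subset of $S_\trn$. By exchangeability of the i.i.d.\ sample $S_\trn \sim \cD^m$, such a subset is marginally distributed as $m'$ fresh i.i.d.\ samples from $\cD$, so the classical realizable PAC bound for a proper ERM on a class of VC dimension $d$ applies: taking $m' = \Theta\bigl((d\log(1/\epsilon') + \log(1/\delta'))/\epsilon'\bigr)$ ensures $\Pr[\err_\cD(\cL(B)) > \epsilon'] \leq \delta'$ for any single clean block. Setting $\delta' \asymp \delta/t$ and union-bounding over the $10t+1$ blocks, with high probability jointly over $S_\trn$ and $\pi$, every clean block's classifier has $\cD$-error at most $\epsilon'$.

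Conditioned on this good event, for a fresh $x_0 \sim \cD$ the expected number of clean blocks misclassifying $x_0$ is $\sum_{i \text{ clean}} \err_\cD(h_i) \leq (10t+1)\epsilon'$; Markov's inequality applied over the random $x_0$ then gives $\Pr_{x_0}[\,|\{i \text{ clean} : h_i(x_0) \neq h^*(x_0)\}| \geq 4t+1\,] \leq 3\epsilon'$. Taking $\epsilon' = \Theta(\epsilon)$ bounds the attackable rate by $\epsilon$, and solving for $m = (10t+1)m'$ yields $m = O\!\left(\frac{dt}{\epsilon}\log\frac{dt}{\epsilon} + \frac{d}{\epsilon}\log\frac{1}{\delta}\right)$ as claimed.

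The principal obstacle will be carefully separating the three sources of randomness---$S_\trn$, the per-query random partition $\pi$, and the test point $x_0$---since the theorem requires high probability over $S_\trn$ alone while the PAC union-bound step naturally produces a joint bound in $(S_\trn,\pi)$. I plan to convert the joint statement into an $S_\trn$-only statement via a second Markov step on the $S_\trn$-marginal probability of the bad-partition event, paying only a small additional factor in $\delta'$. A secondary subtlety is the throw-away step in the algorithm, but since the discarded points are selected uniformly at random as part of the random partition, the exchangeability argument applied to any individual clean block is unaffected.
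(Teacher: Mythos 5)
Your high-level structure matches the paper: partition into $10t+1$ blocks so at most $t$ are contaminated, observe that a majority-vote mistake forces at least $4t+1$ clean blocks to err, and then reduce to controlling the error of a single clean block's classifier (the paper does this with a Markov step over the joint $(\pi,x_0)$ randomness and symmetry of the clean blocks). Your observation that, conditioned on block $i$ being clean, its contents are a uniform $m_0$-subset of $S_\trn$ and hence marginally $\cD^{m_0}$-distributed is also correct. The problem is the last step.

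The gap is the ``second Markov step.'' The classical PAC bound yields $\Pr_{(S_\trn,\pi)}[\err_\cD(h_{i_1})>\epsilon'\mid i_1\text{ clean}]\le\delta'$, a joint statement. The theorem needs a high-probability-over-$S_\trn$-only bound on $\E_\pi[\err_\cD(h_{i_1})\mid N_c]$, since $\atk(t,h^*,S_\trn,\A)$ is a supremum (over attackers) of an expectation over $(\pi,x_0)$, all for a fixed $S_\trn$. Converting via Markov on $S_\trn$ gives $\Pr_{S_\trn}\bigl[\E_\pi[\err_\cD(h_{i_1})]>\epsilon\bigr]\le(\epsilon'+\delta')/\epsilon$, and forcing the right-hand side to be at most $\delta$ requires $\epsilon'+\delta'\le\epsilon\delta$. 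This blows up $m_0$ to $\Theta\!\bigl(\tfrac{d}{\epsilon\delta}\log\tfrac{1}{\epsilon\delta}\bigr)$ and the overall sample complexity to $\Theta\!\bigl(\tfrac{dt}{\epsilon\delta}\log\tfrac{1}{\epsilon\delta}\bigr)$ --- a $1/\delta$ dependence, not the $\log(1/\delta)$ in the theorem. So the Markov conversion is not ``paying only a small additional factor''; it degrades the $\delta$-dependence by a polynomial factor and cannot recover the stated bound.

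The paper's proof sidesteps this by never applying a PAC bound to a ``fresh'' $\cD^{m_0}$ sample. Instead it decomposes $\err_\cD(h_{i_1})=\bigl(\err_\cD(h_{i_1})-\err_{S_\trn}(h_{i_1})\bigr)+\err_{S_\trn}(h_{i_1})$. Lemma~\ref{lmm:subsample} bounds $\err_{S_\trn}(h_{i_1})$ with high probability over $\pi$ \emph{for every fixed $S_\trn$} (a without-replacement subsampling argument); Lemma~\ref{lmm:bennett} (empirical Bennett's inequality) bounds the generalization gap $\err_\cD(h)-\err_{S_\trn}(h)$ uniformly over $h\in\cH$ with probability $1-\delta$ over $S_\trn$ alone. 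The first source of randomness ($\pi$) and the second ($S_\trn$) are thus controlled by separate, independent arguments, and the $\log(1/\delta)$ dependence comes entirely from the uniform convergence lemma. To close the gap in your proof you would need to replace the direct PAC-on-fresh-sample step with a decomposition of this kind.
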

\paragraph{Proof sketch} For every misclassified point $x_0\in\cX$, there are at least $5t+1$ classifiers among $\{\cL(S^{(i)})\}_{i=1}^{10t+1}$ misclassifying $x_0$. Since there are at most $t$ blocks containing poison data, there are at least $4t+1$ non-contaminated classifiers (output by blocks without poison data) misclassifying $x_0$. Then $t$-point clean-label attackable rate is bounded by bounding the error of one non-contaminated classifier. The detailed proof is provided in Appendix~\ref{appx:improper_finite}. 

{\vskip 2mm}As we can see, Algorithm~\ref{alg:impropert} is improper even if $\cL$ is proper. Inspired by the projection number and the projection operator defined by~\cite{bousquet2020proper}, we propose a proper robust learner in Algorithm~\ref{alg:propertproj}. First, let us introduce the definitions of the projection number and the projection operator as follows. For a finite (multiset) $\cH'\subseteq \cH$, for $l\geq 2$, define the set $\cX_{\cH',l}\subseteq \cX$ of all the points $x$ on which less than $\frac{1}{l}$-fraction of all classifiers in $\cH'$ disagree with the majority. That is,
\begin{align*}
    \cX_{\cH',l} = \left\{x\in \cX: \sum_{h\in\cH'} \ind{h(x)\neq \Major(\cH',x)} < \frac{|\cH'|}{l}  \right\}.
\end{align*}
\vspace{-5pt}
\begin{dfn}[projection number and projection operator]
The projection number of $\cH$, denoted by $k_p$, is the smallest integer $k\geq 2$ such that, for any finite multiset $\cH'\subseteq\cH$ there exists $h\in \cH$ that agrees with $\Major(\cH')$ on the entire set $\cX_{\cH',k}$. If no such integer $k$ exists, define $k_p=\infty$. If $k_p<\infty$, the projection operator $\Proj_\cH: \cH'\mapsto \cH$ is a deterministic map from $\cH'$ to $\cH$ such that $\Proj_\cH(\cH',x)=\Major(\cH',x),\forall x\in \cX_{\cH',k_p}$.
\end{dfn}
\begin{algorithm}[t]
\caption{A proper robust learner for $t$-point attacker given projection number $k_p$}\label{alg:propertproj}
{\begin{algorithmic}[1]
    \STATE \textbf{input}: A proper $\ERM$ learner $\cL$, data $S$
    \STATE Divide the data $S$ into $10k_pt + 1$ sets $\{S^{(1)},S^{(2)},\dots,S^{(10k_pt+1)}\}$ with size $\floor{\frac{\abs{S}}{10 k_p t+1}}$ randomly without replacement (throw away the remaining $\abs{S}-(10k_pt+1) \floor{\frac{\abs{S}}{10k_pt+1}}$ points)
    \STATE \textbf{return} $\hat{h} = \Proj_\cH(\cH')$, where $\cH'=\{ h_i = \cL(S^{(i)})|i\in [10k_pt+1] \}$
\end{algorithmic}}
\end{algorithm}

\begin{thm}\label{thm:tpropproj}
For any hypothesis class $\cH$ with VC dimension $d$ and projection number $k_p$, with any proper $\ERM$ learner $\cL$, Algorithm~\ref{alg:propertproj} can $(t,\epsilon,\delta)$-robustly learn $\cH$ using $m$ samples where
\[
m=O\left(\frac{k_p^2dt}{\epsilon}\log\frac{k_pdt}{\epsilon} +  \frac{k_pd}{\epsilon}\ln \frac{1}{\delta} \right)\,.
\]
\end{thm}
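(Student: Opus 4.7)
The plan is to adapt the block-based analysis behind Theorem~\ref{thm:timproper}, substituting $\Proj_\cH$ for the plain $\Major$ and paying an additional factor of $k_p$ in the block count $M = 10k_pt+1$ and in the per-block accuracy target. Write $m' = \lfloor m/M\rfloor$, let $h_i$ denote the classifier that $\cL$ returns on the (possibly poisoned) $i$-th block, and let $h_i^{\mathrm{cln}} = \cL(S_\trn^{(i)})$ be the classifier the same block would produce under no poison. Because the attacker's budget is $t$, at most $t$ of the $M$ blocks receive any poison, so $h_i = h_i^{\mathrm{cln}}$ for at least $M - t$ indices.

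The structural heart of the argument is the claim: whenever $\hat{h}(x_0) = \Proj_\cH(\cH', x_0) \neq h^*(x_0)$, at least $\lceil M/k_p \rceil = 10t+1$ classifiers in $\cH'$ disagree with $h^*$ on $x_0$. The argument splits on whether $x_0 \in \cX_{\cH', k_p}$: if so, then $\hat{h}(x_0) = \Major(\cH', x_0)$ by the definition of $\Proj_\cH$, so the majority is wrong and at least $M/2 \geq M/k_p$ classifiers in $\cH'$ are wrong; if not, then by definition at least $M/k_p$ classifiers disagree with $\Major(\cH', x_0)$, and a quick sub-case split on whether $\Major(\cH', x_0)$ is itself correct shows that at least $M/k_p$ classifiers are wrong on $x_0$ in either situation. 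Subtracting the at most $t$ poisoned blocks yields that at least $9t+1$ of the clean classifiers $\{h_i^{\mathrm{cln}}\}$ are wrong on $x_0$. Hence, uniformly over the attacker,
\[
\ATK(h^*, S_\trn, \A, \Adv) \subseteq B \eqdef \Bigl\{x\in\cX : \sum_{i=1}^M \ind{h_i^{\mathrm{cln}}(x)\neq h^*(x)}\geq 9t+1\Bigr\},
\]
and $B$ depends only on $S_\trn$ (and the random partition), not on the attacker.

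It then remains to show $\cP_\cD(B) \leq \epsilon$ with probability $\geq 1-\delta$ over $S_\trn$. Each $h_i^{\mathrm{cln}}$ is a proper ERM on a realizable clean sample of size $m'$, so by the standard realizable PAC bound combined with a union bound over $i \in [M]$, with probability $\geq 1-\delta$ we have $\err_\cD(h_i^{\mathrm{cln}}) \leq \epsilon_0$ for every $i$ provided $m' = \Omega\bigl((d\log(1/\epsilon_0) + \log(M/\delta))/\epsilon_0\bigr)$. Markov's inequality applied to the integer-valued sum inside $B$ under $x\sim\cD$ then yields $\cP_\cD(B) \leq M\epsilon_0/(9t+1) = O(k_p\epsilon_0)$, so choosing $\epsilon_0 = \Theta(\epsilon/k_p)$ drives this below $\epsilon$. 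Plugging this choice into $m = M m'$ with $M = 10k_pt+1$ recovers the stated sample complexity after collecting constants and logarithmic factors. The main obstacle is the structural step above: the projection-number definition is precisely what upgrades the crude ``$\geq M/2$'' bound used in the proof of Theorem~\ref{thm:timproper} into the sharper ``$\geq M/k_p$'' bound, which is what justifies the block count $M = 10k_pt+1$ and the rescaled per-block accuracy target $\epsilon_0 = \Theta(\epsilon/k_p)$.
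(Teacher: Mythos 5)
Your high-level blueprint matches the paper's: block count $M=10k_pt+1$, the projection-number observation that a mistake of $\hat h$ forces at least $\lceil M/k_p\rceil = 10t+1$ classifiers in $\cH'$ to be wrong (your case split is a fine rewording of the paper's contrapositive), subtract the at most $t$ contaminated blocks to get $9t+1$ wrong clean classifiers, then bound the error of clean blocks. However, two steps in the write-up have genuine gaps.

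First, the claim that the set $B$ ``depends only on $S_\trn$ (and the random partition), not on the attacker'' is not correct under the algorithm as defined. The block size $m_0=\lfloor |S|/M\rfloor$ depends on $|\Adv(h^*,S_\trn,x)|$, and the random partition is of the \emph{poisoned} set $S$, so which $S_\trn$-points land in which block (and hence each $h_i^{\mathrm{cln}}$, and hence $B$) is attacker-dependent --- and, since the attacker sees $x$, also $x$-dependent, so $\cP_\cD(B)$ is not simply the mass of a fixed set. The paper resolves this by splitting on the (at most two) possible values of $m_0$ and conditioning on the clean-index set $N_c$: conditionally on $N_c$ and $m_0$, a single clean block $S^{(i_1)}$ is an $m_0$-subset of $S_\trn$ drawn without replacement, whose law does not depend on $\Adv$ or $x$.

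Second, the union bound over all $M$ blocks does not reach the stated sample complexity. You need $\err_\cD(h_i^{\mathrm{cln}})\le\epsilon_0$ for every $i$ with $\epsilon_0=\Theta(\epsilon/k_p)$, which with a union bound over $M=\Theta(k_pt)$ blocks costs $m'=\Omega\bigl((d\log(1/\epsilon_0)+\log(M/\delta))/\epsilon_0\bigr)$ and hence $m=Mm'=\Omega\bigl(\tfrac{k_p^2td}{\epsilon}\log\tfrac{k_p}{\epsilon}+\tfrac{k_p^2t}{\epsilon}\log\tfrac{1}{\delta}\bigr)$; the $\delta$-term $\tfrac{k_p^2t}{\epsilon}\log\tfrac1\delta$ is worse than the theorem's $\tfrac{k_pd}{\epsilon}\ln\tfrac1\delta$ by a factor $k_pt/d$. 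The paper avoids the union bound entirely: applying Markov to the $\cA$-randomness for a \emph{fixed} $x$ gives $\PPs{\A}{\sum_{i\in N_c}\ind{h_i(x)\ne h^*(x)}\ge 9t+1}\le\tfrac{10}{9}k_p\,\EEs{\A}{\ind{h_{i_1}(x)\ne h^*(x)}\mid N_c}$, so only the expected error of \emph{one} clean block needs control. That expectation is then bounded by combining a uniform-convergence bound over $S_\trn$ (Lemma~\ref{lmm:bennett}, paid once) with the without-replacement subsample bound (Lemma~\ref{lmm:subsample}); this yields $m_0$'s $\delta$-term proportional to $\tfrac{d\ln(1/\delta)}{\epsilon t}$, which after multiplying by $M$ gives the stated $\tfrac{k_pd}{\epsilon}\ln\tfrac1\delta$. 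You also implicitly treat $S^{(i)}$ as a fresh i.i.d.\ $\cD^{m'}$ sample in the ``standard realizable PAC bound'' step; it is a without-replacement subsample of a fixed $S_\trn$, which is why the paper needs Lemma~\ref{lmm:subsample} to relate $\err_{S^{(i)}}=0$ to $\err_{S_\trn}$ before Lemma~\ref{lmm:bennett} can relate $\err_{S_\trn}$ to $\err_\cD$.
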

The proof adopts the same idea as the proof of Theorem~\ref{thm:timproper} and is included in Appendix~\ref{appx:proper_finite}. For the hypothesis class with infinite projection number, we can obtain a proper learner in a similar way: randomly selecting $\floor{{\epsilon\abs{S}}/{3t}}$ samples with replacement from input data set $S$ and run ERM over the selected data. We show that this algorithm can $(t,\epsilon,\delta)$-robustly learn $\cH$ using $O(\frac{dt}{\epsilon^2} \log \frac{d}{\epsilon}+\frac{d}{\epsilon}\log \frac{1}{\delta})$ samples. The details of the algorithm and the analysis can be found in Appendix~\ref{appx:proper_finite}.

\subsection{Lower bound}
\begin{thm}\label{thm:lbfnt}
    For any $d\geq 1$ and $\epsilon\leq \frac{3}{8}$, there exists a hypothesis class $\cH$ with VC dimension $5d$ such that for any algorithm $\A$, there exists a target function $h^*\in \cH$ and a data distribution $\cD$ on $D_{h^*}$, such that $\EEs{S_\trn\sim \cD^m}{\atk_\cD(t,h^*,S_\trn,\A)}> \epsilon$ when the sample size $m< \frac{3td}{64\epsilon}$.
\end{thm}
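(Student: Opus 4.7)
The plan is to prove the lower bound by combining $d$ independent copies of a small, hollow-star-like gadget, producing the factor $d$, and exploiting a $t$-poison indistinguishability argument inside each gadget, producing the factor $t/\epsilon$. Concretely, I would set up the following.

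For each $i\in[d]$, take a group $G_i=\{x_{i,1},\dots,x_{i,t+2}\}$ of $k:=t+2$ points, together with $4$ auxiliary ``free'' points $f_{i,1},\dots,f_{i,4}$. Index hypotheses by a pair $(\vec{j},\vec{b})\in[k]^d\times\{0,1\}^{4d}$, with $h_{\vec{j},\vec{b}}(x_{i,l})=\mathbb{1}[l=j_i]$ (so within each group exactly one point is labelled $1$ and the other $t+1$ are labelled $0$) and $h_{\vec{j},\vec{b}}(f_{i,r})=b_{i,r}$. The group-index part contributes $d$ to the VC dimension (one free bit per group from the hollow-star) and the free-bit part contributes another $4d$, giving $\vcd(\cH)=5d$. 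Place a no-attackable anchor $a$ with mass $1-\epsilon'$ (where $\epsilon'=\Theta(\epsilon)$, e.g.\ $\epsilon'=4\epsilon$ when feasible) and spread $\epsilon'$ uniformly over the $dk$ group points, so each $x_{i,l}$ has mass $p=\epsilon'/(dk)$. Put the prior on $h^*$ by drawing $\vec{j}^*$ uniformly in $[k]^d$.

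For the attack I would use the following indistinguishability gadget. Given a test point $x_0=x_{i_0,l_0}$ and the true target $\vec{j}^*$, take the alternate target $\vec{j}'$ that agrees with $\vec{j}^*$ outside coordinate $i_0$ with $j'_{i_0}=l_0$ (if $l_0\neq j^*_{i_0}$, otherwise $j'_{i_0}$ is chosen to be any other unseen index). Inject the $t$ poison points
\[
P=\bigl\{x_{i_0,l}\colon l\in[k]\setminus\{j^*_{i_0},l_0\}\bigr\},
\]
all of which are labelled $0$ both under $\vec{j}^*$ and under $\vec{j}'$. Then $\vec{j}^*$ and $\vec{j}'$ are label-identical on $S_{\trn}\cup P$ iff $S_{\trn}\cap\{x_{i_0,j^*_{i_0}},x_{i_0,l_0}\}=\emptyset$; conditional on this event the two targets are indistinguishable, the poisoned VS inside group $i_0$ is reduced to $\{j^*_{i_0},l_0\}$, and any deterministic algorithm necessarily errs on exactly one of the pair.

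The main counting step I would carry out is then to average the per-pair $\tfrac{1}{2}$ error bound over the uniform prior on $\vec{j}^*$ and the uniform draw of $x_0$. For each choice of $(i_0,l_0)$ the number of indistinguishable target pairs is $\Theta(k^{d-1})$, each contributing $2/k^d$ to the probability mass on $\vec{j}^*$, and each surviving the indistinguishability event with probability $(1-2p)^m\geq e^{-2mp(1+o(1))}$. Summing over the $dk$ group points with mass $p$ each gives
\[
\E[\atk]\;\geq\;\epsilon'\cdot\frac{k-1}{2k}\cdot(1-2p)^m,
\]
and substituting $p=\epsilon'/(dk)$, $k=t+2$, and $\epsilon'=\Theta(\epsilon)$ shows that $\E[\atk]>\epsilon$ so long as $m<\tfrac{3td}{64\epsilon}$ after tracking the numerical constants.

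The step I expect to be the hardest is pinning down the constants in the pairing argument so that the bound on $m$ really comes out as $3td/(64\epsilon)$ rather than just $\Omega(td/\epsilon)$: one has to choose $\epsilon'$ carefully relative to $\epsilon$, handle the edge-case $l_0=j^*_{i_0}$ (Case A) separately from the generic case $l_0\neq j^*_{i_0}$ (Case B), and use a concrete involution on $[k]^d$ to convert the ``one of two targets errs'' dichotomy into a genuine lower bound on the expected attackable rate. The VC-dimension accounting, and the verification that the $4d$ free bits do not interfere with the attack (since they carry zero distributional mass), are then routine.
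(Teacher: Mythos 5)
There is a genuine gap in the proposal, and it arises precisely at the counting step you flag as the hardest. Your per-group gadget labels exactly one of the $k=t+2$ group points positive and the rest negative. Consequently, the algorithm that \emph{ignores the data and predicts $0$ everywhere} achieves $t$-point clean-label attackable rate exactly $dp$ in your construction (for every target it errs only on the $d$ distinguished positive points, one per group, and no poisoning changes this). Since $dp = \epsilon'/k$, this places a ceiling of $\epsilon'/k$ on $\min_{\A}\sup_{h^*}\E[\atk]$, which contradicts your claimed bound $\E[\atk] \geq \epsilon'\cdot\frac{k-1}{2k}(1-2p)^m$ as soon as $k \geq 4$ and $(1-2p)^m$ is close to $1$. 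The culprit in the averaging is the overlap in your pairing: for a fixed test point $x_{i_0,l_0}$, the $(k-1)k^{d-1}$ targets with $j^*_{i_0}\neq l_0$ all pair with the \emph{same} $k^{d-1}$ targets with $j'_{i_0}=l_0$, so the ``one of the two errs'' dichotomy cannot be summed independently over pairs; being careful, one only obtains $\Omega(\epsilon'/k)$, i.e., $\Omega(\epsilon'/t)$, which makes the final threshold $m=O(d/\epsilon)$ rather than the required $m<\frac{3td}{64\epsilon}$. There is also a secondary leak: for targets with $j^*_{i_0}=l_0$ (your Case~A) the attacker must commit to one alternate index $l'$, and that choice reveals to the algorithm whether the ambiguous pair is $\{l_0,l'\}$ or some $\{l_0,a\}$ with $a\neq l'$; this breaks the conditional indistinguishability your argument needs.

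The reason the paper's construction escapes the safe-default obstruction (and is not just a matter of constants) is that each ``group'' there has two ingredients your gadget lacks: a random label bit $j\in\{0,1\}$ deciding whether the distinguished circle is labeled positive or negative, so that no constant predictor is safe on the circle; and a \emph{continuum} of alternate circles so that the attacker can inject a reflected copy of the observed training points through the test point, producing a poisoned multiset that is genuinely symmetric between the two candidate targets (the reflection is an involution), hence the conditional error is exactly $1/2$ without any leakage. A combinatorial analogue would need both features; merely padding a one-hot class with free bits does not supply either, so the approach as written does not yield the theorem.
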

\paragraph{Proof sketch}
Consider $d$ disjoint spheres in $\mathbb{R}^3$ and the target function is chosen by randomly selecting a circle on each sphere. Then label each circle differently from the rest of the sphere the circle lies on. Specifically, we flip $d$ independent fair coins, one for each circle to decide whether the circle is labeled positive or negative. The data distribution puts probability mass $\frac{t}{8m}$ uniformly on each circle and $1-\frac{td}{8m}$ probability mass on an irrelevant point (not on any of the spheres). Then we can show that with constant probability, every unseen point on each circle can be attacked by an attacker similar to the one in the proof sketch of Theorem~\ref{thm:lblinear}. The detailed proof is included in Appendix~\ref{appx:lb_finite}.
\begin{rmk}
Actually, our algorithms above even work for $t$-point \emph{unclean}-label attackers (where the poison data are not necessarily labeled by the target function) as well, which indicates that cleanness of poison examples does not make the problem fundamentally easier in the worst-case over classes 
of a given VC dimension, in the $t$-point 
attack case (although it can potentially 
make a difference for particular algorithms 
or particular classes $\cH$).
\end{rmk}



\section{Discussion and future directions}
In this paper, we show the impossibility of robust learning in the presence of clean-label attacks for some hypothesis classes with bounded VC dimension, e.g., the class of linear separators, and the robust learnability of some hypothesis classes characterized by known complexity measures, e.g., finite star number. There are several interesting open questions.
\vspace{-5pt}
\begin{itemize}
    \setlength{\itemsep}{1pt}
    \setlength{\parsep}{0pt}
    \setlength{\parskip}{0pt}
    \item The first question is what are necessary and sufficient conditions for $(\epsilon,\delta)$-robust learnability. Finite star number is a sufficient but not necessary condition. Here is an example where the instance space is $\cX=\NN$ and the hypothesis class is $\cH=\{\ind{i}|i\in \NN\}\cup \{0\}$. The star number of $\cH$ is $\mathfrak{s}=\infty$, but $\cH$ is robust learnable since $\vcd(\cH)=1$. One intriguing possible complexity measure is the largest number $k$ such that there is a set of distinct points $S =\{x_1,\ldots,x_k\}\in \cX^k$ and classifiers $\{h_0,\ldots,h_k\}$, where for any $i\in[k]$, there exists an involutory function $f_i:\cX\mapsto \cX$ (i.e., $f_i(f_i(x)) = x ,\forall x\in\cX$) such that $f_i(x_i) = x_i$ and $\DIS(\{h_0,h_i\})\cap \{S\cup f_i(S)\} = \{x_i\}$.
    
    \item For proper robust learning, we prove that any hypothesis class with infinite hollow star number $k_o=\infty$ is neither consistently properly $(\epsilon,\delta)$-robust learnable nor deterministically properly $(\epsilon,\delta)$-robust learnable. Compared with the fact that $k_o=\infty$ only brings an extra $\Omega(\log({1}/{\epsilon}))$ in the optimal PAC sample complexity~\citep{bousquet2020proper}, we see that the hollow star number has a dramatically larger impact on proper robust learnability. On the other hand, finite hollow star number does not suffice for robust learnability (e.g., \citealp{bousquet2020proper}, show linear classifiers on $\mathbb{R}^n$ have $k_o = n+2$), and it is unclear what is the necessary and sufficient condition for proper robust learnability.
    
    \item For linear classifiers with margin $\gamma>0$, the lower bound of the sample complexity presented in Section~\ref{sec:linear} ignores the dependence on $\gamma$. For the two learners introduced in Section~\ref{sec:linear}, the one using the covering set has sample complexity of $O(n(2/\gamma)^n\log(1/\gamma))$ and the other one designed for the $2$-dimension has sample complexity of $O(\log(1/\gamma)\log\log(1/\gamma))$. There is a huge gap between the lower bound and the upper bound and thus far, the optimal dependence on $\gamma$ remains unclear.
    
    \item For finite-point attacks, we construct a hypothesis class such that the $t$-point clean-label attackable rate is $\Omega(\frac{t}{m})$ in the proof of Theorem~\ref{thm:lbfnt} and Algorithm~\ref{alg:impropert} achieves $O(\frac{t\log(m)}{m})$ attackable rate. It is unclear to us for what kind of hypothesis class, there is an algorithm able to achieve $o(\frac{t}{m})$ attackable rate. At the same time, we are curious about its connection to $(\epsilon,\delta)$-robust learnablility. Notice that in all the proofs of the negative results in this paper, the attacker we construct never injects more than $m$ poison examples. This triggers the following suspicion: are infinite-point attackers strictly more powerful than $m$-point attackers? Specifically, we have the following conjecture.
    \begin{conj}[infinite to finite]
    For any hypothesis class $\cH$, for every target function $h^*\in\cH$, data distribution $\cD$ over $D_{h^*}$, there exist a pair of constants $c,c'>0$ such that for any $m>0$, any training data $S_\trn\in D_{h^*}^m$ and any algorithm $\cA$, $\atk_\cD(h^*, S_\trn,\A)\geq c$ iff  $\atk_\cD(m,h^*, S_\trn,\A)\geq c'$.
    \end{conj}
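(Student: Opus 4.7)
My plan is to separate the conjectured iff into its two implications, with constants $c$ and $c'$ that may depend on $(\cH, h^*, \cD)$ but not on $m$, $S_\trn$, or $\A$. Since every $m$-point clean-label attacker is also a clean-label attacker, we have $\atk_\cD(h^*, S_\trn, \A) \geq \atk_\cD(m, h^*, S_\trn, \A)$ pointwise, so the implication ``$\atk_\cD(m,h^*,S_\trn,\A) \geq c'$ implies $\atk_\cD(h^*,S_\trn,\A) \geq c$'' is immediate as soon as $c \leq c'$. The substantive content is therefore the opposite direction: if the unbounded-budget attacker achieves rate at least $c$, then the $m$-point attacker achieves rate at least $c'$, for some $c' > 0$ depending only on $(\cH, h^*, \cD)$.

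To attack this substantive direction, I would pursue a compression argument. Suppose $\Adv^*$ achieves $\atk_\cD(h^*, S_\trn, \A, \Adv^*) \geq c$. For each attackable test instance $x$, I would try to select a sub-multiset $\Adv'(h^*, S_\trn, x) \subseteq \Adv^*(h^*, S_\trn, x)$ of size at most $m$ that still forces $\A(S_\trn \cup \Adv'(h^*, S_\trn, x), x) \neq h^*(x)$. The key heuristic is exactly what the authors highlight in the paragraph preceding the conjecture: in every lower-bound construction in the paper (the reflection attacker in Theorem~\ref{thm:lblinear}, the SVM attack in Theorem~\ref{thm:linSVM}, and the multi-sphere construction in Theorem~\ref{thm:lbfnt}) the attacker gets by with at most $m$ poison points. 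A weaker variant would suffice: succeed in compressing on only a constant fraction of the attackable region, losing only a constant factor in the attackable rate, so that $c' = \Omega(c)$.

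The main obstacle is the complete generality of $\A$. While the setup guarantees that $\A$ treats its input as an unordered multiset (so permutation symmetry is automatic), nothing prevents $\A$ from depending on exotic features of the poisoned multiset, such as its total cardinality modulo a large integer, or measure-theoretic functionals that change discontinuously once the poison set size exceeds $m$. Without additional structural assumptions (a sample compression scheme for $\A$, finite VC dimension of $\cH$, or some regularity in poison-set cardinality), there is no canonical reduction of $\Adv^*(h^*, S_\trn, x)$ to $m$ representatives. My first concrete milestone would be the restricted setting where $\cH$ has finite VC dimension and $\A$ is an ERM-like learner, where Sauer-type arguments should identify $O(m)$ decisive poison points from $\Adv^*(h^*, S_\trn, x)$. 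In parallel, given how strong the conjecture is as literally stated, I would also search for a pathological deterministic $\A$ that exploits large-cardinality features of the poison multiset and thereby forces the infinite attacker to be strictly more powerful, which would refute the conjecture outright.
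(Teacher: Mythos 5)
This is stated in the paper as a \emph{conjecture}, not a theorem; the authors offer no proof and explicitly flag it as an open question in the ``Discussion and future directions'' section. So there is no in-paper argument to compare against, and the appropriate standard is whether your proposal constitutes a sound plan of attack on an open problem rather than whether it matches a known proof.

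With that said, your decomposition is correct and captures the actual content of the conjecture. The direction ``$\atk_\cD(m,h^*,S_\trn,\A)\geq c'$ implies $\atk_\cD(h^*,S_\trn,\A)\geq c$'' is indeed immediate from $\atk_\cD(h^*,S_\trn,\A)\geq \atk_\cD(m,h^*,S_\trn,\A)$ once $c\leq c'$, since an $m$-point clean-label attacker is a special case of an unrestricted clean-label attacker. You correctly identify the other direction (compressing an arbitrary successful poison multiset down to at most $m$ points while retaining a constant fraction of the attackable mass) as the substantive claim, and you correctly identify why it is hard: the conjecture quantifies over \emph{all} algorithms $\A$, which rules out any argument that leans on structural properties of $\A$ such as being an ERM, having a sample compression scheme, or depending only on the consistent version space. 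Your observation that a pathological $\A$ could key its prediction on, say, the cardinality of the input multiset is exactly the kind of adversarial-algorithm construction that would need to be ruled out (or exploited to refute the conjecture). Your suggestion to first prove the restricted case (finite VC dimension, ERM-like $\A$) and in parallel hunt for a counterexample exploiting cardinality-sensitive $\A$ is a sensible research plan; just be aware that establishing the restricted case would not settle the conjecture as stated, and a counterexample would have to respect the constraint that the attackable rate for the unrestricted attacker is bounded away from zero by a constant independent of $m$ -- which is a nontrivial consistency requirement on any candidate pathological $\A$.
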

    Assuming that this conjecture holds, hence for any hypothesis class $\cH$, if there exists an algorithm $\A$ able to $(t,\epsilon,\delta)$-robustly learn $\cH$ with attackable rate $o(\frac{t}{m})$, then $\cH$ is $(m,\epsilon,\delta)$-robust learnable and thus, $(\epsilon,\delta)$-robust learnable.
     
    \item Another open question is whether abstention helps. Considering the case where the algorithm is allowed to abstain on $\epsilon$-fraction of inputs if the algorithm detects abnormality. That is to say, the algorithm outputs a selective classifier $(\hat h, \CR(S_\trn))$, where the prediction hypothesis $\hat h$ is a map from $\cX$ to $\cY$ and $\CR(S_\trn)\subseteq \cX$ is the confidence region of the prediction. The algorithm predicts $\A(S_\trn,x) = \hat h(x)$ if $x\in \CR(S_\trn)$ and $\A(S_\trn,x) = \perp$ if $x\notin \CR(S_\trn)$, where $\perp$ means the algorithm predicts ``I don't know''. Then for any deterministic algorithm, we say a test instance $x\in \cX$ is attackable if there is a clean-label attacker such that $x$ is predicted incorrectly as well as $x$ is in the confidence region, i.e.,
    \[\A(S_\trn\cup \Adv(h^*,S_\trn,x),x)\neq h^*(x) \qquad\&\qquad x\in \CR(S_\trn\cup \Adv(h^*,S_\trn,x))\,.\]
    We define the event $\cE(h^*,S_\trn,\cA,\Adv,x,y) = \{\A(S_\trn\cup \Adv(h^*,S_\trn,x),x)\neq h^*(x) \cap x\in \CR(S_\trn\cup \Adv(h^*,S_\trn,x))\}$ and then define the selective attackable rate as
\[\sup\nolimits_\Adv \EEs{(x,y)\sim \cD,\A}{\ind{\cE(h^*,S_\trn,\cA,\Adv,x,y)}}.\]
We are curious about the sample complexity required to achieve $\epsilon$ selective attackable rate while keeping the probability mass of the confidence region $\PPs{(x,y)\sim \cD}{x\in \CR(S)}\leq \epsilon$ for any input $S\supseteq S_\trn$. 
\end{itemize}

\acks{This work was supported in part by the National Science Foundation under grant CCF-1815011 and by the Defense Advanced Research Projects Agency under cooperative agreement HR00112020003. Jian Qian acknowledges support of the ONR through grant
\# N00014-20-1-2336. The views expressed in this work do not necessarily reflect the position or the policy of the Government and no official endorsement should be inferred. Approved for public release; distribution is unlimited.}
\bibliography{Major}

\newpage
\appendix
\section{Proof of results in Section~\ref{sec:examples}}\label{appx:examples}
\subsection{Hypothesis class with VC dimension $d=1$}\label{appx:vcd1}
For any $f\in \cH$, let $\max^{\leq^\cH_f}_{x\in S_0}x$ denote the maximal element w.r.t. the partial ordering $\leq^\cH_f$ in any non-empty ordered finite set $S_0$, i.e., $\forall x'\in S_0, x'\leq^\cH_f \left(\max^{\leq^\cH_f}_{x\in S_0}x\right)$. Then for any arbitratrily chosen but fixed $f\in \cH$, the algorithm 
(originally proposed by \citealp{ben20152}) is described as follows.
\begin{algorithm}[H]\caption{Robust algorithm for $\cH$ with VC dimension $d=1$}\label{alg:vc1}
  \begin{algorithmic}[1]
    \STATE \textbf{input}: data $S$
    \STATE If every $(x,y) \in S$ has $y=f(x)$, \textbf{return} $\hat{h}=f$
    \STATE Let $x_m = \max^{\leq^\cH_f}_{(x,y)\in S, y\neq f(x)} x$
    \STATE $\hat{h}(x)=1-f(x)$ for $x\leq^\cH_f x_m$ and $\hat{h}(x)=f(x)$ otherwise
    \STATE \textbf{return} $\hat{h}$
  \end{algorithmic}
\end{algorithm}
By Lemma~5 of~\cite{ben20152}, $\leq^\cH_f$ for $d=1$ is a tree ordering. Thus, all points labeled differently by $f$ and $h^*$ should lie on one path, i.e., for every $x,x'\in\cX$, if $h^*(x)\neq f(x)$ and $h^*(x')\neq f(x')$, then $x\po x'$ or $x' \po x$. Due to this structure property of hypothesis class with VC dimenison $1$, adding clean-label attacking points can only narrow down the error region of Algorithm~\ref{alg:vc1}.
\begin{thm}
  For any $\cH$ with VC dimension $d=1$, Algorithm~\ref{alg:vc1} can $(\epsilon,\delta)$-robustly learn $\cH$ using $m$ samples, where 
  \[
    m = \left\lceil \frac{2\ln(1/\delta)}{\epsilon} \right\rceil\,.
    \]
\end{thm}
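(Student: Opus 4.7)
The plan is to reduce robust learning to classical PAC learning of the error region by proving that clean-label attacks can only \emph{shrink} the disagreement between $\hat h$ and $h^*$, and then to give an elementary ``top-of-a-chain'' tail bound inside the error region. First I unpack the structural consequences of $\vcd(\cH)=1$ through $\po$. Set $E^*:=\{x\in\cX: h^*(x)\neq f(x)\}$. By the definition of $\po$, $E^*$ is downward-closed (if $x\po x'$ and $x'\in E^*$, then $h^*(x')\neq f(x')$ forces $h^*(x)\neq f(x)$, so $x\in E^*$); and by Lemma~5 of \cite{ben20152} (quoted in the excerpt) the restriction of $\po$ to $E^*$ is a \emph{chain}. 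For any labeled multiset $S$ containing at least one $(x,y)$ with $y\neq f(x)$, let $x_m(S)$ denote the $\po$-maximum of such $x$'s. Algorithm~\ref{alg:vc1} outputs $\hat h$ that differs from $f$ exactly on $D(S):=\{x: x\po x_m(S)\}$, and since $x_m(S)\in E^*$ and $E^*$ is downward-closed, $D(S)\subseteq E^*$; the disagreement with $h^*$ is therefore exactly $E^*\setminus D(S)$.

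The robustness invariance is now immediate. A clean-label attacker $\Adv$ contributes only pairs $(x,h^*(x))$, and any such pair with $h^*(x)\neq f(x)$ has $x\in E^*$. Thus $x_m(S_\trn\cup\Adv)$ is a $\po$-maximum over a superset of the set defining $x_m(S_\trn)$, living entirely inside the chain $E^*$, so $x_m(S_\trn)\po x_m(S_\trn\cup\Adv)$, which gives $D(S_\trn)\subseteq D(S_\trn\cup\Adv)\subseteq E^*$, and consequently
\[
\atk_\cD(h^*,S_\trn,\A)\;\leq\;\err_\cD(\A(S_\trn)).
\]

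It then suffices to bound $\err_\cD(\A(S_\trn))$. If $\cP(E^*)\leq\epsilon$ we are done, so assume otherwise and define
\[
M:=\bigl\{z\in E^*: \cP(\{x\in E^*: z\po x,\ x\not\po z\})\leq\epsilon\bigr\},
\]
which is upward-closed in the chain $E^*$. A short continuity-of-measure argument (letting $z$ approach $\sup(E^*\setminus M)$ along $E^*\setminus M$ and using the nesting of the strict upper-tail sets) shows $\cP(M)\geq\epsilon$. If any training sample lands in $M$, then $x_m(S_\trn)\in M$ by upward-closure, so the error region $E^*\setminus D(S_\trn)$ has mass at most $\epsilon$. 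The failure probability is at most $(1-\cP(M))^m\leq (1-\epsilon)^m\leq e^{-\epsilon m}$, and $m=\lceil 2\ln(1/\delta)/\epsilon\rceil$ comfortably makes this at most $\delta$.

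The main obstacle I foresee is not any single heavy calculation but careful bookkeeping around the preorder $\po$: confirming it really yields a chain on $E^*$ (modding out $\po$-equivalent duplicates if needed), and handling the boundary of $M$ correctly in the presence of atoms or when the supremum of $E^*\setminus M$ is not attained. Once those edge cases are pinned down, the invariance step and the exponential-tail bound compose directly into the stated sample-complexity bound.
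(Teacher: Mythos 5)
Your proof is correct and follows the same two-part strategy as the paper's: (i) use the chain structure of $E^*=\{x: h^*(x)\neq f(x)\}$ under $\po$ to show that clean-label poisoning can only push $x_m$ upward, so $\atk_\cD(h^*,S_\trn,\A)\leq\err_\cD(\A(S_\trn))$, and (ii) bound the error via an upward-closed tail set that the training sample hits with high probability. The one substantive difference is in step (ii): the paper picks a \emph{minimal} upward-closed $X_\epsilon\subseteq X$ of mass $\geq\epsilon/2$ and then argues two cases --- if $\cP(X_\epsilon)\geq\epsilon$ the minimality forces an atom of mass $\geq\epsilon/2$ at the bottom of $X_\epsilon$ (so hitting that atom suffices), while if $\epsilon/2\leq\cP(X_\epsilon)<\epsilon$ hitting $X_\epsilon$ anywhere already gives $\atk<\epsilon$ --- whereas you define $M=\{z:F(z)\leq\epsilon\}$ directly by the strict-upper-tail criterion and handle everything in one shot. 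Your version is a bit slicker (and actually gives $m\geq\ln(1/\delta)/\epsilon$, a factor $2$ better than you claim), but it leans more heavily on the continuity-of-measure argument to get $\cP(M)\geq\epsilon$ exactly; the paper's deliberate $\epsilon/2$ slack is in effect a defensive way to sidestep precisely the boundary/atom issues you flagged at the end. Both arguments are sound modulo the same mild measure-theoretic care about limits along the chain.
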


\begin{proof}
  First, we prove that $X=\{x\in \cX|h^*(x)\neq f(x)\}$ is totally ordered by $\leq^\cH_f$. That is, for every $x,x'\in\cX$, if $h^*(x)\neq f(x)$ and $h^*(x')\neq f(x')$, then $x\po x'$ or $x' \po x$. If it is not true, then there exists $h_1,h_2\in\cH$ such that $h_1(x)\neq f(x), h_1(x')= f(x')$ and $h_2(x')\neq f(x'), h_2(x)= f(x)$. Then, $\{f,h^*,h_1,h_2\}$ shatters $\{x,x'\}$, which contradicts that $d=1$. Therefore the finite set $\{x|(x,y)\in S, h^*(x)\neq f(x)\}\subseteq \{x\in \cX|h^*(x)\neq f(x)\}$ is also an ordered set. If the set is not empty, $x_m$ in the algorithm is well-defined.
  
  Now note that either $\hat{h}=f$ 
  or else $x_m$ is defined and then every 
  $x$ with $\hat{h}(x) \neq f(x)$ has 
  $x \leq^\cH_f x_m$, which implies 
  $h^*(x) \neq f(x)$ as well (since $h^*(x_m) \neq f(x_m)$).
  In particular, if every $(x,y) \in S_\trn$ 
  has $y=f(x)$ then 
  the attackable region $\ATK(h^*, S_\trn,\A,\Adv) \subseteq X$.
  Otherwise let 
  $x_{S_\trn} = \max^{\leq^\cH_f}_{(x,y)\in S_\trn, y\neq f(x)} x$, 
  the maximal element in $\{x|(x,y)\in S_\trn, h^*(x)\neq f(x)\}$, 
  we would have that 
  $\ATK(h^*, S_\trn,\A,\Adv) \subseteq \{x|x_{S_\trn}\po x, h^*(x)\neq f(x)\}$.
  
  In particular, if 
  $\cP_\cD(X)\leq \epsilon$ 
  the above facts imply 
  $\atk_\cD(h^*, S_\trn,\A) \leq \epsilon$.
  Otherwise if $\cP_\cD(X) > \epsilon$, 
  then we let $X_\epsilon \subseteq X$ be any  minimal set such that $\cP_\cD(X_\epsilon)\geq\frac{\epsilon}{2}$ and for every $x'\in X\setminus X_\epsilon$ and every $x\in X_\epsilon$, $x' \po x$. If $\cP_\cD(X_\epsilon)\geq \epsilon$, there exists an element $x\in X_\epsilon$ with probability mass at least $\frac{\epsilon}{2}$. When $m \geq \frac{2\ln(1/\delta)}{\epsilon}$, with probability at least $1-\delta$, $x$ is in $S_\trn$ and therefore 
  $\ATK(h^*, S_\trn,\A,\Adv) \subseteq X_\epsilon \setminus \{x\}$, so that 
  $\atk_\cD(h^*, S_\trn,\A) \leq \frac{\epsilon}{2}$. Otherwise, if $\frac{\epsilon}{2} \leq \cP_\cD(X_\epsilon)< \epsilon$, then as long as $S_\trn$ contains at least one example from $X_\epsilon$, then $\ATK(h^*, S_\trn,\A,\Adv) \subseteq X_\epsilon$, so that $\atk_\cD(h^*, S_\trn,\A) \leq \cP_\cD(X_\epsilon)<\epsilon$. Since $S_\trn$ contains an example from $X_\epsilon$ with probability at least  $1 - (1-\frac{\epsilon}{2})^m$, when $m \geq \frac{2\ln(1/\delta)}{\epsilon}$ we have that with probability at least $1-\delta$, $\atk_\cD(h^*, S_\trn,\A) \leq \epsilon$.
\end{proof}


\vspace{-20pt}
\subsection{Union of intervals}\label{appx:unionintvls}
\begin{thm}
  The algorithm described in Section~\ref{sec:examples} can $(\epsilon,\delta)$-robustly learn union of intervals $\cH_k$ using $m$ samples, where
  \[
    m= O\!\left(\frac{1}{\epsilon}(k\log(1/\epsilon)+\log(1/\delta))\right)\,.
    \]
\end{thm}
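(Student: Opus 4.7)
The plan is to argue that the union $A$ of $\ATK(h^*,S_\trn,\cA,\Adv)$ over all clean-label attackers $\Adv$ admits a clean description as a disjoint union of at most $3k+1$ intervals, each containing no training sample from $S_\trn$, and then to apply the $\epsilon$-net theorem for the class of unions of $O(k)$ intervals. Write $h^*=\ind{\cup_{l=1}^{k^*}(a_l,b_l)}$ with $k^*\leq k$ and $a_1<b_1<a_2<\cdots<b_{k^*}$, and let $G_0,G_1,\ldots,G_{k^*}$ denote the $k^*+1$ negative ``gaps''.

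First I would exploit monotonicity: appending clean-label negatives to $S_\trn$ can only break positive runs and shrink $X$, while appending clean-label positives can only merge or create runs and extend $X$. From this I deduce the following characterization. For $x_0\in(a_l,b_l)\setminus S_{\trn,\cX}$, the point $x_0$ is attackable iff at least one of $(a_l,x_0)$, $(x_0,b_l)$ contains no training positive; training positives on both sides pin $x_0$ inside a positive run's convex hull no matter what the attacker does (he cannot inject negatives inside $(a_l,b_l)$), whereas if one side is empty the attacker severs any bridge that spans $x_0$ across an adjacent gap by planting negatives there. For $x_0\in G_l\setminus S_{\trn,\cX}$, the point $x_0$ is attackable iff $G_l$ contains no training negative (and $G_l$ has positive target intervals on both flanks): a training negative in $G_l$ blocks every would-be bridge containing $x_0$ because the bridge endpoints must lie in positive target intervals, all of which are outside $G_l$; if $G_l$ is empty the attacker plants positives in the flanking intervals to bridge $x_0$.

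Consequently, letting $L_l$ and $R_l$ be the leftmost and rightmost positives of $S_\trn$ inside $(a_l,b_l)$ (with the convention that all of $(a_l,b_l)$ is attackable when that interval contains no training sample),
\[
A \;\subseteq\; \bigcup_{l=1}^{k^*}\bigl((a_l,L_l)\cup(R_l,b_l)\bigr)\;\cup\;\bigcup_{l:\,S_\trn\cap G_l=\emptyset} G_l,
\]
a union of at most $2k^*+(k^*+1)\leq 3k+1$ intervals. Each component is free of training samples: $(a_l,L_l)$ and $(R_l,b_l)$ lie inside a positive target interval and contain no positive by choice of $L_l,R_l$ and no negative by realizability, while each listed $G_l$ contains no negative by construction and no positive by realizability.

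To finish, I invoke the $\epsilon$-net theorem for the class $\cF_{3k+1}$ of unions of at most $3k+1$ intervals in $[0,1]$, whose VC dimension is $O(k)$. There is an absolute constant $c$ such that whenever
\[
m\;\geq\;\frac{c}{\epsilon}\left(k\log\frac{1}{\epsilon}+\log\frac{1}{\delta}\right),
\]
with probability at least $1-\delta$ the set $S_{\trn,\cX}$ is an $\epsilon$-net for $\cF_{3k+1}$, i.e.\ every $F\in\cF_{3k+1}$ with $\cP(F)\geq\epsilon$ meets $S_{\trn,\cX}$. Since the right-hand side of the inclusion above is in $\cF_{3k+1}$ yet disjoint from $S_{\trn,\cX}$, its mass is strictly less than $\epsilon$, and hence $\atk_\cD(h^*,S_\trn,\cA)\leq\cP(A)<\epsilon$. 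The main technical hurdle is the attackability characterization — especially verifying that a negative $x_0$ is attackable precisely when its own gap is empty of training negatives, a consequence of the attacker being unable to inject negatives into any positive target interval and thus unable to dismantle bridges built there.
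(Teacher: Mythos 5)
Your proof is correct and reaches the same $O\!\bigl(\tfrac{1}{\epsilon}(k\log(1/\epsilon)+\log(1/\delta))\bigr)$ bound, but it takes a genuinely different route from the paper. The paper avoids characterizing the attackable region directly: it constructs two sample-dependent \emph{consistent} classifiers $h_c^+$ (the union of the minimal closed positive intervals $c_i^+ = [L_i,R_i]$) and $h_c^-$ (the complement of the union of minimal closed negative intervals $c_i^-$), proves the implication that every attackable point lies in the error region of $h_c^+$ or $h_c^-$, and then bounds $\err(h_c^+)+\err(h_c^-)\le\epsilon$ by uniform convergence over $\bar\cH_k$ (VC dimension $2k$). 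You instead prove a tight two-sided characterization of the attackable region as an explicit union of $O(k)$ training-sample-free intervals and invoke the $\epsilon$-net theorem for unions of $O(k)$ intervals. The two are close in spirit — indeed, the union of error regions of $h_c^+$ and $h_c^-$ is exactly $\bigcup_l\bigl((a_l,L_l)\cup(R_l,b_l)\bigr)\cup\bigcup_l(G_l\setminus c_l^-)$, a slightly larger superset of your region — but the paper only needs the forward (easier) implication ``attackable $\Rightarrow$ in an error region,'' whereas you establish the full iff. That iff is not needed for the bound, so your argument carries more burden than necessary; on the other hand it gives an exact description of which points are attackable, which is a pleasant byproduct. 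Your $\epsilon$-net application is sound: the net theorem is uniform over the class, so it correctly handles the sample-dependence of your region, which is a member of the class of unions of at most $3k+1$ intervals (VC dimension $O(k)$) disjoint from $S_{\trn,\cX}$.

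Two small points to tighten in your writeup: first, the end gaps $G_0$ and $G_{k^*}$ are never attackable (no positive target interval on one flank), so including them in your union is harmless but unnecessary; second, when verifying the ``iff'' for a positive $x_0\in(a_l,L_l)$ you should spell out that the attacker's severing negative can be placed anywhere in the (nonempty, negatively-labeled) closure of $G_{l-1}$, which handles the $a_l=0$ edge case where the region to the left of $(a_l,b_l)$ degenerates.
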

\begin{proof}
  We denote the target function by $h^* = \ind{\cup_{i=1}^{k^*}(a_{2i-1},a_{2i})}$ with $0=a_0\leq a_1\leq \ldots\leq a_{2k^*+1}=1$ and $a_{2i-1}\neq a_{2i},\forall i\in [k^*]$ for some $0 \leq k^*\leq k$. In the following, we will construct two classifiers consistent with the training set and then prove that the attackable rate of our algorithm is upper bounded by the sum of the error rates of these two classifiers. For any $i\in[k^*]$, we define $c_i^+$ as the minimum consistent positive interval within $(a_{2i-1},a_{2i})$, i.e.,
  \begin{align*}
      c_i^+=
      \left\{
      \begin{array}{cl}
       \left[\min\limits_{x\in (a_{2i-1},a_{2i}):(x,y)\in S_\trn}x,\max\limits_{x\in (a_{2i-1},a_{2i}):(x,y)\in S_\trn}x\right] 
       & \quad\text{if }S_\trn\cap (a_{2i-1},a_{2i})\times \cY\neq \emptyset\,,\\
     \emptyset      & \quad\text{otherwise}\,.
      \end{array}
      \right.
  \end{align*}
  Similarly, for $i=0,\ldots,k^*$, we define $c_i^-$ as the minimum consistent negative interval within $[a_{2i},a_{2i+1}]$. Since $x=0$ and $x=1$ are labeled as $0$ by every hypothesis in $\cH_k$, we would like $c_0^-$ to include $x=0$ and $c_{k^*}-$ to include $x=1$. Then we denote by $\bar {S_\trn} = S_\trn\cup \{(0,0),(1,0)\}$ and then define $c_i^-$ as
  \begin{align*}
      c_i^-=
      \left\{
      \begin{array}{cl}
       \left[\min\limits_{x\in [a_{2i},a_{2i+1}]:(x,y)\in \bar {S_\trn}}x,\max\limits_{x\in [a_{2i},a_{2i+1}]:(x,y)\in \bar {S_\trn}}x\right] 
       & \quad\text{if }(\bar {S_\trn})\cap [a_{2i},a_{2i+1}]\times \cY\neq \emptyset\,,\\
     \emptyset      & \quad\text{otherwise}\,.
      \end{array}
      \right.
  \end{align*}  
  Let us define two classifiers: $h_c^+ = \ind{\cup_{i=1}^{k^*}c_i^+}$ and $h_c^- = 1- \ind{\cup_{i=0}^{k^*}c_i^-}$. Then we extend $\cH_k$ to $\bar{\cH}_k = \cup_{k'\leq k}\{\ind{\cup_{i=1}^{k'} [a_i,b_i]}|0\leq a_i< b_i\leq 1,\forall i\in [k']\}\cup \cH_k$ by including union of closed intervals. Since both $h_c^+,h_c^-\in \bar{\cH}_k$ are consistent with $S_\trn$ and the VC dimension of $\bar{\cH}_k$ is $2k$, by classic uniform convergence results \citep{vapnik:74,blumer1989learnability}, for any data distribution $\cD$, with probability at least $1-\delta$ over $S_\trn\sim \cD^m$, $\err(h_c^+)\leq \frac{\epsilon}{2}$ and $\err(h_c^-)\leq \frac{\epsilon}{2}$ where $m=O(\frac{1}{\epsilon}(2k\log(1/\epsilon)+\log(1/\delta)))$.


  It is easy to see that the algorithm (even under attack) will always predicts $1$ over $c_i^+$ as the attacker cannot add negative instances into $c_i^+$. Then for any attacker $\Adv$ and any $i\in[k^*]$, for any $x\in \ATK(h^*, S_\trn,\A,\Adv)\cap (a_{2i-1},a_{2i})$, we will have $x\notin c_i^+$, which is classified $0$ by $h_c^+$. Therefore, $\ATK(h^*, S_\trn,\A,\Adv)\cap \{x|h^*(x)=1\}\subseteq \{x|h_c^+(x)=0,h^*(x)=1\}$. We can prove a similar result for $\ATK(h^*, S_\trn,\A,\Adv)\cap \{x|h^*(x)=0\}$. If the algorithm (under attack) predicts $1$ on any $x\in[a_{2i},a_{2i+1}]$ for $i=1,\ldots,k^*-1$, then $c_i^-=\emptyset$ and $h_c^-(x)=1\neq h^*(x)$. Note that the algorithm always correctly labels points in $[0,a_1]$ and $[a_{2k^*},1]$ as there are no positively-labeled points in these two intervals. Therefore, $\ATK(h^*, S_\trn,\A,\Adv)\cap \{x|h^*(x)=0\}\subseteq \{x|h_c^-(x)=1,h^*(x)=0\}$. Then for any point in the attackable region, it is either in the error region of $h_c^+$ or in the error region of $h_c^-$. That is, $\ATK(h^*, S_\trn,\A,\Adv)\subseteq \{x|h_c^+(x)\neq h^*(x)\}\cup \{x|h_c^-(x)\neq h^*(x)\}$. Hence, the attackable rate $\atk(h^*, S_\trn,\A)\leq \err(h_c^+)+\err(h_c^-)\leq \epsilon$.
\end{proof}


\vspace{-20pt}
\subsection{Hypothesis class with infinite hollow star number}\label{appx:hollow}
\begin{thm}\label{thm:hollow}
  For any hypothesis class $\cH$ with hollow star number $k_o\geq 3$, for any consistent proper learner $\A$, there exists a target function $h^*$ and a data distribution $\cD$ such that if $m\leq \floor{(k_o-1)/2}$, then $\EEs{S_\trn\sim \cD^m}{\atk(h^*,S_\trn,\A)}\geq 1/4$ and $\PPs{S_\trn\sim \cD^m}{\atk(h^*,S_\trn,\A)>1/8}\geq 1/7$.
\end{thm}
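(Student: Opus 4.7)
The plan is to follow the Bayesian construction already outlined in the excerpt and make precise the ``indistinguishability of $h_{i^*}$ and $h_{j_0}$'' claim via a coupling/symmetry argument. Take a hollow star set $\{(x_1,y_1),\ldots,(x_k,y_k)\}$ of size $k:=k_o$ if $k_o<\infty$, or any $k\geq 2m+1$ otherwise, with witnesses $h_1,\ldots,h_k\in\cH$ satisfying $h_i(x_j)=y_j$ for $j\neq i$ and $h_i(x_i)=1-y_i$. Draw $i^*\sim\Unif([k])$, set the target $h^*:=h_{i^*}$, and let the marginal $\cD_{i^*,\cX}=\Unif(\{x_j:j\neq i^*\})$ with labels given by $h^*$. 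Fix the attacker that, on an unseen test point $x_{j_0}$, injects $\{(x_l,y_l):l\in[k]\setminus\{i^*,j_0\}\}$; all these labels agree with $h_{i^*}$ since $h_{i^*}(x_l)=y_l$ for $l\neq i^*$, so the attack is indeed clean-label.

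Next I would prove the following indistinguishability lemma. Condition on the ``unseen'' event $\{x_0=x_{j_0}\notin S_\trn\}$ (note $j_0\neq i^*$ automatically since $x_{i^*}$ has zero mass). Then $S_\trn$ is supported on $\{x_l:l\neq i^*,j_0\}$, so the poisoned multiset $T=S_\trn\cup\{(x_l,y_l):l\neq i^*,j_0\}$ is consistent with both $h_{i^*}$ and $h_{j_0}$, which agree on $\{x_l:l\neq i^*,j_0\}$. A consistent proper $\hat h=\cA(T)\in\cH$ must therefore agree with $y_l$ for all $l\notin\{i^*,j_0\}$; but since the full labeled set $\{(x_l,y_l)\}_{l=1}^k$ is not realizable by $\cH$ (hollow star property), $\hat h$ must disagree with $y_l$ for some $l\in\{i^*,j_0\}$, yielding $\ind{\hat h(x_{i^*})\neq y_{i^*}}+\ind{\hat h(x_{j_0})\neq y_{j_0}}\geq 1$.

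Then the coupling step comes in. Under the ``unseen'' conditioning, the law of $S_\trn$ given $(i^*,j_0)=(a,b)$ is iid $\Unif(\{x_l:l\neq a,b\})$, which is symmetric in $(a,b)$. Hence the joint law of $(S_\trn,T,\hat h)$ for $(i^*,j_0)=(a,b)$ equals that for $(b,a)$; summing the indicator inequality over ordered pairs and dividing by $k(k-1)$ shows the conditional error probability, averaged over the Bayesian prior, is at least $1/2$. A direct computation gives $\PP{x_0\text{ unseen}}=((k-2)/(k-1))^m\geq 1/2$ for all $k\geq 3$ and $m\leq\lfloor(k-1)/2\rfloor$ (since $(1-1/n)^{n/2}$ is increasing in $n\geq 2$ and equals $1/2$ at $n=2$). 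Multiplying the two factors, $\E_{i^*,S_\trn}[\atk_{\cD_{i^*}}(h_{i^*},S_\trn,\cA)]\geq 1/4$, so some deterministic $i^*$ witnesses $\E_{S_\trn}[\atk]\geq 1/4$; applying reverse Markov to $\atk\in[0,1]$ then yields $\PP{\atk>1/8}\geq (1/4-1/8)/(1-1/8)=1/7$ for that same $h^*=h_{i^*}$.

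The main obstacle I expect is making the coupling step fully rigorous at the multiset level: one must verify that conditioning on ``$x_{j_0}$ unseen'' produces identical conditional laws of $S_\trn$ under the two labelings $(a,b)$ and $(b,a)$, so that a consistent proper learner (which may in principle depend on multiplicities) really outputs the same $\hat h$ in both scenarios. The saving observation is that the only difference between $\cD_a$ and $\cD_b$ is whether $x_a$ or $x_b$ is excluded from the support, and conditioning on ``$x_{j_0}$ is not sampled'' removes the distinguishing point in each case, producing the common conditional distribution $\Unif(\{x_l:l\neq a,b\})$.
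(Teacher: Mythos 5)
Your proof is correct and follows essentially the same route as the paper: the same Bayesian prior over hollow-star witnesses, the same clean-label attacker injecting the complementary points, the same unseen-probability bound $\left(\frac{k-2}{k-1}\right)^m\geq\frac{1}{2}$ for $m\leq\lfloor(k-1)/2\rfloor$, the same $(a,b)\leftrightarrow(b,a)$ symmetrization forcing a conditional misclassification probability of at least $\frac{1}{2}$, and the same reverse Markov at the end. The ``coupling step'' you make explicit is precisely the expectation manipulation carried out in the paper's displayed chain of inequalities.
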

\begin{proof}
  For $\cH$ with hollow star number $k_o$, there is an unrealizable set $S=\{(x_1,y_1),\ldots, (x_{k_o},y_{k_o})\}$, such that for each $i\in \{1,\ldots,k_o\}$, there exists $h_i\in \cH$ with $\{j\in[k_o]|h_i(x_j)\neq y_j\} = \{i\}$. The target function $h^*=h_{i^*}$. For notation simplicity, we let $i$ represent point $x_i$ and then $h(i)=h(x_i)$ for any classifier $h$ and any $i\in [k_o]$. The marginal data distribution $\cD_\cX$ is a uniform distribution over $[k_o]\setminus\{i^*\}$. We define an attacker which injects all points except $i,i^*$ to attack point $i$ if it is unseen. Specifically, let $S_{i^*,i}=\{(j,h_{i^*}(j))|j\in[k_o]\setminus\{i^*,i\}\}$ denotes the set of all points excluding $i,i^*$ and we define the attacker as
  \begin{align*}
    \Adv(h_{i^*},S_\trn,i)=\begin{cases}
      S_{i^*,i}& \text{if } (i,y_i)\notin S_\trn\,,\\
      \emptyset & \text{else}\,. 
    \end{cases}
  \end{align*}
  Given the input data set $S_\trn\cup \Adv(h_{i^*},S_\trn,i)$, if $(i,y_i)$ is not sampled in the training set, then any algorithm cannot tell whether the true target function is $h_i$ or it is $h_{i^*}$. Since $\{(1,y_1),\ldots,(k_o,y_{k_o})\}$ is unrealizable, any proper classifier consistent with $S_{i,i^*}$ cannot predict $i$ as $h_{i^*}(i)=y_i$ and $i^*$ as $h_i(i^*) =y_{i^*}$ at the same time. Let $S_\trn$ be $m\leq\floor{k_o/2}$ i.i.d. samples from $\cD$ and then we have
  \begin{align*}
    & \sup_{i^*\in [k_o]}\EEs{S_\trn\sim \cD^m}{\atk(h^*,S_\trn,\A)}\\
    \geq &\EEs{i^*\sim \Unif([k_o]), S_\trn\sim \cD^m}{\atk(h^*,S_\trn,\A)}\\
    \geq & \EEs{i^*\sim \Unif([k_o]), S_\trn\sim \cD^m,(i,y_i)\sim \cD,\A}{\ind{\A(S_\trn\cup S_{i^*,i}, i)\neq h_{i^*}(i)\cap i\notin S_{\trn,\cX}}}\\
    \geq &\EEs{i^*, i\sim \Unif([k_o]\setminus\{i^*\} ) }{\EEs{S_\trn\sim\cD^m,\A}{\ind{\A(S_\trn\cup S_{i^*,i}, i)\neq h_{i^*}(i)|i\notin S_{\trn,\cX}}}\cdot \PP{i\notin S_{\trn,\cX}}  }\\
    \geq &\frac{1}{(k_o-1)k_o}\sum_{i^*=1}^{k_o}\sum_{i\neq i^*} \EEs{S_\trn\sim\cD^m,\A}{\ind{\A(S_\trn\cup S_{i^*,i}, i)\neq h_{i^*}(i)|i\notin S_{\trn,\cX}}}\cdot\frac{1}{2}\\
    = &\frac{1}{2(k_o-1)k_o}\sum_{i^*=1}^{k_o}\sum_{i\neq i^*} \EEs{S_\trn\sim\Unif^m(S_{i^*,i}),\A}{\ind{\A(S_\trn\cup S_{i^*,i}, i)\neq h_{i^*}(i)}}\\
    = &\frac{\sum_{i^*<i} \EEs{S_\trn\sim\Unif^m(S_{i^*,i}),\A}{\ind{\A(S_\trn\cup S_{i^*,i}, i)\neq h_{i^*}(i)}+\ind{\A(S_\trn\cup S_{i^*,i}, i^*)\neq h_{i}(i^*)}}}{2(k_o-1)k_o}\\
    \geq &\frac{(k_o-1)k_o}{4(k_o-1)k_o}=\frac{1}{4}\,. 
  \end{align*}
  
  For the second part, by Markov's inequality, we have
  \begin{align*}
      &\PPs{S_\trn\sim \cD^m}{\atk(h^*,S_\trn,\A)>1/8}=
      1-\PPs{S_\trn\sim \cD^m}{\atk(h^*,S_\trn,\A)\leq 1/8}\\
      \geq & 1- \frac{1-\EE{\atk(h^*,S_\trn,\A)}}{7/8}=\frac{1}{7}\,,
  \end{align*}
  which completes the proof.
\end{proof}

\begin{thm}
If $k_o = \infty$, then 
for any consistent \emph{proper} learning algorithm $\A$, for every $m \in \nats$, $\exists h^* \in \cH$ and distribution 
$\cD$ on $D_{h^*}$ such that $\E_{S_\trn \sim \cD^m}[ \atk_{\cD}(h^*,S_\trn,\A) ] \geq 1/4$.
\end{thm}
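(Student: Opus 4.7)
The plan is to derive this as an immediate extension of Theorem~\ref{thm:hollow}, noting that although the latter is stated for finite $k_o$, its proof only uses the existence of a single hollow star set of some size $k \geq 3$ together with the sample-size inequality $m \leq \floor{(k-1)/2}$. Since $k_o = \infty$ asserts precisely that hollow star sets of every finite size exist, for any given $m$ I can produce a hollow star set of size $k \geq 2m+1$, which automatically forces $m \leq \floor{(k-1)/2}$ and unlocks the full machinery of Theorem~\ref{thm:hollow} at that sample size.

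Concretely, I would fix $m \in \nats$, choose $k = 2m+1$, and use the assumption $k_o = \infty$ to obtain a set $S = \{(x_1,y_1),\ldots,(x_k,y_k)\}$ that is not realizable by $\cH$ together with witnessing hypotheses $h_1,\ldots,h_k \in \cH$ such that $h_i$ disagrees with $S$ only at index $i$. I then transplant the construction from the proof of Theorem~\ref{thm:hollow} to this $S$: pick $i^* \sim \Unif([k])$, set $h^* = h_{i^*}$, let $\cD_\cX$ be uniform on $\{x_j : j \in [k] \setminus \{i^*\}\}$, and deploy the attacker that injects $\{(x_j, h_{i^*}(x_j)) : j \in [k] \setminus \{i^*, i\}\}$ whenever the test point $x_i$ is not already in $S_\trn$. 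For any distinct pair $\{i,i^*\}$, the augmented dataset is simultaneously consistent with $h_{i^*}$ and with $h_i$, so a consistent proper learner outputs some $h \in \cH$ that cannot satisfy both $h(x_i)=y_i$ and $h(x_{i^*})=y_{i^*}$ (else $h$ would realize $S$). Averaging this observation over unordered pairs $\{i, i^*\}$ and over $S_\trn$, and using the Bernoulli bound $\PP{x_i \notin S_{\trn,\cX}} = (1 - 1/(k-1))^m \geq 1/2$ that holds for $m \leq \floor{(k-1)/2}$, yields $\E_{i^*} \E_{S_\trn}[\atk_{\cD_{i^*}}(h_{i^*}, S_\trn, \A)] \geq 1/4$; selecting the maximizing $i^*$ then produces a single deterministic pair $(h^*,\cD)$ witnessing the bound.

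No substantive obstacle is anticipated. The only point worth checking is that the proof of Theorem~\ref{thm:hollow} never invokes any maximality property of $k_o$ beyond $k_o \geq 3$ and the inequality $m \leq \floor{(k_o-1)/2}$; a quick inspection confirms that the argument depends only on the hollow-star-set property of the chosen $S$ and on the sample-size inequality, so it transfers verbatim to any sufficiently large finite hollow star set produced via $k_o = \infty$. In this sense the present statement is essentially a corollary: once $k_o = \infty$ removes any ceiling on $k$, the inequality $m \leq \floor{(k-1)/2}$ can be met for every $m$, and the $1/4$ lower bound on the expected clean-label attackable rate follows.
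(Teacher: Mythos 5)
Your proposal is correct and matches the paper's own argument essentially verbatim: the paper likewise observes that $k_o=\infty$ yields hollow star sets of arbitrarily large size $k_i\to\infty$, then invokes the proof of Theorem~\ref{thm:hollow} (which only uses the existence of one hollow star set of size $k\geq 2m+1$, not any maximality of $k_o$) to obtain the $1/4$ lower bound for every $m$. Your choice $k=2m+1$ and the accompanying probability estimate $(1-1/(k-1))^m\geq 1/2$ are exactly the quantities underlying the paper's appeal to $m\leq\lfloor(k_i-1)/2\rfloor$.
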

\begin{proof}
  For any hypothesis class with $k_o=\infty$, there exists a sequence of hollow star set $\{S_i\}_{i=1}^\infty$ with increasing size $\{k_i\}_{i=1}^\infty$ with $k_1\geq 3$. Then following the proof of Theorem~\ref{thm:hollow}, for any $m\leq \floor{(k_i-1)/2}$ for some $i$, there exists a target function and a data distribution such that the expected attackable rate is at least $1/4$ with sample size $m$. Since $k_i\rightarrow \infty$ as $i\rightarrow \infty$, this theorem is proved.
\end{proof}

\vspace{-20pt}
\subsection{Proof of Theorem~\ref{thm:opt-pac}}\label{appx:opt-pac}

Here we present the proof of Theorem~\ref{thm:opt-pac} 
establishing that any deterministic 
robust learner necessarily obtains a 
sample complexity with $O(1/\epsilon)$ 
dependence on $\epsilon$.

\begin{proof}[of Theorem~\ref{thm:opt-pac}]
Without loss of generality, we suppose 
$R(m) \leq 1$ and $R(m)$ is nonincreasing, 
since we can always replace it with 
$\sup_{m' \geq m} \min\{ R(m'), 1 \}$, 
which is monotone and inherits the other assumed 
properties of $R$.
For convenience, 
let us also extend the function $R(m)$ to non-integer values of $m$ by defining $R(\alpha) = R(\lfloor \alpha \rfloor)$, 
and defining $R(0) = 1$.
Also define $\Log(x) = \lceil \log_{2}(x) \rceil$ for any $x \geq 1$.

Fix any $h^* \in \cH$.
Since $\A$ is deterministic, note that for any finite multiset 
$S \subseteq D_{h^*}$ there is a set $\ATK_{S} \subseteq \cX$ corresponding to the points that would be attackable for $\A$ if $S_\trn = S$.
Moreover, we may note that the set $\ATK_{S}$ is \emph{non-increasing} in $S$ 
(subject to $S \subseteq D_{h^*}$), 
since adding any $(x,y) \in D_{h^*}$ to $S$ is equivalent to constraining the adversary to include these points in its attack set. 

Now we argue that $R\!\left(\frac{m}{\Log(1/\delta)}\right)$ 
is a $1-\delta$ confidence bound on 
$\atk_{\cD}(h^*,S_\trn,\A)$.
For any distribution $\cD$ on $D_{h^*}$, and any $\delta \in (0,1)$, 
if $m < \Log(1/\delta)$ then we trivially have $\atk_{\cD}(h^*,S_\trn,\A) \leq R\!\left(\frac{m}{\Log(1/\delta)} \right)$.
Otherwise, if $m \geq \Log(1/\delta)$, then letting $S_\trn \sim \cD^m$, 
letting $S_1$ be the first $\left\lfloor \frac{m}{\Log(1/\delta)} \right\rfloor$ elements of $S_\trn$, 
$S_2$ the next $\left\lfloor \frac{m}{\Log(1/\delta)} \right\rfloor$ elements of $S_\trn$, and so on up to $S_{\Log(1/\delta)}$, 
each $i \leq \Log(1/\delta)$ has, independently, probability at least $\frac{1}{2}$ of 
$\atk_{\cD}(h^*,S_i,\A) \leq R\!\left(\frac{m}{\Log(1/\delta)}\right)$.
In particular, this implies that, with probability at least $1 - (1/2)^{\Log(1/\delta)} \geq 1-\delta$, 
at least one $i \leq \Log(1/\delta)$ will satisfy this inequality.
Moreover, by the monotonicity property of $\ATK$, we know that 
$\ATK_{S_{\trn}} \subseteq \bigcap_{i \leq \Log(1/\delta)} \ATK_{S_{i}}$.
Thus, with probability at least $1-\delta$, 
\begin{align*} 
\atk_{\cD}(h^*,S_\trn,\A) 
& = \P_{(x,y) \sim \cD}( x \in \ATK_{S_{\trn}} ) 
\leq \min_{i \leq \Log(1/\delta)} \P_{(x,y) \sim \cD}( x \in \ATK_{S_i} ) 
\\ & = \min_{i \leq \Log(1/\delta)} \atk_{\cD}(h^*,S_i,\A) 
\leq R\!\left(\frac{m}{\Log(1/\delta)}\right).
\end{align*}

The remainder of the proof follows a familiar ``conditioning'' argument from the literature on log factors in the sample complexity of PAC learning 
\citep*[e.g.,][]{hanneke:thesis,hanneke2016refined}.
Fix any distribution $\cD$ over $D_{h^*}$.
We proceed by induction on $m$, establishing for each $m$ 
that $\forall \delta \in (0,1)$, for $S_{\trn} \sim \cD^m$, with probability at least $1-\delta$, 
$\atk_{\cD}(h^*,S_{\trn},\A) \leq \frac{c}{m}\Log\!\left(\frac{1}{\delta}\right)$, where $c$ is a finite $R$-dependent constant.
Note that this suffices to establish the theorem by taking $c_{R} = c$ 
(assuming base $2$ in the $\log$). 
The claim is trivially satisfied for $m < 3\Log\!\left(\frac{1}{\delta}\right)$, as the claimed bound is vacuous (taking any $c \geq 3$).
Now as an inductive hypothesis suppose $m \geq 3\Log\!\left(\frac{1}{\delta}\right)$ is such that, 
for every $m' < m$, for $S_\trn \sim \cD^{m'}$, 
for any $\delta \in (0,1)$,  
with probability at least $1-\delta$, 
$\atk_{\cD}(h^*,S_\trn,\A) \leq \frac{c}{m'}\Log\!\left(\frac{1}{\delta}\right)$.

Fix any $\delta \in (0,1)$ and let $S_\trn \sim \cD^m$.
Note that $\atk_{\cD}(h^*,S_\trn,\A) = \P_{(x,y) \sim \cD}( x \in \ATK_{S_{\trn}} )$.
Let $S_{\lfloor m/2 \rfloor}$ be the first $\lfloor m/2 \rfloor$ of the data points in $S_\trn$, 
and let $T = (S \setminus S_{\lfloor m/2 \rfloor}) \cap (\ATK_{S_{\lfloor m/2 \rfloor}} \times \cY)$: 
that is, $T$ are the samples in the last $\lceil m/2 \rceil$ points in $S_\trn$ 
that are in the attackable region when $\A$ has training set $S_{\lfloor m/2 \rfloor}$. 

Since, conditioned on $S_{\lfloor m/2 \rfloor}$ and $|T|$, the examples in $T$ 
are conditionally i.i.d.\ 
with each sample having distribution $\cD( \cdot | \ATK_{S_{\lfloor m/2 \rfloor}} \times \cY )$ on $D_{h^*}$, 
the property of $R(\cdot)$ established above 
implies that with conditional (given $S_{\lfloor m/2 \rfloor}$ and $|T|$) 
probability at least $1-\frac{\delta}{3}$, we have 
$\P_{(x,y) \sim \cD}( x \in \ATK_{T} | x \in \ATK_{S_{\lfloor m/2 \rfloor}} ) \leq R\!\left(\frac{|T|}{\Log(3/\delta)}\right)$. 
By the law of total probability, this inequality holds with (unconditional) probability at least $1-\frac{\delta}{3}$.

Furthermore, a Chernoff bound (applied under the conditional distribution given $S_{\lfloor m/2 \rfloor}$) 
and the law of total probability imply that, with probability at least $1-\frac{\delta}{3}$, 
if $\P_{(x,y) \sim \cD}( x \in \ATK_{S_{\lfloor m/2 \rfloor}} ) \geq \frac{16}{m} \ln\frac{3}{\delta}$, 
then $|T| \geq \P_{(x,y) \sim \cD}( x \in \ATK_{S_{\lfloor m/2 \rfloor}} ) \frac{m}{4}$.
Combining these two events with monotonicity of $R$, 
by the union bound, with probability at least $1-\frac{2}{3}\delta$, 
either 
$\P_{(x,y) \sim \cD}( x \in \ATK_{S_{\lfloor m/2 \rfloor}} )  < \frac{16}{m} \ln\frac{3}{\delta}$
or 
$\P_{(x,y) \sim \cD}( x \in \ATK_{T} | x \in \ATK_{S_{\lfloor m/2 \rfloor}} ) 
\leq R\!\left( \P_{(x,y) \sim \cD}( x \in \ATK_{S_{\lfloor m/2 \rfloor}} ) \frac{m}{4 \Log(3/\delta)} \right)$. 

Next, by monotonicity of $\ATK_S$, we have 
$\ATK_{S_{\trn}} \subseteq \ATK_{S_{\lfloor m/2 \rfloor}} \cap \ATK_{T}$.
Therefore, 
$\atk_{\cD}(h^*,S_\trn,\A)$ $\leq \P_{(x,y) \sim \cD}( x \in \ATK_{S_{\lfloor m/2 \rfloor}} ) \P_{(x,y) \sim \cD}( x \in \ATK_{T} | x \in \ATK_{S_{\lfloor m/2 \rfloor}} )$.
Thus, on the above event of probability at least $1-\frac{2}{3}\delta$,
either 
$\atk_{\cD}(h^*,S_\trn,\A)  < \frac{16}{m} \ln\frac{3}{\delta}$
or 
\begin{align*} 
\atk_{\cD}(h^*,S_\trn,\A) 
& \leq \P_{(x,y) \sim \cD}( x \in \ATK_{S_{\lfloor m/2 \rfloor}} ) R\!\left( \P_{(x,y) \sim \cD}( x \in \ATK_{S_{\lfloor m/2 \rfloor}} ) \frac{m}{4\Log(3/\delta)} \right) 
\\ & = \atk_{\cD}(h^*,S_{\lfloor m/2 \rfloor},\A) R\!\left( \atk_{\cD}(h^*,S_{\lfloor m/2 \rfloor},\A) \frac{m}{4\Log(3/\delta)} \right).
\end{align*}

By the inductive hypothesis, with probability at least $1-\frac{\delta}{3}$, we have that 
$\atk_{\cD}(h^*,S_{\lfloor m/2 \rfloor},\A) \leq \frac{c}{\lfloor m/2 \rfloor}\Log\!\left(\frac{3}{\delta}\right) \leq \frac{3 c}{m} \Log\!\left(\frac{3}{\delta}\right)$.
For any $\alpha \geq 1$, define $R'(\alpha) = \frac{1}{\alpha} \sup_{1 \leq \alpha' \leq \alpha} \alpha' R(\alpha')$,  
and note that $R(\alpha) \leq R'(\alpha)$ for all $\alpha \geq 1$, 
and $\alpha R'(\alpha)$ is nondecreasing in $\alpha \geq 1$.
Therefore, on the above event, 
\begin{align*}
& \atk_{\cD}(h^*,S_{\lfloor m/2 \rfloor},\A) R\!\left( \atk_{\cD}(h^*,S_{\lfloor m/2 \rfloor},\A) \frac{m}{4 \Log(3/\delta)} \right)
\\ & \leq \frac{3 c}{m} \Log\!\left(\frac{3}{\delta}\right) R'\!\left( \frac{3 c}{4} \right)
\leq \frac{9 c}{m} \Log\!\left(\frac{1}{\delta}\right) R'\!\left( \frac{3 c}{4} \right).
\end{align*}
Now note that $\lim_{\alpha \to \infty} R'(\alpha) = 0$. 
To see this, for the sake of contradiction, suppose $\exists \epsilon > 0$ and 
a strictly increasing sequence $\alpha_{t} \geq 1$ with $\alpha_{t} \to \infty$ such that $R'(\alpha_{t}) \geq \epsilon$, 
and let $\alpha'_{t}$ be any sequence with $1 \leq \alpha'_{t} \leq \alpha_{t}$ 
and $\frac{1}{\alpha_{t}} \alpha'_{t} R(\alpha'_{t}) \geq R'(\alpha_{t})/2 \geq \epsilon/2$.
If there exists an infinite subsequence $t_{i}$ with $\alpha'_{t_{i}}$ bounded above by some finite $\bar{\alpha}$, 
then $\lim_{i \to \infty} \frac{1}{\alpha_{t_{i}}} \alpha'_{t_{i}} R\!\left(\alpha'_{t_{i}}\right) \leq \lim_{i \to \infty} \frac{\bar{\alpha}}{\alpha_{t_{i}}}=0$: 
a contradiction.  Otherwise, we have $\alpha'_{t} \to \infty$, so that 
$\lim_{t \to \infty} \frac{1}{\alpha_{t}} \alpha'_{t} R(\alpha'_{t}) \leq \lim_{t \to \infty} R(\alpha'_{t}) = 0$: 
again, a contradiction.
Thus, since we have just established that $\lim_{\alpha \to \infty} R'(\alpha) = 0$, 
there exists a sufficiently large choice of $c$ 
for which $R'\!\left( \frac{3 c}{4} \right) \leq \frac{1}{9}$, 
so that  $\frac{9 c}{m} \Log\!\left(\frac{1}{\delta}\right) R'\!\left( \frac{3 c}{4} \right) \leq \frac{c}{m} \Log\!\left(\frac{1}{\delta}\right)$.

Altogether, by the union bound, we have 
established that 
with probability at least $1-\delta$, 
either \linebreak
$\atk_{\cD}(h^*,S_\trn,\A) < \frac{16}{m} \ln\frac{3}{\delta}$
or 
$\atk_{\cD}(h^*,S_\trn,\A) \leq \frac{c}{m} \Log\!\left(\frac{1}{\delta}\right)$.
Taking $c$ sufficiently large so that 
$c \geq 16 \ln(3e)$, 
both cases imply that
$\atk_{\cD}(h^*,S_\trn,\A) \leq \frac{c}{m} \Log\!\left(\frac{1}{\delta}\right)$.
The theorem now follows by the principle of induction.
\end{proof}

\vspace{-20pt}
\section{Proof of Theorem~\ref{thm:lblinear}}\label{appx:lb_linear}
In this section, we first formally prove the statement in the case of $n=3$, for which we already provided a proof sketch in Section~\ref{sec:linear}. In the proof sketch, we relax the definition of linear hypothesis class by allowing the decision boundary to be either positive or negative. Here, we adopt the convention that the boundary is only allowed to be positive. Then we prove the statement in the case of $n=2$, which requires a more delicate construction. Before proving Theorem~\ref{thm:lblinear}, we first introduce a lemma.

\begin{lm}\label{lmm:exp2hp}
For any hypothesis class $\cH$, any algorithm $\A$ and any $m>0$, if there exists a  universal constant $c>0$,
a distribution $\mu$ over $\cH$,
and a set of distributions $\cD(h)$ over $D_{h}$ for every $h\in \supp(\mu)$,
such that $\EEs{h\sim \mu, S_\trn \sim \cD(h)^m}{\atk_{\cD(h)}(h, S_\trn,\A)}\geq 2c$, then $\cH$ is not $(\epsilon,\delta)$-robust learnable.
\end{lm}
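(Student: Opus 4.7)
The plan is to apply a standard averaging-plus-Markov argument to convert the hypothesis's expected-rate lower bound into a high-probability lower bound that holds against \emph{any} learner, and then to exhibit explicit $(\epsilon,\delta)$ for which $\cM_\rbst(\epsilon,\delta) = \infty$.

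First, I would fix an arbitrary algorithm $\cA$ and sample size $m$, and apply the assumption of the lemma to obtain $\mu$ on $\cH$ together with distributions $\{\cD(h)\}_{h \in \supp(\mu)}$ such that
\[
\EEs{h \sim \mu,\, S_\trn \sim \cD(h)^m}{\atk_{\cD(h)}(h, S_\trn, \cA)} \;\geq\; 2c.
\]
Because the outer expectation is over $h \sim \mu$ of a non-negative quantity, an averaging step yields a specific target $h_0 \in \supp(\mu)$ whose inner expectation (over $S_\trn \sim \cD(h_0)^m$) is still at least $2c$. Since the attackable rate lies in $[0,1]$, applying Markov's inequality to $1 - \atk_{\cD(h_0)}(h_0, S_\trn, \cA)$ then gives
\[
\PPs{S_\trn \sim \cD(h_0)^m}{\atk_{\cD(h_0)}(h_0, S_\trn, \cA) \geq c} \;\geq\; \frac{c}{1-c}.
\]

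To conclude, I would take $\epsilon$ strictly less than $c$ (e.g.\ $\epsilon = c/2$) and $\delta$ strictly less than $c/(1-c)$ (e.g.\ $\delta = c/(2(1-c))$). Since $\{\atk_{\cD(h_0)}(h_0, S_\trn, \cA) \geq c\} \subseteq \{\atk_{\cD(h_0)}(h_0, S_\trn, \cA) > \epsilon\}$, the previous inequality upgrades to $\PPs{S_\trn}{\atk_{\cD(h_0)}(h_0, S_\trn, \cA) > \epsilon} > \delta$. Because the pair $(\cA, m)$ was arbitrary, no learner can guarantee attackable rate at most $\epsilon$ with probability at least $1-\delta$ at any finite sample size, which directly negates the condition in Definition~\ref{dfn:learnablity}; hence $\cM_\rbst(\epsilon,\delta) = \infty$ and $\cH$ is not $(\epsilon,\delta)$-robust learnable.

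I do not expect a substantive obstacle: the argument is a routine averaging-plus-Markov pattern. The two fine points to watch are (i) the hypothesis must be invoked separately for each $(\cA, m)$ pair, so the averaging witness $h_0$ is allowed to depend on both, and (ii) $\epsilon$ must be taken strictly below $c$ so that the strict probability inequality required by the definition of non-learnability is preserved.
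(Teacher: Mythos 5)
Your proof is correct, and it follows essentially the same strategy as the paper's: isolate a single hard target from the prior $\mu$, then convert the lower bound on the expected attackable rate into a lower bound on the tail probability via Markov's inequality applied to $1 - \atk$, and finally exhibit fixed $(\epsilon,\delta)$ depending only on $c$ for which no $(\cA,m)$ can succeed. The only cosmetic difference is in how the hard target is extracted: the paper takes a near-supremum $h^*$ (conceding an $\epsilon = c$ slack so that $\E_{S_\trn}[\atk] \geq c$), whereas you use the fact that the $\mu$-average of a $[0,1]$-valued quantity being $\geq 2c$ forces some $h_0 \in \supp(\mu)$ to attain the full $2c$, which is slightly tighter but leads to the same qualitative conclusion; your explicit handling of the quantifier structure (that $h_0$ may depend on $\cA$ and $m$ while $\epsilon,\delta$ depend only on the universal constant $c$) is also a welcome clarification of a point the paper leaves implicit.
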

\begin{proof}
First, take $\eps = c$, we have by the definition of $\sup$, there exists an $h^*\in \cH$ such that,
\begin{align*}
    &\EEs{S_\trn \sim \cD(h^*)^m}{\atk_{\cD(h^*)}(h^*, S_\trn,\A)} \\
    \geq& \sup_{h\in \cH} \EEs{S_\trn \sim \cD(h)^m}{\atk_{\cD(h)}(h, S_\trn,\A)} - \eps \\
    \geq \textbf{}& \EEs{h\sim \mu, S_\trn \sim \cD(h)^m}{\atk_{\cD(h)}(h, S_\trn,\A)} - \eps\\
    \geq& c\,.
\end{align*}

Then by Markov's inequality,
\begin{align*}
    &\PP{\atk_{\cD(h^*)}(h^*, S_\trn,\A)>c/2}=1-\PP{\atk_{\cD(h^*)}(h^*, S_\trn,\A)\leq c/2}\\
    \geq& 1-\frac{1-\EE{\atk_{\cD(h^*)}(h^*, S_\trn,\A)}}{1-c/2}\geq \frac{c}{2-c}\,.
\end{align*}
Hence, $\cH$ is not $(\epsilon,\delta)$-robust learnable.
\end{proof}

\begin{proof}[of Theorem~\ref{thm:lblinear} in $n=3$] We divide the proof into three parts: a) the construction of the target function and the data distribution, b) the construction of the attacker and c) the analysis of the attackable rate.

\paragraph{The target function and the data distribution. }
We denote by $\sph = \sph^3(\bZero,1)$ the sphere of the $3$-dimensional unit ball centered at the origin. For some small $0<\eta<1/6$, let $\cH_\eta = \{h(x) = \ind{\inner{w}{x}-\frac{1}{2}\geq 0}|\norm{w}=1\}\cup \{h(x) = \ind{\inner{w}{x}-\frac{1-\eta}{2}\leq 0}|\norm{w}=1\}$ denote a set of linear classifiers with boundary $1/2$ or $\frac{1-\eta}{2}$ away from the origin. Let $K_w= \{x|\inner{w}{x}-\frac{1}{2}=0\}$ denote the hyperplane of the boundary of $h=\ind{\inner{w}{x}-\frac{1}{2}\geq 0}$ and $C_w = K_w\cap \sph$ denote the intersection of $K_w$ and $\sph$, which is a circle with radius $\sqrt{3}/2$ centered at $w/2$. 


We consider the target function $h^*$ selected uniformly at random from $\cH_\eta$, which is equivalent to: randomly picking $w\sim \Unif(\sph)$ and randomly picking $j\sim \Ber(1/2)$; if $j=1$, letting $h^* = h_{w,j}^* = \ind{\inner{w}{x}-\frac{1}{2}\geq 0}$; otherwise letting $h^* = h^*_{w,j} = \ind{\inner{w}{x}-\frac{1-\eta}{2}\leq 0}$. If the target function $h^*=h^*_{w,j}$, the data distribution $\cD = \cD_{w,j}$ is the uniform distribution over $C_w\times \{j\}$. Note that all instances on the circle $C_w$ are labeled as $j$ by $h^*_{w,j}$. We will show that the expected attackable rate $\EEs{h^*\sim \Unif(\cH_\eta), S_\trn\sim \cD^m}{\atk_\cD(h^*, S_\trn, \A)}\geq 1/2$. Combining with Lemma~\ref{lmm:exp2hp}, we prove Theorem~\ref{thm:lblinear} in $n=3$.

\paragraph{The attacker. }
Then we define the attacker $\Adv$ in the following way. We first define a map $m_{x_0}: \sph\mapsto \sph$ for some $x_0\in \sph$ such that $m_{x_0}(x) = {2\inner{x_0}{x}}x_0-x$. Here, $m_{x_0}(x)$ is the reflection of $x$ through the line passing the origin and $x_0$. Note that $m_{x_0}(m_{x_0}(x))=x$. This symmetric property will help to confuse algorithms such that no algorithm can distinguish the training data and the poisoning data. For $S_\trn\sim \cD_{w,j}^m$, we define $m_{x_0}(S_\trn) = \{ (m_{x_0}(x),1-y)|(x,y)\in S_\trn \}$, and let
  \begin{align*}
    \Adv(h^*_{w,j},S_\trn,x_0)=\begin{cases}
      m_{x_0}(S_\trn)& \text{if }S_{\trn,\cX}\cap \cB(x_0,\sqrt{3\eta/2})=\emptyset\,,\\
      \emptyset & \text{else.}
    \end{cases}
  \end{align*}
  Now we show that $\Adv$ is a clean-label attacker. In the second case of $\Adv(h^*_{w,j},S_\trn,x_0)=\emptyset$, it is clean-labeled trivially. In the first case of $S_{\trn,\cX}\cap \cB(x_0,\sqrt{3\eta/2})=\emptyset$, we discuss two cases:
  \begin{itemize}
    \item The target function $h^*=h_{w,j}^* = \ind{\inner{w}{x}-\frac{1}{2}\geq 0}$ has its decision boundary $\frac{1}{2}$ away from the origin, i.e., $j=1$. Then every training instance is labeled by $1$ and for any training instance $x$, $\inner{w}{m_{x_0}(x)}-\frac{1}{2} = \inner{x_0}{x}-1<0$. Hence $\Adv(h^*_{w,j},S_\trn,x_0)$ is clean-labeled.
    
    \item The target function $h^*=h_{w,j}^* = \ind{\inner{w}{x}-\frac{1-\eta}{2}\leq 0}$ has its decision boundary $\frac{1-\eta}{2}$ away from the origin, i.e., $j=0$. For each training instance $x$, since $x\notin \cB(x_0,\sqrt{3\eta/2})$, we have $\norm{x-x_0}_2^2\geq \frac{3\eta}{2}$ and thus, $\inner{x}{x_0}\leq 1-\frac{3\eta}{4}$. Then $\inner{w}{m_{x_0}(x)}-\frac{1-\eta}{2} = \inner{x_0}{x}-(1-\frac{\eta}{2})\leq 1-\frac{3\eta}{4}-(1-\frac{\eta}{2})<0$. Hence $\Adv(h^*_{w,j},S_\trn,x_0)$ is clean-labeled.
  \end{itemize}
  
  \paragraph{Analysis. }
  Let $\cE_1(h^*_{w,j},S_\trn,x_0)$ denote the event of $\{\A(S_\trn\cup \Adv(h^*_{w,j},S_\trn,x_0), x_0)\neq h^*_{w,j}(x_0)\}$ 
  and $\cE_2(S_\trn,x_0)$ denote the event of $S_{\trn,\cX}\cap \cB(x_0,\sqrt{3\eta/2})=\emptyset$. 
  It is not hard to check that $\cE_2(S_\trn,x_0) =\cE_2(m_{x_0}(S_\trn),x_0)$ due to the symmetrical property of the reflection. 
  Besides, conditional on $\cE_2(S_\trn,x_0)$, 
  the poisoned data set $S_\trn\cup \Adv(h^*_{w,j},S_\trn,x_0) = m_{x_0}(S_\trn)\cup \Adv(h^*_{m_{x_0}(w),1-j},m_{x_0}(S_\trn),x_0)$ 
  and thus, any algorithm $\A$ will behave the same (under attack) at test instance $x_0$ given training set $S_\trn$ or $m_{x_0}(S_\trn)$. 
  Since $h^*_{w,j}(x_0)\neq h^*_{m_{x_0}(w),1-j}(x_0)$, 
  we know that $\ind{\cE_1(h^*_{w,j},S_\trn,x_0)} = \ind{\neg\cE_1(h^*_{m_{x_0}(w),1-j},m_{x_0}(S_\trn),x_0)}$ conditional on $\cE_2(S_\trn,x_0)$. 
  Let $f_{w,j}(x)$ denote the probability density function of the marginal distribution of $\cD_{w,j}$ 
  (i.e., the uniform distribution over $C_w$) 
  and then we have $f_{w,j}(x) = f_{m_{x_0}(w),1-j}(m_{x_0}(x))$. 
  For any fixed $x_0$, the distributions of $w$ and $m_{x_0}(w)$ and the distributions of $j$ and $1-j$ are the same respectively. 
  The training set $S_\trn$ are samples drawn from $\cD_{w,j}$, and hence we can view $m_{x_0}(S_\trn)$ as samples drawn from $\cD_{m_{x_0}(w),1-j}$. Then for any algorithm $\A$, we have
  \begingroup
  \allowdisplaybreaks
  \begin{align}
    &\EEs{h^*_{w,j}\sim \Unif(\cH_\eta),S_\trn\sim \cD_{w,j}^m}{\atk_\cD(h^*_{w,j}, S_\trn, \A)}\nonumber\\
    \geq &\EEs{w\sim \Unif(\sph), j\sim\Ber(1/2),S_\trn\sim \cD_{w,j}^m,(x,y)\sim \cD_{w,j},\cA}{\ind{\cE_1(h^*_{w,j},S_\trn,x)\cap \cE_2(S_\trn,x)}}\nonumber\\
    =&\int_{x\in \sph}\EEs{w\sim \Unif(\sph), j\sim\Ber(1/2),S_\trn\sim \cD_{w,j}^m,\cA}{f_{w,j}(x)\ind{\cE_1(h^*_{w,j},S_\trn,x)\cap  \cE_2(S_\trn,x)}}dx\label{eq:toadd}\\
    =&\int\limits_{x\in \sph}\underset{{w, j,S_\trn\sim \cD_{w,j}^m,\cA}}{\E}{\left[f_{m_{x}(w),1-j}(x)\ind{\neg\cE_1(h^*_{m_x(w),1-j},m_x(S_\trn),x)\cap \cE_2(m_x(S_\trn),x)}\right]}dx\label{eq:sameevent}\\ 
    =&\int_{x\in \sph}\EEs{w, j,S_\trn\sim \cD_{w,j}^m,\cA}{f_{w,j}(x)\ind{\neg\cE_1(h^*_{w,j},S_\trn,x)\cap  \cE_2(S_\trn,x)}}dx\label{eq:samedist}\\
    =&\frac{1}{2}\int_{x\in \sph}\EEs{w\sim \Unif(\sph), j\sim \Ber(1/2),S_\trn\sim \cD^m,\cA}{f_{w,j}(x)\ind{\cE_2(S_\trn,x)}}dx
    \xrightarrow{\eta\rightarrow 0^+} \frac{1}{2}\,,\label{eq:addition}
  \end{align}
  \endgroup
  where Eq.~\eqref{eq:sameevent} uses the fact $\cE_2(S_\trn,x_0) =\cE_2(m_{x_0}(S_\trn),x_0)$ and that conditional on $\cE_2(S_\trn,x_0)$, $\ind{\cE_1(h^*_{w,j},S_\trn,x_0)} = \ind{\neg\cE_1(h^*_{m_{x_0}(w),1-j},m_{x_0}(S_\trn),x_0)}$; Eq.~\eqref{eq:samedist} uses the fact that for any fixed $x_0$, the distributions of $w$ and $m_{x_0}(w)$ and the distributions of $j$ and $1-j$ are the same respectively; and Eq.~\eqref{eq:addition} is the average of Eq.~\eqref{eq:toadd} and Eq.~\eqref{eq:samedist}.
\end{proof}

\begin{proof}[of Theorem~\ref{thm:lblinear} in $n=2$]
Again, we divide the proof into three parts.

\paragraph{The target function and the data distribution.}
In $2$-dimensional space, we denote by $w = (\cos\theta,\sin\theta)$ and represent the target function $h^* = \ind{\inner{(\cos\theta^*,\sin\theta^*)}{x}+b^*\geq 0}$ by $(\theta^*,b^*)$. Then the target function is selected in the following way: uniformly at random selecting a point $o$ from a $2$-dimensional ball centered at $\bZero$ with some large enough radius $r\!\geq\! 4$, i.e., $o\sim\! \Unif(\cB^2(\bZero,\!r))$, then randomly selecting a direction $\theta^* \sim \Unif([0,2\pi))$, and letting the target function be $h^* = \ind{\inner{(\cos \theta^*,\sin \theta^*)}{x-o}\geq 0}$. Then for any $m\in \NN$, we construct the data distribution over $2m$ discrete points, where all points are labeled the same and the distance between every two instances is independent of $h^*$. Specifically, the data distribution $\cD$ is described as follows. 
  \begin{itemize}
    \item We randomly draw $s \sim \Ber(1/2)$. We define two unit vectors $v_1 = (\sin \theta^*, -\cos \theta^*)$ and $v_2 =(2s-1)\cdot (\cos \theta^*,\sin \theta^*)$. Here $v_1$ is perpendicular to $w^*$ and $v_2$ is in the same direction as $w^*$ if $s=1$ and in the opposite direction of $w^*$ if $s=0$.
    \item Let $\cX_m = \{x_1,\ldots,x_{2m}\}$ be a set of $2m$ points. For notation simplicity, we also define $x_0$ and $x_{2m+1}$. Let $x_0 = o$ and for all $i\in [2m+1]$, let $x_{i} = x_{i-1}+l\cos(\beta_{i-1})v_1+ l\sin(\beta_{i-1})v_2$, where $\beta_{i}=7\beta_{i-1}$, $\beta_0 = 7^{-2m}\cdot \frac{\pi}{6}$ and $l=\frac{1}{2m}$.
    \item Let the marginal data distribution be a uniform distribution over $\cX_m$. Note that if $s=1$, all training points lie on the positive side of the decision boundary and are labeled by $1$; if $s=0$, all training points lie on the negative side and are labeled by $0$.
  \end{itemize}
  Here $(o,v_1,v_2)$ constructs a new coordinate system. For any $x\in \R^2$, we use $\tilde{x}= ((x-o)^\top v_1, (x-o)^\top v_2)$ to represent $x$ in this new coordinate system. Then the decision boundary of the target function is represented as $\inner{\tilde{x}}{\tilde{v_2}}=0$ and for any $x\in \cX_m$ we have $\inner{\tilde{x}}{\tilde{v_2}}>0$. It is worth noting that for any $i\in[2m]$, if the positions of three points $x_{i-1},x_i,x_{i+1}$ are fixed, then $o,s,v_1,v_2$ are all fixed.
  
  \paragraph{The attacker.}
  For any ${x_i}\in \cX_m$, we let $b_i=l\sum_{j=0}^i \cos \beta_j -\frac{l(\cos \beta_{i-1} +\cos \beta_i)\sum_{j=0}^i \sin \beta_j}{\sin \beta_{i-1}+\sin \beta_i}$ such that $z_i = {b_iv_1+o}$, ${x_{i-1}}$ and ${x_{i+1}}$ are collinear. We denote by $L_{x_i}$ the line passing $z_i$ and $x_{i}$. Then for any $i\in [2m]$, let
  \[
    \tilde{\Rfl_{L_{x_i}}(x)}=\frac{2\inner{\tilde{x}-\tilde{z_i}}{\tilde{x_i}-\tilde{z_i}}}{\norm{\tilde{x_i}-\tilde{z_i}}_2^2}(\tilde{x_i}-\tilde{z_i}) -\tilde{x}+2\tilde{z_i}\,,\]
  be the reflection of $x$ across $L_{x_i}$ as illustrated in Fig.~\ref{fig:lblin2d}. 
  
  \begin{figure}[H]
      \centering
      \includegraphics[width= 0.5\textwidth]{./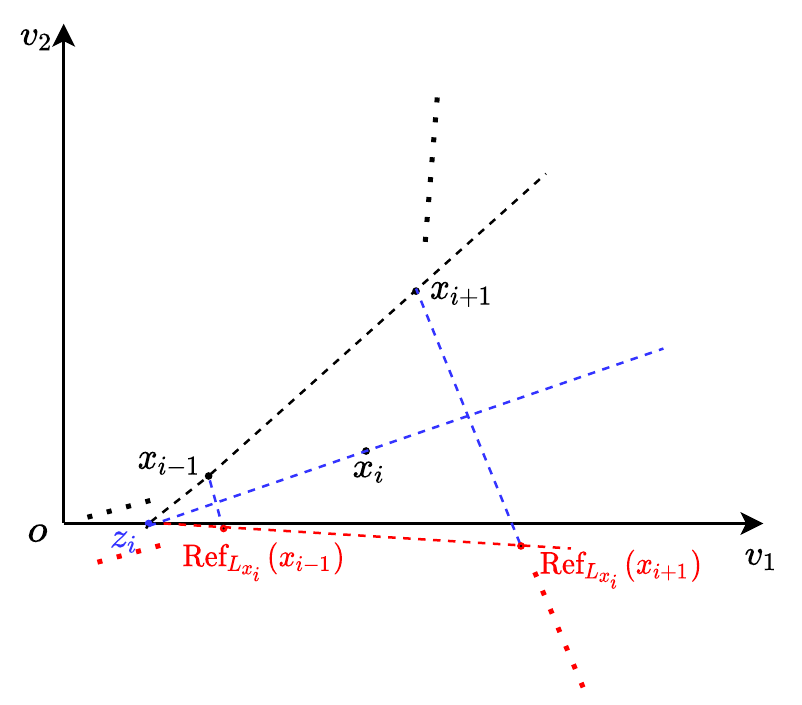}
      \caption{Illustration of $\Rfl_{L_{x_i}}(\cdot)$.}
      \label{fig:lblin2d}
\end{figure}

For $S_\trn\sim \cD^m$, we let $U=\cX_m\setminus S_{\trn,\cX}$ denote the set of points not sampled in $\cX_m$, where $\abs{U}\geq m$. Then we define $\Adv$ as
  \begin{align*}
      \Adv(h^*,S_\trn, x_i)=\begin{cases}
      \{(\Rfl_{L_{x_i}}(x),1-y)|(x,y)\in S_\trn\}&\text{if } x_i\in U\,,\\
      \emptyset & \text{else.}
      \end{cases}
  \end{align*}
  Now we need to show that $\Adv$ is a clean-label attacker. We will show that $\inner{\tilde{\Rfl_{L_{x_i}}(x_j)}}{\tilde{v_2}}<0$ for all $j\neq i\in [2m]$, which implies that the poison data is correctly labeled when $x_i\in U$. First, we claim that for any $j\neq i$, $x_j$ lie in the polytope above the line passing $x_{i-1},x_{i+1}$, the line passing $x_{i-1},x_i$ and the line passing $x_{i},x_{i+1}$. Formally speaking, for any $j\neq i$, $x_j$ satisfies
  \begin{align*}
      \begin{array}{l}
      \inner{(-\sin \beta_{i-1}-\sin \beta_i,\cos \beta_{i-1}+\cos \beta_i)}{\tilde{x_j}-\tilde{x_{i+1}}}\geq 0\,,\\
      \inner{(-\sin \beta_{i-1},\cos \beta_{i-1})}{\tilde{x_j}-\tilde{x_{i}}}\geq 0\,,\\
      \inner{(-\sin \beta_i,\cos \beta_i)}{\tilde{x_j}-\tilde{x_{i}}}\geq 0\,.
      \end{array}
  \end{align*}
  This claim is not hard to prove. At a high level, we prove this claim by that $\{\beta_i\}_{i=0}^{2m}$ is monotonically increasing and that the polygon defined by connecting every pair of neighboring points in $\cX_m\cup \{x_0,x_{2m+1}\}$ is convex. For the first constraint, it is satisfied trivially when $j=i-1,i+1$. If $j\geq i+2$, by direct calculation, we have 
  \begingroup
  \allowdisplaybreaks
  \begin{align*}
      &\inner{(-\sin \beta_{i-1}-\sin \beta_i,\cos \beta_{i-1}+\cos \beta_i)}{\tilde{x_j}-\tilde{x_{i+1}}}\\
      =&\inner{(-\sin \beta_{i-1}-\sin \beta_i,\cos \beta_{i-1}+\cos \beta_i)}{\left(\sum_{k=i+1}^{j-1}\cos \beta_k, \sum_{k=i+1}^{j-1}\sin \beta_k\right)}\\
      = &\sum_{k=i+1}^{j-1}\cos \beta_k\inner{(-\sin \beta_{i-1}-\sin \beta_i,\cos \beta_{i-1}+\cos \beta_i)}{\left(1, \frac{\sum_{k=i+1}^{j-1}\sin \beta_k}{\sum_{k=i+1}^{j-1}\cos \beta_k}\right)}\\
      \geq &\sum_{k=i+1}^{j-1}\cos \beta_k\inner{(-\sin \beta_{i-1}-\sin \beta_i,\cos \beta_{i-1}+\cos \beta_i)}{\left(1, \tan \beta_{i+1} \right)}\\
      \geq &\frac{\sum_{k=i+1}^{j-1}\cos \beta_k}{\cos \beta_{i+1}}(\sin(\beta_{i+1}-\beta_{i-1}) +\sin(\beta_{i+1}-\beta_{i}))\\
      \geq & 0\,.
  \end{align*}
  \endgroup
  Similarly, if $j\leq i-2$, then
  \begin{align*}
      &\inner{(-\sin \beta_{i-1}-\sin \beta_i,\cos \beta_{i-1}+\cos \beta_i)}{\tilde{x_j}-\tilde{x_{i+1}}}\\
      =&\inner{(-\sin \beta_{i-1}-\sin \beta_i,\cos \beta_{i-1}+\cos \beta_i)}{\tilde{x_j}-\tilde{x_{i-1}}}\\
      =&-\inner{(-\sin \beta_{i-1}-\sin \beta_i,\cos \beta_{i-1}+\cos \beta_i)}{\left(\sum_{k=j}^{i-2}\cos \beta_k, \sum_{k=j}^{i-2}\sin \beta_k\right)}\\
      = &\sum_{k=j}^{i-2}\cos \beta_k\inner{(\sin \beta_{i-1}+\sin \beta_i,-\cos \beta_{i-1}-\cos \beta_i)}{\left(1, \frac{\sum_{k=j}^{i-2}\sin \beta_k}{\sum_{k=j}^{i-2}\cos \beta_k}\right)}\\
      \geq &\sum_{k=j}^{i-2}\cos \beta_k\inner{(\sin \beta_{i-1}+\sin \beta_i,-\cos \beta_{i-1}-\cos \beta_i)}{\left(1, \tan \beta_{i-2} \right)}\\
      \geq &\frac{\sum_{k=j}^{i-2}\cos \beta_k}{\cos \beta_{i-2}}(\sin(\beta_{i-1}-\beta_{i-2}) +\sin(\beta_{i}-\beta_{i-2}))\\
      \geq & 0\,.
  \end{align*}
  It is easy to check that $x_j$ satisfies the second and the third constraints using the same way of computation, which is omitted here. Based on that $x_j$ lie in the polytope for all $j\neq i$, then we only need to prove that 
  $\inner{\tilde{\Rfl_{L_{x_i}}(x)}}{\tilde{v_2}}<0$ for the points lying on the faces of the polytope, which are $\tilde{x}\in \{\tilde{x_{i+1}} +\eta (\cos \beta_{i},\sin \beta_{i})|\eta\geq 0\}$, $\tilde{x}\in \{\tilde{x_{i-1}} -\eta (\cos \beta_{i-1},\sin \beta_{i-1})|\eta\geq 0\}$ and $\tilde{x}\in \{\eta \tilde{x_{i-1}} +(1-\eta)\tilde{x_{i+1}}|\eta\in [0,1]\}$. 
  Since $\tilde{\Rfl_{L_{x_i}}(\cdot)}$ is a linear transform, if we can show $\inner{\tilde{\Rfl_{L_{x_i}}(x_{i-1})}}{\tilde{v_2}}<0$ and 
  $\inner{\tilde{\Rfl_{L_{x_i}}(x_{i+1})}}{\tilde{v_2}}<0$, 
  then we have $\inner{\tilde{\Rfl_{L_{x_i}}(x)}}{\tilde{v_2}}<0$ for all points on the third face 
  $\{\eta \tilde{x_{i-1}} +(1-\eta)\tilde{x_{i+1}}|\eta\in [0,1]\}$. 
  Hence, we only need to prove the statement for points lying on the first two faces.
  
  For any two vectors $u,v$, we denote by $\theta(u,v)$ the angle between $u$ and $v$. Then let us denote by $\theta_1 = \theta(\tilde{x_{i+1}}-\tilde{z_i},\tilde{x_{i}}-\tilde{z_i})$ the angle between $\tilde{x_{i+1}}-\tilde{z_i}$ and  and $\tilde{x_{i}}-\tilde{z_i}$ and $\theta_2 = \theta(\tilde{x_{i}}-\tilde{z_i},\tilde{v_1})$ the angle between $\tilde{x_{i}}-\tilde{z_i}$ and $\tilde{v_1}$. Then we have both $\theta_1\leq \beta_{i}\leq \frac{\pi}{6}$ and $\theta_2\leq \beta_{i}\leq \frac{\pi}{6}$. Then since $\norm{\tilde{x_i}-\tilde{x_{i-1}}} = \norm{\tilde{x_{i+1}}-\tilde{x_{i}}} = l$ and $\theta(\tilde{x_{i+1}}-\tilde{x_{i-1}},\tilde{x_i}-\tilde{x_{i-1}}) = (\beta_i-\beta_{i-1})/2$ due to the construction, we have
  \begin{align*}
    &\sin \theta_1 = \frac{\norm{\tilde{x_{i}}-(\tilde{x_{i+1}}+\tilde{x_{i-1}})/2}}{\norm{\tilde{x_{i}}-\tilde{z_i}}}=\frac{l\sin((\beta_i-\beta_{i-1})/2)}{\norm{\tilde{x_{i}}-\tilde{z_i}}}
    =\frac{l\sin(3\beta_{i-1})}{\norm{\tilde{x_{i}}-\tilde{z_i}}}\geq \frac{3\beta_{i-1}l}{2\norm{\tilde{x_{i}}-\tilde{z_i}}}\,,
  \end{align*}
  where the last inequality is by $3\beta_{i-1}\leq \frac{\pi}{6}$. On the other hand, we have
  \begin{align*}
    \sin \theta_2 = \frac{l\sum_{k=0}^{i-1}\sin \beta_k}{\norm{\tilde{x_{i}}-\tilde{z_i}}}\leq \frac{l\sum_{k=0}^{i-1}\beta_k}{\norm{\tilde{x_{i}}-\tilde{z_i}}}\leq \frac{7\beta_{i-1}l}{6\norm{\tilde{x_{i}}-\tilde{z_i}}}\,.
  \end{align*} 
  Combining these two equations, we have $\sin\theta_1>\sin \theta_2$, which indicates that $\theta_1>\theta_2$. Then since $\beta_i-\theta_2 = \theta(\tilde{x_{i+1}}-\tilde{x_i},\tilde{x_{i}}-\tilde{z_i}) \geq \theta_1$, we have $\beta_i> 2\theta_2$. Let $\tilde{w_i} = \frac{\tilde{x_{i}}-\tilde{z_i}}{\norm{\tilde{x_{i}}-\tilde{z_i}}}$ denote the unit vector in the direction of $\tilde{x_{i}}-\tilde{z_i}$. For $\tilde{x}=\tilde{x_{i+1}} +\eta (\cos \beta_{i},\sin \beta_{i})$ with $\eta\geq 0$,
  \begin{align*}
      &\inner{\tilde{\Rfl_{L_{x_i}}(x)}}{\tilde{v_2}}\\
      =&{2\inner{\tilde{x}-\tilde{z_i}}{\tilde{w_i}}}\inner{\tilde{w_i}}{\tilde{v_2}} -\inner{\tilde{x}-\tilde{z_i}}{\tilde{v_2}}\\
      =&{2\inner{\tilde{x_{i+1}}-\tilde{z_i}+\eta (\cos \beta_{i},\sin \beta_{i})}{\tilde{w_i}}}\inner{\tilde{w_i}}{\tilde{v_2}} -\inner{\tilde{x_{i+1}}-\tilde{z_i}+\eta (\cos \beta_{i},\sin \beta_{i})}{\tilde{v_2}}\\
      =&2\norm{\tilde{x_{i+1}}-\tilde{z_i}}\cos \theta_1\sin \theta_2-\norm{\tilde{x_{i+1}}-\tilde{z_i}}\sin(\theta_1+\theta_2) +2\eta\cos(\beta_i-\theta_2)\sin \theta_2-\eta \sin\beta_i\\
      = &\norm{\tilde{x_{i+1}}-\tilde{z_i}}\sin(\theta_2-\theta_1) +\eta \sin (2\theta_2-\beta_i)\\
      < & 0\,.
  \end{align*}
  It is easy to check that $\beta_{i-1}\leq \theta_2$ (let $\tilde{p}$ denote the intersection of the line passing $\tilde{x_{i-1}}$ and $\tilde{x_i}$ and the line $\inner{\tilde{x}}{\tilde{v_2}}=0$, $\beta_{i-1} = \theta(\tilde{x_i}-\tilde{p}, \tilde{v_1})$ and $\theta_2$ is the external angle of triangle with vertices $\tilde{p}, \tilde{z_i}$ and $\tilde{x_i}$). Then for $\tilde{x}=\tilde{x_{i-1}} -\eta (\cos \beta_{i-1},\sin \beta_{i-1})$ with $\eta\geq 0$,
  \begin{align*}
      &\inner{\tilde{\Rfl_{L_{x_i}}(x)}}{\tilde{v_2}}\\
      =&{2\inner{\tilde{x}-\tilde{z_i}}{\tilde{w_i}}}\inner{\tilde{w_i}}{\tilde{v_2}} -\inner{\tilde{x}-\tilde{z_i}}{\tilde{v_2}}\\
      =&{2\inner{\tilde{x_{i-1}}-\tilde{z_i}-\eta (\cos \beta_{i-1},\sin \beta_{i-1})}{w_i}}\inner{\tilde{w_i}}{\tilde{v_2}} -\inner{\tilde{x_{i-1}}-\tilde{z_i}-\eta (\cos \beta_{i-1},\sin \beta_{i-1})}{\tilde{v_2}}\\
      =&2\norm{\tilde{x_{i-1}}-\tilde{z_i}}\cos \theta_1\sin \theta_2-\norm{\tilde{x_{i-1}}-\tilde{z_i}}\sin(\theta_1+\theta_2) -2\eta\cos(\beta_{i-1}-\theta_2)\sin \theta_2+\eta \sin\beta_{i-1}\\
      = &\norm{\tilde{x_{i-1}}-\tilde{z_i}}\sin(\theta_2-\theta_1) +\eta \sin (\beta_{i-1}-2\theta_2)\\
      < & 0\,.
  \end{align*}
  Now we complete the proof of $\inner{\tilde{\Rfl_{L_{x_i}}(x_j)}}{\tilde{v_2}}<0$ for all $j\neq i$ and that $\Adv$ is a clean-label attacker. It is worth noting that $L_{x_i'}=L_{x_i}$, where $L_{x_i'}$ is defined over $\{x_j'|j\in [2m]\}$ in the same way as $L_{x_i}$ defined over $\{x_j|j\in [2m]\}$. This is because reflections of $z_i$ and $x_i$ over $L_{x_i}$ are themselves. This symmetric property plays an important role in the analysis.

\paragraph{Analysis.}
Our probabilistic construction of the target function $h^*$ and the data distribution $\cD$ and the random sampling process of drawing $m$ i.i.d. samples from $\cD$ is equivalent to: sampling a multiset of indexes $I_\trn\sim \Unif([2m])$ first; then selecting the target function and the data distribution to determine the positions of the $m$ training points; mapping $I_\trn$ to $S_\trn$ by adding instance-label pair $(x_i,h^*(x_i))$ to $S_\trn$ for each $i$ in $I_\trn$. We let $I_u = [2m]\setminus I_\trn$ denote the indexes not sampled. As we know from the construction, for any $i\in[2m]$, once $s$ and the positions of $o$ and $x_i$ is determined, the positions of other points in $\cX_m$ and $h^*$ are determined. Then we consider an equivalent way of determining the target function and the data distribution. That is, randomly selecting the position of $x_i$ (dependent on the randomness of $o,\theta^*,s$) and then considering the following two different processes of selecting $s$ and $o$.
  \begin{itemize}
      \item Given a fixed $x_i$, randomly select $s\sim \cD(s|x_i)$ and select $o\sim \cD(o|s,x_i)$, where $\cD(s|x_i)$ and $\cD(o|s,x_i)$ denote the conditional distributions of $s$ and $o$ respectively. Note that when $x_i$ satisfies $\norm{x_i}\leq r-2$, $\cD(s|x_i) = \Ber(1/2)$ and $\cD(o|s,x_i)=\Unif(\sph^2(x_i,r_i))$ is a uniform distribution over the circle with radius $r_i$ centered at $x_i$, where $r_i$ is the distance between $o$ and $x_i$ and is a constant according to the definition.
      
      \item Given $(x_i,s,o)$ selected in the above process, if $\norm{x_i}\leq r-2$, we let $s'=1-s$ and $o'=\Rfl_{L_{x_i}}(o)$ (where $x_{i+1}$ and $x_{i-1}$ is determined by $(x_i,s,o)$); otherwise we let $s'=s$ and $o'=o$. It is easy to check that the distribution of $s'$ conditional on $x_i$ is $\Ber(1/2)$ and the distribution of $o'$ conditional on $x_i$ is $\Unif(\sph^2(x_i,r_i))$ if $\norm{x_i}\leq r-2$.
  \end{itemize}
  Therefore, the distributions of $s$ and $s'$ and the distributions of $o$ and $o'$ are the same given $x_i$ respectively. 
  Our following analysis depends on the event of $\norm{x_i}\leq r-2$, 
  the probability of which is $\PP{\norm{x_i}\leq r-2}\geq \PP{\norm{o}\leq r-3} = \frac{(r-3)^2}{r^2}$. 
  We let $S_\trn(x_i,s,o)$ denote the training set by mapping $I_\trn$ to the positions determined by $(x_i,s,o)$ 
  and let $h^*(x_i,s,o)$ denote the target function determined by $(x_i,s,o)$. 
  Note that when $\norm{x_i}\leq r-2$ and $x_i$ is not in the training set, the poisoned data sets with the training sets generated in the above two different processes are the same, i.e., 
  $S_\trn(x_i,s,o)\cup \Adv(h^*(x_i,s,o),S_\trn(x_i,s,o), x_i) = S_\trn(x_i,s',o')\cup \Adv(h^*(x_i,s',o'),S_\trn(x_i,s',o'), x_i)$.
  This is due to the symmetric property of the attacker. 
  Hence, any algorithm will behave the same at point $x_i$ no matter whether the training set is $S_\trn(x_i,s,o)$ or $S_\trn(x_i,s',o')$. 
  In addition, the target functions produced in the two different processes classify $x_i$ differently when $\norm{x_i}\leq r-2$. Let $\cE_2(x_i)$ denote the event of $\{\norm{x_i}\leq r-2\}$ and $\cE(h^*,\A,S_\trn,i)$ denote the event of $\A(S_\trn\cup \Adv(h^*,S_\trn,x_i), x_i)\neq h^*(x_i)$. Then for any $i\in I_u$, conditional on $\cE_2(x_i)$, 
  for any algorithm $\A$, we have 
  \begingroup
  \allowdisplaybreaks
  \begin{align}
      \ind{\cE(h^*(x_i,s,o),\A,S_\trn(x_i,s,o),i)} = \ind{\neg \cE(h^*(x_i,s',o'),\A,S_\trn(x_i,s',o'),i)}\,.\label{eq:s0}
  \end{align}
  Similar to the proof in the case of $n=3$, we have the expected attackable rate
    \begin{align}
      &\EEs{h^*,s,S_\trn\sim \cD^m}{\atk(h^*, S_\trn,\A)}\nonumber\\
      \geq &\frac{1}{2m}\EEs{o\sim \Unif(\{x:\norm{x}\leq r\}),\theta^* \sim \Unif(2\pi),s\sim \Ber(1/2),S_\trn\sim \cD^m,\A}{\sum_{x_i\in U}\ind{\cE(h^*,\A,S_\trn,i)}}\nonumber\\
      = &\frac{1}{2m}\EEs{I_\trn\sim \Unif([2m])}{\sum_{i\in I_u}\EEs{x_i,s,o,\A}{\ind{\cE(h^*(x_i,s,o),\A,S_\trn(x_i,s,o),i)}}}\nonumber\\
      \geq &\frac{1}{2m}\EEs{I_\trn}{\sum_{i\in I_u}\EEs{x_i}{\EEs{s,o,\A}{\ind{\cE(h^*(x_i,s,o),\A,S_\trn(x_i,s,o),i)}|x_i}\ind{\cE_2(x_i)}}}\nonumber\\
      = &\frac{1}{4m}\left(\EEs{I_\trn}{\sum_{i\in I_u}\EEs{x_i}{\EEs{s,o,\A}{\ind{\cE(h^*(x_i,s,o),\A,S_\trn(x_i,s,o),i)}|x_i}\ind{\cE_2(x_i)}}}\right.\nonumber\\
      +& \left.\EEs{I_\trn}{\sum_{i\in I_u}\EEs{x_i}{\EEs{s',o',\A}{\ind{\neg\cE(h^*(x_i,s',o'),\A,S_\trn(x_i,s',o'),i)}|x_i}\ind{\cE_2(x_i)}}}\right)\label{eq:s1}\\
      = &\frac{1}{4m}\EEs{I_\trn}{\sum_{i\in I_u}\EEs{x_i}{\ind{\norm{x_i}\leq r-2}}}\label{eq:s2}\\
      \geq&\frac{m(r-3)^2}{4mr^2}\nonumber\\
      \geq& \frac{1}{64},\nonumber
    \end{align}
    \endgroup
    when $r\geq 4$. Here Eq.~\eqref{eq:s1} holds due to Eq.~\eqref{eq:s0} and Eq.~\eqref{eq:s2} holds since the distributions of $s$ and $s'$ and the distributions of $o$ and $o'$ are the same given $x_i$ respectively. Combining with Lemma~\ref{lmm:exp2hp}, we complete the proof.
\end{proof}
\vspace{-20pt}
\section{Proof of Theorem~\ref{thm:lin2d}}\label{appx:lin2d}
\begin{proof} The proof contains three steps. For notation simplicity, we sometimes use $(\beta,b)$ to represent the linear classifier $h_{\beta,b}$. We say an angle $\beta$ is consistent with a data set $S$ if there exists $b\in [-2,2]$ such that $(\beta,b)$ is consistent with $S$. Also, for an fixed $\beta$, we say an offset $b$ is consistent with $S$ if $(\beta,b)$ is consistent with $S$.
\paragraph{Step 1: For any fixed $\beta$, the probability mass of union of error region is bounded.} For any $\beta$, if there exists any $b\in[-2,2]$ such that $(\beta,b)$ is consistent with $S_\trn$, from this set of consistent $b$'s, 
we denote by $b_{\sup}(\beta)$ the superior value of this set and $b_{\inf}(\beta)$ the inferior value of this set. 
By uniform convergence bound in PAC learning~\citep{blumer1989learnability}, when $m\geq \frac{4}{\epsilon'} \log \frac{2}{\delta} +  \frac{24}{\epsilon'} \log \frac{13}{\epsilon'}$,
we have that with probability at least $1-\delta$, every linear classifier $(\beta,b)$ consistent with $S_\trn$ has $\err(h_{\beta,b})\leq \epsilon'$.
Then the probability mass of the union of error region of all $(\beta,b)$ consistent with $S_\trn$ for a fixed $\beta$ is
\begin{align*}
    &\cP(\{x| \exists b\in [-2,2], h_{\beta,b}(x)\neq h^*(x), \forall (x',y')\in S_\trn, y'=h_{\beta,b}(x') \})\\
    = &\cP( \bigcup\limits_{b_{\inf}(\beta) \leq b\leq b_{\sup}(\beta) } \{x| b \text{~is consistent}~\&~ h_{\beta,b}(x)\neq h^*(x) \}  )\\
    =& \lim\limits_{\delta \to 0^+}  \cP( \{x| h_{\beta,b_{\inf}(\beta)+ \delta}(x)\neq h^*(x) \}\cup\{x|b_{\inf}(\beta)\text{~is consistent}~\&~ h_{\beta,b_{\inf}(\beta)}(x)\neq h^*(x) \} \\
    &\quad\cup  \{x|  h_{\beta,b_{\sup}(\beta)- \delta}(x)\neq h^*(x) \} \cup  \{x|b_{\sup}(\beta)\text{~is consistent} ~\&~ h_{\beta,b_{\sup}(\beta)}(x)\neq h^*(x) \} ) \\
    \leq &2\epsilon'\,.
\end{align*}
If there does not exist any consistent $b\in[-2,2]$ for $\beta$, then $\cP(\{x| \exists b\in [-2,2], h_{\beta,b}(x)\neq h^*(x), \forall (x',y')\in S_\trn, y'=h_{\beta,b}(x') \})=0$.

\paragraph{Step 2: The binary-search path of $\beta$ is unique and adding clean-label points can only change the depth of the search.} That is, for any fixed target function $(\beta^*,b^*)$, for $h-l=2\pi,\pi,\frac{\pi}{2},\ldots$, 
if $\frac{l+h}{2}$ is not consistent with the input (poisoned or not) data set $S$,
then there cannot exist $\beta$ consistent with the input data set $S$ in both two intervals $(l,\frac{l+h}{2})$ and $(\frac{l+h}{2},h)$. 
Since $\beta^*$ is always consistent with $S$, only the interval containing $\beta^*$ will contain $\beta$ consistent with $S$. 
To prove this statement, assume that any $\beta \in\{ l,h, \frac{l+h}{2}\}$ is not consistent with $S$ and there exists $(\beta_1,b_1)$ with $\beta_1\in (l,\frac{l+h}{2})$ and $(\beta_2,b_2)$ with $\beta_2\in (\frac{l+h}{2},h)$ consistent with $S$. If $\beta_2-\beta_1\leq \pi$, let $\beta_3 =\frac{l+h}{2}$; otherwise, let $\beta_3=l$. 
Since $(\beta_1,b_1)$ and $(\beta_2,b_2)$ are consistent classifiers,
for any $\alpha_1,\alpha_2\geq 0$, 
$\ind{(\alpha_1(\cos\beta_1,\sin \beta_1)+\alpha_2(\cos\beta_2,\sin \beta_2))\cdot x+ \alpha_1 b_1+\alpha_2 b_2\geq 0}$ is also a consistent classifier.
By setting $\alpha_1 = \frac{\sin (\beta_2-\beta_3)}{\sin (\beta_2-\beta_1)}$ 
and $\alpha_2 = \frac{\sin (\beta_3-\beta_1)}{\sin (\beta_2-\beta_1)}$, 
we have $(\beta_3, \frac{\sin (\beta_2-\beta_3)b_1+\sin (\beta_3-\beta_1)b_2}{\sin(\beta_2-\beta_1)})$ is consistent. If $\frac{\sin (\beta_2-\beta_3)b_1+\sin (\beta_3-\beta_1)b_2}{\sin(\beta_2-\beta_1)}\in [-2,2]$, this contradicts that any $\beta_3$ is not consistent with $S$; else, since $\cX\subseteq\cB^n(\bZero,1)$, there must exist $b\in[-2,2]$ such that $(\beta_3, b)$ is consistent, which is a contradiction.
  
\paragraph{Step 3: When $h-l< \arctan(f(\epsilon'')/2)$, the attackable rate caused by deeper search is at most $2\epsilon''$.} We consider two cases: $\abs{b^*}> 1$ and $\abs{b^*}\leq 1$. In the case of $\abs{b^*}> 1$, the target function classifies $\cX$ all positive or all negative and thus, there always exists a consistent $b$ for $\beta=0$. The binary-search for $\beta$ will not search in depth and output $(0,b)$ for some consistent $b$. Therefore, $\atk(h^*, S_\trn, \cA)\leq \cP(\{x| \exists b\in [-2,2], h_{0,b}(x)\neq h^*(x), \forall (x',y')\in S_\trn, y'=h_{0,b}(x') \})\leq 2\epsilon'$. 

In the case of $\abs{b^*}\leq 1$, let $A_{z}\eqdef \{x|(\cos \beta^*,\sin \beta^*)\cdot x +b^*\in [-z,z]\}$ for $z\geq 0$. We will show that when $h-l\leq {\arctan(z/2)}$ for $z\leq 2$, the classifier boundary $K_{\beta,b}$ must lie in $A_z$, i.e., $K_{\beta,b}\eqdef \{x|(\cos \beta,\sin \beta)\cdot x+b=0, \norm{x}\leq 1\}\subseteq A_z$. This indicates that when $h-l\leq {\arctan(z/2)}$, the algorithm can only make mistakes at points inside $A_z$. 
If it is false, let $\{q_1,q_2\}=\{x|(\cos \beta^*,\sin \beta^*)\cdot x+b^*=0\}\cap \sph(\bZero,1)$ with 
$\beta^*-\theta((q_2-q_1), e_1) = \frac{\pi}{2}\pmod{2\pi}$ 
if the intersection of the target function boundary and the unit circle has two different points; or let $q_1=q_2=q$ where $\{q\}=\{x|(\cos \beta^*,\sin \beta^*)\cdot x+b^*=0\}\cap \sph(\bZero,1)$ if intersection of the target function boundary and the unit circle has only one point. 
Similarly, we denote by $\{p_1,p_2\}$ the intersection of of $K_{\beta,b}$ and the unit circle. Since $h-l\leq\arctan(1)=\frac{\pi}{4}$, the input data set must contain both positive points and negative points and hence, $K_{\beta,b}\cap \sph(\bZero,1)$ is not empty. We let $\{p_1,p_2\}=\{x|(\cos \beta,\sin \beta)\cdot x+b=0\}\cap \sph(\bZero,1)$ with $\beta-\theta((p_2-p_1), e_1)  = \frac{\pi}{2} \pmod{2\pi}$
if the intersection of the boundary of classifier $(\beta,b)$ and the unit circle has two different points; or let $p_1=p_2=p$ where $\{p\}=\{x|(\cos \beta,\sin \beta)\cdot x+b=0\}\cap \sph(\bZero,1)$ if intersection of the boundary of classifier $(\beta,b)$ and the unit circle has only one point. The definitions of $q_1,q_2,p_1,p_2$ are illustrated in Fig.~\ref{fig:lin2d}. Then at least one of $\{p_1,p_2\}$ is not in $A_z$ and w.l.o.g., assume that $p_2\notin A_z$. 
It is easy to check that $p_1,p_2$ must lie on the same side of the boundary of the target function, otherwise $\abs{\beta-\beta^*}=\theta((p_2-p_1),(q_2-q_1))>\arctan(z/2)$, which contradicts $h-l\leq {\arctan(z/2)}$. Then there exists a consistent classifier $h'=\ind{\inner{w}{x-p_2}\geq 0}$ with $w = (q_{12}-p_{22}, p_{21}-q_{11})$ (whose boundary is the line passing $q_1$ and $p_2$). Then let $\beta'$ denote the direction of $h'$, i.e., $\cos\beta' =\frac{q_{12}-p_{22}}{\norm{w}}$ and 
$\sin\beta' = \frac{p_{21}-q_{11}}{\norm{w}}$, 
and we have $\abs{\beta'-\beta^*}= \theta(p_2-q_1,q_2-q_1)> \arctan(z/2)$, which contradicts that $h-l\leq {\arctan(z/2)}$.
\begin{figure}[H]
    \centering
    \includegraphics[width = 0.5\textwidth]{./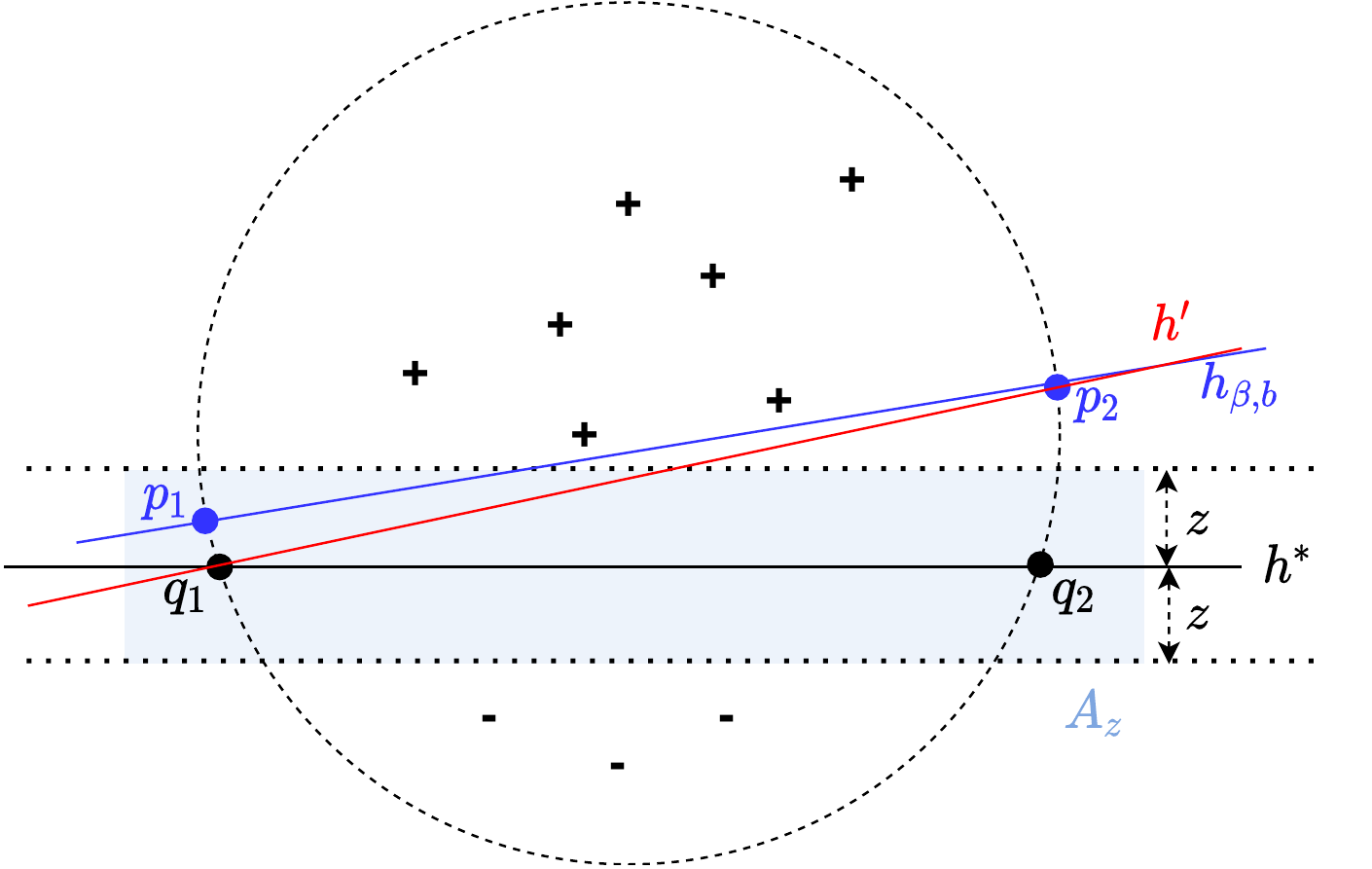}
    \caption{Illustration of $p_1,p_2,q_1,q_2,h'$.}
    \label{fig:lin2d}
\end{figure}
We let $z = f(\epsilon'')$. When $h-l\leq\arctan(f(\epsilon'')/2)$, the classifier can only make mistakes inside $A_{f(\epsilon'')}$ and thus the attackable rate is upper bounded by $2\epsilon''$. 
In addition, when $h-l>\arctan(f(\epsilon'')/2)$, the binary-search has searched to at most the $\floor{\log_2(\frac{2\pi}{\arctan (f(\epsilon'')/2)})}+1$-th depth, which leads to at most $2\epsilon'(\floor{\log_2(\frac{2\pi}{\arctan (f(\epsilon'')/2)})}+1)$ attackable rate. Combining these results together, when $m\geq \frac{4}{\epsilon'} \log \frac{2}{\delta} +  \frac{24}{\epsilon'} \log \frac{13}{\epsilon'}$, with probability at least $1-\delta$ over $S_\trn\sim \cD^m$, we have
  \begin{align}
    \atk(h^*, S_\trn, \cA) \leq &2\epsilon'\left(\floor{\log_2\left(\frac{2\pi}{\arctan (f(\epsilon'')/2)}\right)}+1\right) + 2\epsilon''\nonumber\\
    \leq& 2\epsilon'\log_2\left(\frac{4\pi}{\arctan (f(\epsilon'')/2)}\right) + 2\epsilon''\nonumber\\
    \leq &\log_2(\frac{4\pi}{\pi/8 (f(\epsilon'')\wedge 2)}) 2\epsilon' + 2\epsilon'' \label{eq:ss0}\\
    = &\log_2(\frac{32}{f(\epsilon'')\wedge 2})2\epsilon' + 2\epsilon'' \nonumber\,,
  \end{align}
  where Eq.~\eqref{eq:ss0} holds because $\arctan (x/2)\geq {\pi}/{4}$ when $x\geq 2$ and $\arctan (x/2)\geq {\pi x}/{8}$ when $x\in [0,2]$.
\end{proof}
\vspace{-20pt}
\section{Proof of Theorem~\ref{thm:linSVM}}\label{appx:linSVM}

To prove Theorem~\ref{thm:linSVM}, we first introduce a lemma on the behavior of uniform distribution on a unit sphere. 
\begin{lm}[Lemma~2.2 by \cite{ball1997elementary}]\label{lmm:ballconc}
    For any $a\in [0,1)$, for $x\sim \Unif(\sph^n(\bZero,1))$, with probability at least $1-e^{-na^2/2}$, we have $\inner{x}{e_1}\leq a$.
\end{lm}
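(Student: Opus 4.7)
The plan is to follow Ball's classical geometric proof of this spherical-cap estimate. Set $C_a := \{x \in \sph^n(\bZero,1) : \inner{x}{e_1} \geq a\}$, so the claim is equivalent to $\mathrm{area}(C_a)/\mathrm{area}(\sph^n(\bZero,1)) \leq e^{-na^2/2}$. The first step is to pass from the $(n-1)$-dimensional surface measure to the $n$-dimensional Euclidean volume via the cone from the origin. Let $\hat{C}_a := \{tx : x \in C_a,\ t \in [0,1]\}$ denote the solid cone; integrating in polar coordinates yields
\[
\mathrm{vol}(\hat{C}_a) = \int_0^1 r^{n-1}\,\mathrm{area}(C_a)\,dr = \tfrac{1}{n}\,\mathrm{area}(C_a),
\]
and applying the same identity to the whole sphere gives $\mathrm{vol}(\cB^n(\bZero,1)) = \tfrac{1}{n}\,\mathrm{area}(\sph^n(\bZero,1))$. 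Hence
\[
\PPs{x \sim \Unif(\sph^n(\bZero,1))}{\inner{x}{e_1} \geq a} \;=\; \frac{\mathrm{vol}(\hat{C}_a)}{\mathrm{vol}(\cB^n(\bZero,1))},
\]
and it suffices to inscribe $\hat{C}_a$ in a small Euclidean ball.

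For any $x \in C_a$ and $t \in [0,1]$, expanding
\[
\norm{tx - \alpha e_1}^2 = t^2 - 2t\alpha x_1 + \alpha^2
\]
shows this is a convex quadratic in $t$, so its maximum over $[0,1]$ is attained at $t=0$ (value $\alpha^2$) or $t=1$ (value $1 - 2\alpha x_1 + \alpha^2 \leq 1 - 2\alpha a + \alpha^2$, using $x_1 \geq a$). In the regime $a \leq 1/\sqrt{2}$ I would pick $\alpha = a$, which makes both endpoint values $\leq 1-a^2$, giving $\hat{C}_a \subseteq \cB^n(a e_1, \sqrt{1-a^2})$, whence
\[
\PPs{x}{\inner{x}{e_1} \geq a} \leq (1-a^2)^{n/2} \leq e^{-na^2/2}
\]
by $1-u \leq e^{-u}$. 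In the complementary regime $a > 1/\sqrt{2}$ the choice $\alpha = a$ no longer dominates both endpoints; I would instead take $\alpha = 1/(2a)$, for which the condition $1 - 2\alpha x_1 + \alpha^2 \leq \alpha^2$ is exactly $x_1 \geq a$. This yields $\hat{C}_a \subseteq \cB^n\!\left(\tfrac{1}{2a} e_1,\ \tfrac{1}{2a}\right)$ and hence
\[
\PPs{x}{\inner{x}{e_1} \geq a} \leq \left(\frac{1}{2a}\right)^{n} \leq e^{-na^2/2},
\]
where the last inequality reduces to $2\ln(2a) \geq a^2$ on $[1/\sqrt{2},1)$. This is elementary: $g(a) := 2\ln(2a) - a^2$ has $g'(a) = 2/a - 2a \geq 0$ on $(0,1]$ and $g(1/\sqrt{2}) = \ln 2 - 1/2 > 0$, so $g \geq 0$ throughout the relevant interval.

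There is no serious obstacle: the technical content reduces to the cone formula linking surface area and volume, a one-line ball-containment optimization over $t$, and two elementary analytic inequalities. The only minor subtlety is that no single choice of $\alpha$ produces the optimal enclosing ball across all of $[0,1)$, which is why the argument splits at the natural threshold $a = 1/\sqrt{2}$, precisely where the endpoint values $a^2$ and $1-a^2$ cross.
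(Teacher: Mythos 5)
Your proof is correct. The paper does not prove this lemma itself but imports it from Ball's notes, and your argument is exactly the standard cone-containment proof from that source: identify the cap's measure with the relative volume of the solid cone and enclose the cone in a small ball; your only addition is the (genuinely necessary) case split at $a=1/\sqrt{2}$, since the single ball $\cB^n(ae_1,\sqrt{1-a^2})$ fails to contain the cone's apex when $a>1/\sqrt{2}$, and your alternative ball $\cB^n\bigl(\tfrac{1}{2a}e_1,\tfrac{1}{2a}\bigr)$ together with the check $2\ln(2a)\geq a^2$ on $[1/\sqrt{2},1)$ closes that gap correctly.
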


\begin{proof}[of Theorem~\ref{thm:linSVM}] One essential hardness for robust learning of linear classifiers in high dimension is that for a fixed test instance $x_0$ on the sphere of a unit ball, with high probability over the selection of a set of training points from the uniform distribution over the sphere, every training instance has small component in the direction of $x_0$ as shown by Lemma~\ref{lmm:ballconc}. Taking advantage of this, the attacker can add a point (labeled differently from $x_0$) which is considerably closer to $x_0$ than all of the training instances, thus altering the behavior of SVM at $x_0$ as he wishes. In the following, we prove the theorem based on this idea. We divide the proof into three parts: a) the construction of the target function and the data distribution, b) the construction of the attacker and c) the analysis of the attackable rate. 
\paragraph{The target function and the data distribution.} 
The target function is $h^* = \ind{\inner{w^*}{x}\geq -\gamma/2}$ with $w^* = e_1$ and margin $\gamma= 1/8$. We define the marginal data distribution $\cD_{\cX}$ by putting probability mass $1-8\epsilon$ on $-e_1$ and putting probability mass $8\epsilon$ uniformly on the half sphere of a unit ball $\{x|\norm{x}=1,\inner{x}{e_1}\geq 0\}$. 
We let $\cD^+$ denote the uniform distribution over this positive half sphere. We draw $m$ i.i.d. training samples $S_\trn$ from $\cD$ and then let $S_\trn^+$ denote the positive training samples. Let $m^+ = \abs{S_\trn^+}$ denote the number of positive training samples.

\paragraph{The attacker.} For a given test instance $x_0\in \{x|\norm{x}=1,\inner{x}{e_1}\geq 0\}$, we define two base vectors $v_1 = e_1$ and $v_2 = \frac{x_0-\inner{x_0}{e_1}e_1}{\norm{x_0-\inner{x_0}{e_1}e_1}_2}$. Note that $v_2$ is well-defined almost surely. Then we define an attacker $\Adv$ as
\begin{align*}
    \Adv(h^*,S_\trn,x_0)=\begin{cases}
      \{(-v_2,1)\}&\text{if } m^+=0,\\
      \{(-\gamma v_1+\sqrt{1-\gamma^2}v_2,0)\}& \text{else}\,.
    \end{cases}
\end{align*}
Since $\inner{w^*}{v_2} = 0>-\frac{\gamma}{2}$ and $ \inner{w^*}{-\gamma v_1 + \sqrt{1-\gamma^2}v_2}=-\gamma <-\frac{\gamma}{2}$, $\Adv$ is a clean-label attacker.

\paragraph{Analysis.}
For $n\leq 128$, if $m<\frac{1}{8\epsilon}\vee \frac{e^{n/128}}{768\epsilon} = \frac{1}{8\epsilon} $ and $\eps < 1/16$, then $\PP{m^+=0}=(1-8\epsilon)^m\geq \frac{1}{4}$. Furthermore, if $m^+=0$, SVM can only observe instance-label pairs of $(-v_1,0)$ and $(-v_2,1)$ and then output $\hat{h}(x) =\ind{\inner{v_1-v_2}{x}\geq 0}$. Therefore, if $\inner{e_1}{x_0}<\frac{1}{\sqrt{2}}$, then $x_0$ is attackable. Therefore, for $m<\frac{1}{8\epsilon}$, we have 
  \begin{align*}
    \EEs{S_\trn\sim \cD^m}{\atk(h^*,S_\trn,\SVM)}\geq 8\epsilon\PPs{x\sim \cD^+}{\inner{x}{e_1}<\frac{1}{\sqrt{2}}} \PP{m^+=0} \geq \epsilon\,.
  \end{align*}
  For $n> 128$, if $m < \frac{1}{8\epsilon}$, the analysis above works as well. Else suppose $\frac{1}{8\epsilon} \leq m \leq \frac{e^{n/128}}{768\epsilon} $, we know that $\EE{m^+} = 8m\epsilon$, thus by Chernoff bounds, we have $\PP{m^+>32m\epsilon} \leq e^{-24m\epsilon}\leq e^{-3}$. Furthermore, by Lemma~\ref{lmm:ballconc} and the union bound, drawing $m_0$ i.i.d. samples $S_0\sim (\cD^{+})^{m_0}$, with probability at least $1-3m_0e^{-n/128}$, every instance $x\in S_0$ satisfies $\inner{x}{v_1}\leq \frac{1}{8}$ and $\inner{x}{v_2}\leq \frac{1}{8}$. Let $\cE$ denote the event of $\{\forall (x,y)\in S_\trn^+,\inner{x}{v_1}\leq \frac{1}{8},\inner{x}{v_2}\leq \frac{1}{8},1\leq m^+ \leq 32m\epsilon\}$. If $\cE$ holds, then there is a linear separator
  \[\left(\sqrt{\frac{1+\gamma}{2}}v_1 -\sqrt{\frac{1-\gamma}{2}}v_2\right)^\top x+\frac{1}{2}\sqrt{\frac{1+\gamma}{2}}+\frac{1}{16}\sqrt{\frac{1-\gamma}{2}}\geq 0\,,\]
  such that the distance between any point in $S_\trn \cup \Adv(h^*,S_\trn,x_0)$ and the linear separator is no smaller than $\frac{3}{8}-\frac{\sqrt{7}}{64}\geq \frac{1}{4}$. Hence the distance between the points and the seperator output by SVM is also no smaller than $\frac{1}{4}$. When the test instance $x_0$ satisfies $\inner{x_0}{v_1}\leq \frac{1}{8}$, we have $\norm{x_0-(-\gamma v_1+\sqrt{1-\gamma^2}v_2)}\leq \frac{1}{4}$ and then $x_0$ is misclassified as negative by SVM. Hence, we have
  \begin{align*}
    &\EEs{S_\trn\sim \cD^m}{\atk(h^*,S_\trn,\SVM)}\\
    \geq & 8\epsilon \EEs{S_\trn\sim \cD^m}{\PPs{x\sim \cD^+}{\SVM(S_\trn \cup \Adv(h^*,S_\trn,x),x)\neq h^*(x)}}\\
    \geq & 8\epsilon \EEs{x\sim \cD^+, S_\trn\sim \cD^m}{\ind{\forall (x',y')\in S_\trn^+, \inner{x'}{v_1}\leq \frac{1}{8},\inner{x'}{v_2}\leq \frac{1}{8},\inner{x}{v_1}\leq \frac{1}{8}}}\\
    \geq & 8\epsilon \EEs{x\sim \cD^+}{\EEs{S_\trn}{\ind{\cE}|x}\ind{\inner{x}{v_1}\leq \frac{1}{8}}}\\
    \geq & 8\epsilon (1-2e^{-n/128})(1-e^{-3}-e^{-8m\epsilon})(1-96m\epsilon e^{-n/128})\\
    \geq &\epsilon\,,
  \end{align*} 
  when $\frac{1}{8\epsilon} \leq m\leq \frac{e^{n/128}}{768\epsilon}$ and $n> 128$.  Thus in all we have shown that if $m< \frac{1}{8\epsilon} \vee \frac{e^{n/128}}{768\epsilon} $ then $\EEs{S_\trn\sim \cD^m}{\atk(h^*,S_\trn,\SVM)} > \epsilon$.
\end{proof}
\section{Proof of Theorem~\ref{thm:lblinmrg}}\label{appx:lb_linmrg}
\begin{proof}[of Theorem~\ref{thm:lblinmrg}]
The proof combines the idea of constructing a set of symmetrical poisoning instances in the proof of Theorem~\ref{thm:lblinear} and the idea that the training instances are far away from the test point in the proof of Theorem~\ref{thm:linSVM}. Again, we divide the proof into three parts as we did in the previous proofs.
\paragraph{The target function and the data distribution.}

We denote every point in $\R^n$ by $(x,z)$ for $x\in \R^{n-1}$ and $z\in \R$. The target function $h^*$ is selected uniformly at random from $\cH^*$, where $\cH^* = \{\ind{\inner{(jw^*,1)}{(x,z)}\geq j\gamma/2}|j\in\{\pm 1\}, w^*\in \sph^{n-1}(\bZero,1)\}$. Let $\gamma=\frac{1}{8}$. For target function $h^* = h_{w^*,j} = \ind{\inner{(jw^*,1)}{(x,z)}\geq j\gamma/2}$, the marginal data distribution $\cD_{w^*,j,\cX}$ puts probability mass $1-8\epsilon$ on the point $e_{n}$, then put the remaining $8\epsilon$ probability uniformly over the half sphere of a $(n-1)$-dimensional unit ball $\sph_{w^*,\gamma}\times\{0\}$, where $\sph_{w^*,\gamma}=\sph^{n-1}(\gamma w^*,1)\cap \{x|\inner{w^*}{x}\geq \gamma\}$. Then since every hypothesis in $\cH^*$ predicts $e_n$ positively, we only need to focus on the half sphere in the lower dimension. Note that the label of every point on the half sphere is determined by $j$. We sample $S_{\trn}\sim \cD^m_{w^*,j}$ and $S_{\trn,w^*,j}$ denote the samples on $\sph_{w^*,\gamma}\times\{0\}\times \cY$. Let $m_{w^*,j} = \abs{S_{\trn,w^*,j}}$.

\paragraph{The attacker.}

For any $u_1,u_2\in \R^{n-1}\setminus \{\bZero\}$, let $K_{u_1}(u_2) = \{x| \norm{u_1}^2 \inner{u_2}{x} - \inner{u_1}{u_2}\inner{u_1}{x} = 0 \}$ denote the homogeneous (passing through the origin) hyperplane perpendicular to the vector $u_2 - \inner{u_1}{u_2} \frac{u_1}{\norm{u_1}^2}$. For any given test instance $(x_0,0)$ with $x_0\in \sph_{w^*,\gamma}$, we define two base vectors $v_1=w^*$ and $v_2 = \frac{x_0-\inner{x_0}{w^*}w^*}{\norm{x_0-\inner{x_0}{w^*}w^*}}$. Note that $v_2$ is well-defined almost surely. Denote $K_{x_0} = K_{x_0}(v_1)$.
Let $x_\parallel= \inner{x}{v_1}v_1 + \inner{x}{v_2}v_2$ denote $x$'s component on the hyperplane defined by $v_1,v_2$ and $x_\perp = x-x_\parallel$ denote the component perpendicular to $v_1,v_2$ and then we define $\Rfl_{K_{x_0}}(x)\eqdef x_\perp + \frac{2\inner{x_\parallel}{x_0}}{\norm{x_0}^2}x_0  - x_\parallel$ as the reflection of $x$ through $K_{x_0}$.
Then we define an attacker $\Adv$ as
\begin{align*}
  &\Adv(h_{w^*,j},S_\trn,(x_0,0)) \\
  =&\begin{cases}
    \{((\Rfl_{K_{x_0}}(x),0),1-y)|((x,0),y)\in S_\trn\} & \text{if } \cE_1(w^*,S_\trn,x_0, m_{w^*,j})\,,\\
    \emptyset & \text{else}\,,
  \end{cases}
\end{align*}
where
\begin{align*}
    \cE_1(w^*,S_\trn,x_0, m_{w^*,j} ) = &\left\{\forall (x,0)\in S_{\trn,\cX}, \inner{x}{w^*}\leq \frac{1}{8} + \gamma, \inner{x}{\frac{x_0-\inner{x_0}{w^*}w^*}{\norm{x_0-\inner{x_0}{w^*}w^*}}}\leq \frac{1}{8}\right\} \\
    &\quad \cap \{\inner{x_0}{w^*}\leq \frac{1}{8}+\gamma\}\cap \{m_{w^*,j} \leq 32m\epsilon\}\,.
\end{align*}
Here $\cE_1(w^*,S_\trn,x_0, m_{w^*,j} )$ is thought as a condition to attack $x_0$. Then we show that $\Adv$ is a clean-label attacker.
If $\cE_1(w^*,S_\trn,x_0, m_{w^*,j})$ holds, we have
\begin{align*}
  &\inner{\Rfl_{K_{x_0}}(x)}{w^*}\\
  =& \inner{x_\perp + \frac{2\inner{x_\parallel}{x_0}}{\norm{x_0}^2}x_0  - x_\parallel}{v_1} \\
  =& \inner{x  - 2 \inner{x}{v_1}v_1 - 2\inner{x}{v_2}v_2 + 2 (\inner{x}{v_1}\inner{x_0}{v_1} + \inner{x}{v_2}\inner{x_0}{v_2}) \frac{x_0}{\norm{x_0}^2}}{v_1}\\
  =& -\inner{x}{v_1} + 2\inner{x}{v_1}\inner{x_0}{v_1}^2 \frac{1}{\norm{x_0}^2} + 2 \inner{x}{v_2}\inner{x_0}{v_2}\inner{x_0}{v_1} \frac{1}{\norm{x_0}^2}\\
  \leq& -\inner{x}{v_1} + 2(\frac{1}{8}+\gamma)^2 \inner{x}{v_1} + 2 \cdot \frac{1}{8} \inner{x_0}{v_1}\\
  \leq & \left(2(\frac{1}{8}+\gamma)^2-1\right)\gamma + \frac{1}{4}(\frac{1}{8}+\gamma)\\
  <& 0\,,
  \end{align*}
where the last inequality holds since $\gamma =\frac{1}{8}$. Therefore, $(\Rfl_{K_{x_0}}(x),0)$ is labeled different from $(x,0)$ and $\Adv$ is a clean-label attacker.

\paragraph{Analysis.}
Observe that the probabilistic construction of the target function and the data distribution along with the random sampling of a test instance and the training set can be viewed in an equivalent way: first drawing the number of training samples on the half sphere $m'$ from a binomial distribution $\Bin(m,8\epsilon)$; then on a fixed known half sphere, drawing a test instance and the training set with $m'$ samples on the half sphere and $m-m'$ samples on $e_n$; and finally randomly selecting a coordinate system to decide the position of the true sphere and selecting a $j$ to decide the labels of the training samples. Formally, let us fix a half sphere $\Gamma^{n-1}_+ = \{ x\in \Gamma^{n-1}(\bZero,1)|\inner{x}{e_1} \geq 0\}$ and then sample $m'\sim \Bin(m,8\epsilon)$, 
$t_0\sim \Unif(\Gamma^{n-1}_+)$ and 
$Q_\trn \sim \Unif(\Gamma^{n-1}_+)^{m'}$. 
We denote by $\cE_3(Q_\trn,t_0)$ the event of $\left\{\forall q\in Q_\trn, \inner{q}{e_1}\leq \frac{1}{8} , \inner{q}{\frac{t_0 - \inner{t_0}{e_1}e_1}{\norm{t_0 - \inner{t_0}{e_1}e_1}} }\leq \frac{1}{8}\right\}\cap \{\inner{t_0}{e_1}\leq \frac{1}{8}\} \cap\{m' = |Q_\trn| \leq 32m\epsilon\} $. Then we sample $T_{n-1}\sim \Unif(O(n-1))$, where $O(n-1)$ is the orthogonal group. Finally we sample $j\sim \Unif(\{\pm 1\})$. We denote by $R_{t_0}$ the linear isometry that reflects across the hyperplane $K_{\gamma e_1 + t_0}(e_1)$ in $\R^{n-1}$, i.e., $R_{t_0} u = \Rfl_{K_{\gamma e_1+t_0}(e_1)}(u)$, for any $u\in \R^n$. 

Conditional on $m', t_0$ and $Q_\trn$ sampled in the above process, we consider two different coordinate systems and $j$'s, which lead to two groups of random variables $(j, T_{n-1}, w^*, x_0, S_{\trn,\cX}, m_{w^*,j})$ and $(\wt{j}, \wt{T}_{n-1},\wt{w}^*,\wt{x}_0,\wt{S}_{\trn,\cX}, \wt{m}_{\wt{w}^*,\wt{j}})$. Here for any random variable in the first group, we add a tilde to represent the corresponding random variable in the second group.
\begin{itemize}
    \item In the first group, we have $j,T_{n-1},w^* = T_{n-1}e_1,  x_0 = T_{n-1}(\gamma e_1 + t_0),S_{\trn,\cX} = T_{n-1}(\gamma e_1 +  Q_\trn) \times \{0\} \cup \{e_n\}^{m-m'}$ and $m_{w^*,j} = m' $.
    \item In the second group, we let $\wt{j} = -j,\wt{T}_{n-1} = T_{n-1}R_{t_0}, \wt{w}^* = \wt{T}_{n-1}e_1, \wt{x}_0 =\wt{T}_{n-1}(\gamma e_1 +t_0) , \wt{S}_{\trn,\cX} = \wt{T}_{n-1}(\gamma e_1 + Q_\trn)\times \{0\}\cup \{e_n\}^{m-m'}$ and $\wt{m}_{\wt{w}^*,\wt{j}} = m'$.
\end{itemize}
The above two groups provide two ways of realizing the random process of selecting $h^*$, $(x_0,0)$ and $S_{\trn,\cX}$: 
namely, $h^* = h_{w^*,j} = \ind{\inner{(jw^*,1)}{(x,z)}\geq j\gamma/2}, (x_0,0), S_{\trn,\cX}$ and $h^* = h_{\wt{w}^*,\wt{j}} =\ind{\inner{(\wt{j}\wt{w}^*,1)}{(x,z)}\geq \wt{j}\gamma/2}, (\wt{x}_0,0),\wt{S}_{\trn,\cX}$. 
Let $\wt{S}_{\trn} = \{ (x, h_{\wt{w}^*,\wt{j}}(x))| x\!\in\!  \wt{S}_{\trn,\cX}  \}$ denote the data set of instances in $\wt{S}_{\trn,\cX}$ labeled by $h_{\wt{w}^*,\wt{j}}$. 
Note that $(w^*,j,S_\trn,x_0)$ and $(\wt{w}^*,\wt{j}, \wt{S}_\trn,\wt{x}_0)$ are identical in distribution and that $x_0= T_{n-1}(\gamma e_1 + t_0) = T_{n-1} R_{t_0}(\gamma e_1 + t_0)  = \wt{x}_0$. 
We now argue that $S_\trn\cup \Adv( h_{w^*,j},S_\trn, (x_0,0))$ and $\wt{S}_\trn\cup \Adv(h_{\wt{w}^*,\wt{j}},\wt{S}_\trn, (\wt{x}_0,0) )$ are identical conditional on $\cE_3( Q_\trn,\gamma e_1 + t_0)$. To prove this, we propose and prove the following three claims.

\paragraph{Claim I} $\cE_3(Q_\trn, t_0) \Leftrightarrow \cE_1(w^*, S_\trn,x_0, m_{w^*,j}) \Leftrightarrow \cE_1(\wt{w}^*,\wt{S}_{\trn,\cX}, \wt{x}_0, \wt{m}_{\wt{w}^*.\wt{j}})$.
\begin{proof}[of Claim I]
This is true since $T_{n-1}, R_{t_0}\in O(n-1)$, thus they keep all the inner product properties. In particular, for any $(x,0)\in S_\trn$, $x = T_{n-1} (\gamma e_1 + q)$ for some $q\in Q_\trn$ by definition of $S_\trn$. Furthermore,
\begin{gather*}
    \inner{x}{w^*} = \inner{T_{n-1}(\gamma e_1 + q)}{T_{n-1} e_1} = \gamma + \inner{q}{e_1}\,.
\end{gather*}
Thus $\inner{x}{w^*} \leq \frac{1}{8} + \gamma \Leftrightarrow \inner{q}{e_1}\leq \frac{1}{8} $. All the other equivalences can be derived similarly, thus omitted here.
\end{proof}

\paragraph{Claim II} For any homogeneous hyperplane $L_{u_1}\in \R^{n-1}$ with normal vector $u_1$, for any $u_2 \in \R^{n-1}$ , we have $\Rfl_{T_{n-1} L_{u_1}} (T_{n-1}u_2) = T_{n-1} \Rfl_{L_{u_1}}(u_2)$.
\begin{proof}[of Claim II]
We consider two cases. If $u_2 \in L_{u_1}$, then we have,
\begin{align*}
    \Rfl_{T_{n-1} L_{u_1}} (T_{n-1}u_2) = T_{n-1}u_2 = T_{n-1} \Rfl_{L_{u_1}}(u_2).
\end{align*}
Else if $u_2\notin L_{u_1}$, we 
denote by $u_3 = \Rfl_{T_{n-1} L_{u_1}} (T_{n-1}u_2)$. Thus $u_3$ is the only point such that $u_3\neq  T_{n-1}u_2$, $\inner{u_3 - T_{n-1}u_2}{T_{n-1}u_1} = 0 $ and 
$ \norm{u_3} = \norm{T_{n-1} u_2}$. 
These immediately give us $T_{n-1}^\top u_3 \neq u_2$, $\inner{T_{n-1}^\top u_3 - u_2}{u_1} = 0 $ and 
$\norm{T_{n-1}^\top u_3} = \norm{u_2}$, which means $T_{n-1}^\top u_3 = \Rfl_{L_{u_1}}(u_2)$. Thus
\begin{align*}
     \Rfl_{T_{n-1} L_{u_1}} (T_{n-1}u_2) = T_{n-1} T_{n-1}^\top u_3 = T_{n-1} \Rfl_{L_{u_1}}(u_2)\,,
\end{align*}
which completes the proof.
\end{proof}

\paragraph{Claim III} Conditional on $\cE_3(Q_\trn, t_0)$, the poisoned datasets $S_\trn\cup \Adv( h_{w^*,j},S_\trn, (x_0,0))$ and $\wt{S}_\trn\cup \Adv(h_{\wt{w}^*,\wt{j}},\wt{S}_\trn, (\wt{x}_0,0) )$ are identical.
\begin{proof}[of Claim III]
Denote by $K = K_{\gamma e_1 +t_0}(e_1)$ the homogeneous hyperplane perpendicular to $e_1 - \inner{\gamma e_1 + t_0}{e_1} \frac{\gamma e_1 + t_0}{\norm{\gamma e_1 + t_0}^2}$. 
Thus we have,
\begin{align*}
    T_{n-1} K &= \left\{ T_{n-1}x \bigg| \inner{x}{e_1 - \inner{\gamma e_1 + t_0}{e_1} \frac{\gamma e_1 + t_0}{\norm{\gamma e_1 + t_0}^2}} = 0 \right\}  \\
        &= \left\{ x \bigg| \inner{x}{T_{n-1} \left( e_1 - \inner{\gamma e_1 + t_0}{e_1} \frac{\gamma e_1 + t_0}{\norm{\gamma e_1 + t_0}^2}\right)} = 0 \right\}\\
        &= \left\{ x \bigg| \inner{x}{w^* - \inner{x_0}{w^*} \frac{x_0}{\norm{x_0}^2}} = 0 \right\}\\
        &= K_{x_0}(w^*)\,.
\end{align*}
By Claim I, we know that $\cE_3(Q_\trn, t_0)$ and $\cE_1(w^*, S_\trn,x_0, m_{w^*,j})$ are equivalent. Thus, conditional on $\cE_3(Q_\trn,t_0)$, 
    \begin{align}
        \Adv( h_{w^*,j},S_\trn, (x_0,0)) &= \{((\Rfl_{K_{x_0}(w^*)}(x),0),1-j)|((x,0),j)\in S_\trn\}\nonumber\\
        &=\{((\Rfl_{T_{n-1} K}(T_{n-1}(\gamma e_1+ q)),0),1-j)|q \in Q_\trn\}\nonumber \\
        &= \{(( T_{n-1}\Rfl_{K}(\gamma e_1+ q),0),1-j)|q \in Q_\trn\} \label{eq:applyclm2}\\
        &= \{(( T_{n-1}R_{t_0}(\gamma e_1+ q),0),1-j)|q \in Q_\trn\}\nonumber\\
        &= \wt{S}_\trn \setminus  \{(e_n,1)\}^{m-m'}\nonumber\,,
    \end{align}
    where Eq.~\eqref{eq:applyclm2} holds by applying Claim II. 
Similarly, for $\Adv(h_{\wt{w}^*,\wt{j}},\wt{S}_\trn, (x_0,0))$, the plane of reflection is $K_{x_0}(\wt{w}^*) =  K_{\wt{T}_{n-1}(\gamma e_1 + t_0)}(\wt{T}_{n-1}e_1) = \wt{T}_{n-1}K_{\gamma e_1 +t_0}(e_1) = \wt{T}_{n-1} K$ and
\begin{align}
    \Adv(h_{\wt{w}^*,\wt{j}},\wt{S}_\trn, (x_0,0)) &= \{((\Rfl_{K_{x_0}(\wt{w}^*)}(x),0),1-j)|((x,0),\wt{j})\in \wt{S}_\trn\} \nonumber\\
    &=\{((\Rfl_{\wt{T}_{n-1} K}(\wt{T}_{n-1}(\gamma e_1+ q)),0),j)|q \in Q_\trn\} \nonumber\\
    &= \{(( T_{n-1}(\gamma e_1+ q),0),j)|q \in Q_\trn\}\label{eq:applyclm22}\\
    &= S_\trn \setminus  \{(e_n,1)\}^{m-m'}\nonumber\,,
\end{align}
where Eq.~\eqref{eq:applyclm22} holds by applying Claim II and $\wt{T}_{n-1} = {T}_{n-1}R_{t_0}$.
Thus,
\begin{align*}
    S_\trn\cup \Adv( h_{w^*,j},S_\trn, (x_0,0)) &= S_\trn\cup \wt{S}_\trn \setminus  \{(e_n,1)\}^{m-m'} \\  &=\wt{S}_\trn\cup \Adv(h_{\wt{w}^*,\wt{j}},\wt{S}_\trn, (\wt{x}_0,0) )\,.
\end{align*}
\end{proof}
Now we have proved that $S_\trn\cup \Adv( h_{w^*,j},S_\trn, (x_0,0))$ and $\wt{S}_\trn\cup \Adv(h_{\wt{w}^*,\wt{j}},\wt{S}_\trn, (\wt{x}_0,0) )$ are identical conditional on $\cE_3( Q_\trn,\gamma e_1 + t_0)$. Hence in this case any algorithm will behave the same given the input data being either $S_\trn$ or $\wt{S}_\trn$. 
Let $\cE_2(\A,S_\trn,\Adv,h^*,x_0)$ denote the event $\A(S_\trn\cup \Adv(h^*,S_\trn,$ $(x_0,0)),(x_0,0))\neq h^*((x_0,0))$. Since $h_{w^*,j}((x_0,0)) \neq  h_{\wt{w}^*,\wt{j}}((x_0,0))$, 
then conditional on $\cE_3(Q_\trn,t_0)$, for any algorithm $\A$, 
we have 
$$\ind{\cE_2(\A,S_\trn,\Adv,h_{w^*,j},x_0)} = \ind{\neg\cE_2(\A,\wt{S}_\trn ,\Adv,h_{\wt{w}^*,\wt{j}},\wt{x}_0)}\,.$$ 

If $m <  \frac{1}{8\epsilon}$, then we have $ \mathbb{E}_{t_0,m', Q_\trn}[ \ind{\cE_3(Q_\trn,t_0)\cup\{m'=0\}}] \geq \PP{m'=0} = (1-8\epsilon)^m > \frac{1}{4}$ when $\epsilon\leq 1/16$. Else since $\EE{m'} = 8m\epsilon$, by Chernoff bounds, we have $\PP{m'>32m\epsilon} \leq e^{-24m\epsilon}\leq e^{-3}$. Furthermore, by Lemma~\ref{lmm:ballconc} and the union bound, drawing $m'$ i.i.d. samples $S_0\sim \Unif(\Gamma^{n-1}_+\times\{0\})^{m'}$, with probability at least $1-3m'e^{-\frac{n-1}{128}}$, every $(x,0)\in S_0$ satisfy $\inner{x}{e_1}\leq \frac{1}{8}$ and $\inner{x}{\frac{t_0 - \inner{t_0}{e_1}e_1}{\norm{t_0 - \inner{t_0}{e_1}e_1}} }\leq \frac{1}{8}$. Thus in all, we have, for any algorithm $\A$,
\begin{align}
&\mathbb{E}_{w^*,j,S_{\trn}\sim \cD_{w^*,j}^m, (x,y)\sim \cD_{w^*,j},\cA}[\atk(h^*,S_\trn, \cA)] \nonumber \\
=&\mathbb{E}_{w^*,j,S_{\trn}\sim \cD_{w^*,j}^m, (x,y)\sim \cD_{w^*,j},\cA}[\ind{\A(S_\trn\cup \Adv(h^*,S_\trn,x),x)\neq h^*(x)}] \nonumber \\
\geq& 8 \epsilon\mathbb{E}_{w^*,j,S_{\trn}, x_0\sim \Unif(\Gamma_{w^*,\gamma}) ,\A}[\ind{\cE_2(\A,S_\trn,\Adv,h_{w^*,j},x_0)}\nonumber\\
&\cdot \ind{\cE_1(w^*,S_\trn,x_0,m_{w^*,j})\cup \{m_{w^*,j}=0\}}]   \nonumber \\
=&8\epsilon \mathbb{E}_{t_0, m',Q_\trn}[ \ind{\cE_3(Q_\trn,t_0)\cup\{m'=0\}} \mathbb{E}_{T_{n-1},j,\cA} [ \ind{\cE_2(\A,S_\trn,\Adv,h_{w^*,j},x_0)}  ]]\nonumber\\
=&4\epsilon \mathbb{E}_{t_0,m', Q_\trn}[ \ind{\cE_3(Q_\trn,t_0)\cup\{m'=0\}} \mathbb{E}_{T_{n-1},j,\cA} [ \ind{\cE_2(\A,S_\trn,\Adv,h_{w^*,j},x_0)}  ]]\nonumber\\
&+4\epsilon \mathbb{E}_{t_0,m', Q_\trn}[ \ind{\cE_3(Q_\trn,t_0)\cup\{m'=0\}} \mathbb{E}_{T_{n-1},j,\cA} [ \ind{\neg\cE_2(\A,\wt{S}_\trn ,\Adv,h_{\wt{w}^*,\wt{j}},\wt{x}_0)}  ]] \label{myeq1}\\
=&4\epsilon \mathbb{E}_{t_0,m', Q_\trn}[ \ind{\cE_3(Q_\trn,t_0)\cup\{m'=0\}} \mathbb{E}_{T_{n-1},j,\cA} [ \ind{\cE_2(\A,S_\trn,\Adv,h_{w^*,j},x_0)}  ]]\nonumber\\
&+4\epsilon \mathbb{E}_{t_0,m', Q_\trn}[ \ind{\cE_3(Q_\trn,t_0)\cup\{m'=0\}} \mathbb{E}_{T_{n-1},j,\cA} [ \ind{\neg\cE_2(\A, S_\trn ,\Adv,h_{w^*,j},x_0)}  ]]\label{myeq2}\\
=& 4\epsilon \mathbb{E}_{t_0,m', Q_\trn}[ \ind{\cE_3(Q_\trn,t_0)\cup\{m'=0\}}] \nonumber\\
\geq&\begin{cases}
4 \epsilon (1 -2e^{-\frac{n-1}{128}})(1-e^{-3})(1-96m\epsilon e^{-\frac{n-1}{128}})& \text{when }m \geq \frac{1}{8\epsilon}\\
4 \epsilon(1-8\epsilon)^m &\text{when }m < \frac{1}{8\epsilon}
\end{cases}
\nonumber \\
>& \epsilon,\nonumber 
\end{align}
when $ m \leq \frac{e^\frac{n-1}{128}}{192\epsilon}$ and $n\geq 257$. Here Eq.~\eqref{myeq1} holds due to the fact that $\Adv$ will make $S_\trn\cup \Adv( h_{w^*,j},S_\trn, (x_0,0))$ and $\wt{S}_\trn\cup S_\Adv(h_{\wt{w}^*,\wt{j}},\wt{S}_\trn, (\wt{x}_0,0) )$ identical conditional on $\cE_3(Q_\trn,t_0)$ and Eq.~\eqref{myeq2} holds because $(w,j,S_\trn,x_0)$ is identical to $(\wt{w}^*,\wt{j}, \wt{S}_\trn,\wt{x}_0)$ in distribution. Thus in all, we have shown that for $n\geq 256$, if $ m \leq \frac{e^\frac{n-1}{128}}{192\epsilon}$ then for all algorithm $\cA$,   the expected attackable rate is $ \EEs{w,j,S_\trn\sim \cD^m}{\atk(h^*,S_\trn,\cA)} > \epsilon$. Thus there exists a target function $h^*\in\cH$ and a distribution $\cD$ over $D_{h^*}$ with margin $\gamma = 1/8$ such that $\EEs{S_\trn\sim \cD^m}{\atk_\cD(h^*, S_\trn,\A)}$ $> \epsilon$.
\end{proof}
\section{Proof of Theorem~\ref{thm:timproper}}\label{appx:improper_finite}
We first introduce two lemmas for the proof of the theorem.
\begin{lm}
\label{lmm:bennett}
For any hypothesis class $\cH$ with finite VC dimensional $d$, any distribution $\cD$, with probability at least $1-\delta$ over $S_\trn\sim \cD^m$, for all $h\in \cH$,
\begin{align*}
\err(h) - \err_{S_\trn}(h) \leq \sqrt{\frac{18d(1-\err_{S_\trn}(h))\err_{S_\trn}(h)\ln (em/\delta)}{m-1}} + \frac{15d\ln(em/\delta)}{m-1}\,.
\end{align*}  
\end{lm}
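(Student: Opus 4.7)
The plan is to derive this bound by combining Vapnik-style symmetrization with a Bernstein concentration inequality, exploiting the standard VC growth-function bound $\Pi_\cH(2m) \leq (2em/d)^d$. This is the classical ``relative deviation'' inequality of Vapnik (see also Anthony--Shawe-Taylor, Boucheron--Bousquet--Lugosi); I would either invoke a known version essentially verbatim or sketch the derivation.

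First, I would introduce a ghost sample $S' \sim \cD^m$ independent of $S_\trn$ and apply Vapnik's symmetrization lemma to reduce controlling $\sup_{h\in\cH}(\err(h) - \err_{S_\trn}(h))$, normalized by an appropriate variance proxy, to controlling $\sup_{h}(\err_{S'}(h) - \err_{S_\trn}(h))$ with a factor of $2$ loss in probability. Conditioning on the pooled multiset $S_\trn \cup S'$, the effective hypothesis class has cardinality at most $\Pi_\cH(2m) \leq (2em/d)^d$, so a union bound over dichotomies replaces $\cH$ by a finite set.

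Next, for each fixed dichotomy on $S_\trn \cup S'$, treat the split into ``training'' and ``ghost'' halves as a uniformly random partition of the pooled sample. A Bernstein/Bennett inequality applied to sampling without replacement (or the classical permutation argument) gives, for the corresponding $h$, an exponential tail in terms of the pooled empirical variance $\hat\sigma^2(h) \le \hat{p}(h)(1-\hat{p}(h))$ with $\hat{p}(h)$ the pooled empirical error. Setting the total failure probability to $\delta$ and inverting gives, for all $h\in\cH$ simultaneously,
\[
\err(h) - \err_{S_\trn}(h) \;\leq\; C_1\sqrt{\frac{\hat\sigma^2(h)\, d\ln(em/\delta)}{m}} \;+\; \frac{C_2\, d\ln(em/\delta)}{m}
\]
for absolute constants $C_1,C_2$.

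Finally, I would de-symmetrize: standard concentration shows the pooled empirical variance is comparable to the one on $S_\trn$ alone, allowing the right-hand side to be written using $\err_{S_\trn}(h)(1-\err_{S_\trn}(h))$. Solving the resulting quadratic inequality in $\err(h)-\err_{S_\trn}(h)$ and tracking constants (accounting for the factor $\tfrac{m}{m-1}$ arising from the sampling-without-replacement variance) yields the stated form with $18$ and $15$. The main obstacle is purely bookkeeping: symmetrization and the VC union bound are routine, but matching the specific constants in the statement requires careful choices in the Bernstein step and in replacing pooled variance by empirical variance on $S_\trn$. For this reason, the cleanest implementation is to cite a textbook version of the relative VC inequality (e.g.\ Vapnik, \emph{Statistical Learning Theory}, Chapter~4) and translate its constants into the form stated here.
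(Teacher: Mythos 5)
Your route --- ghost-sample symmetrization, a growth-function union bound over dichotomies on the doubled sample, a Bernstein/Bennett tail for the random split, and then de-symmetrization --- is a genuinely different argument from the paper's, which is essentially a one-liner. The paper cites Theorem~6 of \cite{maurer2009empirical}, an \emph{empirical} Bennett inequality that holds uniformly over a function class and already comes packaged with the sample-variance factor $\err_{S_\trn}(h)(1-\err_{S_\trn}(h))$, the constants $18$ and $15$, and the complexity term $\ln(\Lambda(m)/\delta)$; Sauer's lemma then gives $\ln(\Lambda(m)/\delta) \leq d\ln(em/\delta)$. The whole point of that citation is to avoid exactly the bookkeeping you flag as the main obstacle.

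In your plan that bookkeeping is a real gap, not mere hygiene. The sampling-without-replacement Bernstein step, the de-symmetrization that replaces pooled variance by $\err_{S_\trn}(h)(1-\err_{S_\trn}(h))$, and the fact that symmetrization naturally produces $\Lambda(2m)$ rather than $\Lambda(m)$ each introduce constants and factors that your outline never actually combines, and nothing pins them down to the stated $18$ and $15$. Your proposed fallback --- citing Vapnik's relative VC inequality and ``translating its constants'' --- would also not close the gap: the classical relative-deviation statements normalize by $\sqrt{\err(h)}$ (yielding a $\sqrt{\err_{S_\trn}(h)}$ dependence after inversion), not by the empirical variance $\sqrt{\err_{S_\trn}(h)(1-\err_{S_\trn}(h))}$, and carry different constants. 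The missing ingredient is to reach for the empirical-Bernstein literature, namely \cite{maurer2009empirical}, rather than the symmetrization literature; once you do, the lemma really is just that theorem plus Sauer.
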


\begin{proof}
This lemma is a direct result of empirical Bennett's inequality (Theorem~6 by~\cite{maurer2009empirical}) and Sauer's lemma. Let $\Lambda(\cdot)$ denote the growth function of $\cH$. Then by empirical Bennett's inequality (Theorem~6 by~\cite{maurer2009empirical}), we have with probability at least $1-\delta$ over $S_\trn\sim \cD^m$,
\begin{align*}
      \err(h)-\err_{S_\trn}(h)\leq \sqrt{\frac{18(1-\err_{S_\trn}(h))\err_{S_\trn}(h)\ln(\Lambda(m)/\delta)}{m-1}} +\frac{15\ln (\Lambda(m)/\delta)}{m-1},\forall h\in\cH\,.
\end{align*}
By Sauer's lemma, $\Lambda(m)\leq (\frac{em}{d})^d$, which completes the proof.
\end{proof}

\begin{lm}
\label{lmm:subsample}
For any hypothesis class $\cH$ with finite VC dimensional $d$, a fixed data set $S$ with $m$ elements, realizable by some $h^*\in \cH$. Let $S_0$ be a set with size $m_0<m$ drawn from $S$ uniformly at random without replacement. Then with probability at least $1-\delta$, for all $h\in \cH$ with $\err_{S_0}(h) = 0$, we have
\begin{align*}
\err_{S}(h)\leq \frac{d\ln(em/d)+\ln(1/\delta)}{m_0}\,.
\end{align*} 
\end{lm}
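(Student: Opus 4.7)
The plan is to run a standard ``growth-function plus union bound'' argument, adapted to sampling without replacement from the finite population $S$. First I would reduce to a finite hypothesis class by restricting: let $\cH|_S = \{(h(x_1),\ldots,h(x_m)) : h \in \cH\}$ denote the set of distinct labelings of $S$ produced by $\cH$. Since $S$ is realized by $h^*$, a classifier $h \in \cH$ with $\err_{S_0}(h)=0$ depends on $h$ only through its restriction to $S$, so it suffices to control the bad event uniformly over the at most $|\cH|_S|$ distinct behaviors on $S$. By Sauer's lemma, $|\cH|_S| \leq (em/d)^d$.

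Next I would fix one behavior $h \in \cH|_S$ and let $k = m \cdot \err_S(h)$ be the number of points in $S$ that $h$ mislabels. The event $\err_{S_0}(h)=0$ is exactly the event that the uniformly random size-$m_0$ subset $S_0$ avoids all $k$ error points. Since $S_0$ is drawn without replacement,
\[
\Pr[\err_{S_0}(h)=0] = \frac{\binom{m-k}{m_0}}{\binom{m}{m_0}} = \prod_{i=0}^{m_0-1}\frac{m-k-i}{m-i} \leq \left(\frac{m-k}{m}\right)^{m_0} = (1-\err_S(h))^{m_0},
\]
where the inequality uses that $\frac{m-k-i}{m-i} \leq \frac{m-k}{m}$ for $0 \leq i \leq m_0-1$ (equivalent to $ik \geq 0$). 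In particular, if $\err_S(h) > \epsilon$ then this probability is at most $e^{-\epsilon m_0}$.

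A union bound over the at most $(em/d)^d$ distinct patterns in $\cH|_S$ then gives
\[
\Pr\bigl[\exists h \in \cH : \err_{S_0}(h) = 0 \text{ and } \err_S(h) > \epsilon\bigr] \leq (em/d)^d \, e^{-\epsilon m_0}.
\]
Setting the right-hand side equal to $\delta$ and solving for $\epsilon$ yields $\epsilon = (d\ln(em/d) + \ln(1/\delta))/m_0$, which is exactly the stated bound. The only non-routine step is the without-replacement probability estimate in the displayed line above; once that inequality (showing the hypergeometric tail dominates the binomial in the direction we need) is in hand, the rest is just Sauer's lemma plus a union bound, so I do not expect any real obstacle.
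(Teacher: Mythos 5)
Your proof is correct and follows essentially the same route as the paper: the hypergeometric estimate $\binom{m-k}{m_0}/\binom{m}{m_0}\leq(1-k/m)^{m_0}$ for a fixed error pattern, Sauer's lemma to bound the number of distinct behaviors on $S$ by $(em/d)^d$, and a union bound, then solving for $\epsilon$. The only cosmetic difference is that the paper fixes $k=\lceil\epsilon m\rceil$ up front while you phrase it via $\err_S(h)>\epsilon$; the two are equivalent.
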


\begin{proof}
For any $h\in \cH$, we have
  \begin{align*}
      &\PP{ \err_{S}(h)> \epsilon, \err_{S_{0}}(h)=0}
      \leq \frac{{m-k\choose m_0}}{{m\choose m_0}}
      \leq (1-k/m)^{m_0}\,,
  \end{align*}
  where $k = \ceil{\epsilon\cdot m}$.
By Sauer's lemma, $\Lambda(m)\leq (\frac{em}{d})^d$. Taking the union bound completes the proof.
\end{proof}

\begin{proof}[of Theorem~\ref{thm:timproper}]
Let $m=|S_\trn|$ be the number of training samples. Let $h_{i}=\cL(S^{(i)})$ denote the output hypothesis of block $i$. Let $N_{c} =\{i_1,\ldots,i_{n_c}\}\subseteq [10t+1]$ denote the set of index of non-contaminated blocks without poisoning points with $n_c = \abs{N_{c}}$. Each block has $m_0=\floor{\frac{m}{10t+1}}$ or $m_0 = \ceil{\frac{m}{10t+1}}$ data points (dependent on the actual number of poison points injected by the attacker) and at least $9t+1$ blocks do not contain any poison points, i.e., $n_c\geq 9t+1$. If a point $x$ is predicted incorrectly, then it is predicted incorrectly by more than $4t+1$ non-contaminated classifiers. Given training data $S_\trn$, for any $x\in\cX$, any $m_0$ and any $t$-point attacker $\Adv$ to make each block has $m_0$ points, we have
\begin{align*}
      &\PPs{\A}{\A(S_\trn\cup \Adv(S_\trn,h^*,x),x)\neq h^*(x)} \\
      =& \PPs{\A}{\sum_{i=1}^{10t+1}\ind{h_{i}(x)\neq h^*(x)}\geq 5t+1}\\
      \leq& \PPs{\A}{\sum_{i\in N_c}\ind{h_{i}(x)\neq h^*(x)}\geq 4t+1}\\
      \leq& \frac{1}{4t+1}\EEs{N_c}{\EEs{\A}{\sum_{i\in N_c}\ind{h_{i}(x)\neq h^*(x)}\big|N_c}}\\
      \leq& 2.5\EEs{N_c}{\EEs{\A}{\ind{h_{i_1}(x)\neq h^*(x)}|N_c}}\,.
\end{align*}
Notice here, if $m_0$ is fixed, 
the randomness of $\A$ can be regarded as selecting $N_c$ first and drawing $n_cm_0$ samples uniformly at random from $S_\trn$ without replacement to construct $S^{(i_1)}, S^{(i_2)}\,\ldots,S^{(i_{n_c})}$.
More specifically, conditioned on $N_c$, the randomness of $\cA$ on $h_{i_1}$ is only through drawing $S^{(i_1)}$, i.e., drawing $m_0$ samples without replacement from the clean training examples $S_\trn$. The important thing is that if $m_0$ is fixed, this distribution does not depend on the attacker. Hence,
\begin{align}\label{eq:atkvserr}
    &\atk(t,h^*,S_\trn,\A)\nonumber\\
    \leq& \EEs{(x,y)\sim \cD}{\sup_{\Adv} \PPs{\cA}{\A(S_\trn\cup \Adv(S_\trn,h^*,x),x)\neq h^*(x)}}\nonumber\\
    \leq & \EEs{(x,y)\sim \cD}{\sup_{\Adv: m_0=\floor{\frac{m}{10t+1}}} \PPs{\cA}{\A(S_\trn\cup \Adv(S_\trn,h^*,x),x)\neq h^*(x)}}\nonumber\\
    &+ \EEs{(x,y)\sim \cD}{\sup_{\Adv: m_0=\ceil{\frac{m}{10t+1}}} \PPs{\cA}{\A(S_\trn\cup \Adv(S_\trn,h^*,x),x)\neq h^*(x)}}\nonumber\\
    \leq& 2.5  \EEs{(x,y)\sim \cD}{  \sup_{\Adv: m_0=\floor{\frac{m}{10t+1}}} \EEs{N_c}{\EEs{\A}{\ind{h_{i_1}(x)\neq h^*(x)}|N_c}} }\\ 
    &+ 2.5  \EEs{(x,y)\sim \cD}{  \sup_{\Adv: m_0=\ceil{\frac{m}{10t+1}}} \EEs{N_c}{\EEs{\cA}{\ind{h_{i_1}(x)\neq h^*(x)}|N_c}} }\\
    \leq& 2.5\EEs{N_c}{\EEs{S^{(i_1)}}{\err\left(h_{i_1}\right)\Big|N_c}; m_0 =  \floor{\frac{m}{10t+1}}}\nonumber\\
    &+ 2.5\EEs{N_c}{\EEs{S^{(i_1)}}{\err\left(h_{i_1}\right)\Big|N_c}; m_0 =  \ceil{\frac{m}{10t+1}}}\,.
\end{align}
In the following, we will bound the error of $h_{i_1}$ for each value of $m_0$. Let $\cE$ denote the event of $\err_{S_\trn}(h_{i_1})\leq \frac{(d+1)\ln(em/d)}{m_0}$ and by Lemma~\ref{lmm:subsample} we have $\PPs{\A}{\neg \cE|N_c}\leq \frac{d}{em}$. Then with probability at least $1-\delta$ over the choice of $S_\trn$, for each fixed value of $m_0$, we have
\begin{align}
    &\EEs{\cA}{\err(h_{i_1})|N_c}\nonumber\\
    = & \EEs{\cA}{\err(h_{i_1})\ind{\cE}|N_c} + \EEs{\cA}{\err(h_{i_1})\ind{\neg \cE}|N_c}\nonumber\\
    = & \EEs{\cA}{(\err(h_{i_1})-\err_{S_\trn}(h_{i_1})+\err_{S_\trn}(h_{i_1}))\ind{\cE}|N_c} +  \PPs{\cA}{\neg \cE|N_c}\nonumber\\
    \leq &\sqrt{\frac{18d\ln (em/\delta)(d+1)\ln(em/d)}{(m-1)m_0}} + \frac{15d\ln(em/\delta)}{m-1} +\frac{(d+1)\ln(em/d)}{m_0}+\frac{d}{em}\label{eq:applybennett} \\
    \leq &\frac{1}{2} \left(  \frac{6d\ln(em/\delta)}{m-1} + \frac{6d\ln(em/d)}{m_0} \right) + \frac{15d\ln(em/\delta)}{m-1} +\frac{(d+1)\ln(em/d)}{m_0}+\frac{d}{em}\nonumber \\
    \leq & \frac{24d\ln(em)}{m_0} + \frac{19d\ln(1/\delta)}{10tm_0}\,, \label{eq:errimproper}
\end{align}
where Eq.~\eqref{eq:applybennett} applies Lemma~\ref{lmm:bennett}. Then when $m_0\geq  \frac{960 d}{\epsilon} \ln \frac{2640 e td}{\epsilon}  + \frac{19d\ln(1/\delta)}{\epsilon t}  $, we have that $\EEs{\cA}{\err(h_{i_1})|N_c}\leq \frac{24d\ln(em)}{m_0} + \frac{19d\ln(1/\delta)}{10tm_0}\leq 0.2\epsilon$. Combined with Eq.~\eqref{eq:atkvserr}, we have that when $m\geq (10t+1)(\frac{960 d}{\epsilon} \ln \frac{2640 e td}{\epsilon} + \frac{19d\ln(1/\delta)}{\epsilon t}+1)  $, the $t$-point attackable rate is $\atk(t,h^*,S_\trn,\A)\leq \epsilon$.
\end{proof}
\section{Proofs and discussions for $(t,\epsilon,\delta)$-robust proper learners}\label{appx:proper_finite}
\subsection{Proof of Theorem~\ref{thm:tpropproj}}
\begin{proof}
Similar to the proof of Theorem~\ref{thm:timproper}, we let $m=|S_\trn|$ be the number of training samples, and let $N_{c} =\{i_1,\ldots,i_{n_c}\}\subseteq [10tk_p+1]$ denote the set of index of blocks without poisoning points. Each block has $m_0=\floor{\frac{m}{10tk_p+1}}$ or $m_0=\ceil{\frac{m}{10tk_p+1}}$ data points and at least $t(10k_p-1)+1$ blocks do not contain any attacking points, i.e., $n_c\geq t(10k_p-1)+1$. 
Given any fixed $x\in\cX$, if $\sum_{i=1}^{10tk_p+1}\ind{h_{i}(x)\neq h^*(x)}\leq 10t < \frac{10tk_p+1}{k_p}$, then $x\in \cX_{\cH',k_p}$, thus $\hat{h}(x) = \Major(\cH',x) = h^*(x)$. Thus we have, given training data $S_\trn$, for any $x\in\cX$, 
any $m_0$ and any $t$-point attacker $\Adv$ to make each block has $m_0$ points, we have
\begin{align*}
      &\PPs{\A}{\A(S_\trn\cup \Adv(S_\trn,h^*,x),x)\neq h^*(x)} \\
      \leq& \PPs{\A}{\sum_{i=1}^{10tk_p+1}\ind{h_{i}(x)\neq h^*(x)}\geq 10t+1}\\
      \leq& \PPs{\A}{\sum_{i\in N_c}\ind{h_{i}(x)\neq h^*(x)}\geq 9t+1}\\
      \leq& \frac{1}{9t+1}\EEs{N_c}{\EEs{\A}{\sum_{i\in N_c}\ind{h_{i}(x)\neq h^*(x)}\Big|N_c}}\\
      \leq& \frac{10}{9}k_p\EEs{N_c}{\EEs{\A}{\ind{h_{i_1}(x)\neq h^*(x)}\Big|N_c}}\,,
\end{align*}
which indicates 
\begin{align*}
    \atk(t,h^*,S_\trn,\A)&\leq \frac{10}{9}k_p \EEs{N_c}{\EEs{\cA}{\err\!\left(h_{i_1}\right)|N_c}; m_0\!=\!\floor{\frac{m}{10tk_p+1}}} \\
    &\quad + \frac{10}{9}k_p\EEs{N_c}{ \EEs{\cA}{\err\!\left(h_{i_1} \right)|N_c};m_0\!=\!\ceil{\frac{m}{10tk_p+1}}}. 
\end{align*}
Then we bound $\EEs{\cA}{\err(h_{i_1})|N_c}$ in the same way as the proof of Theorem~\ref{thm:timproper}. Following the same calculation process of Eq.~\eqref{eq:errimproper}, we have with probability at least $1-\delta$, $\EEs{\cA}{\err(h_{i_1})|N_c}\leq \frac{24d\ln(em)}{m_0} + \frac{19d\ln(1/\delta)}{10k_ptm_0}$ by using Lemma~\ref{lmm:subsample}
and Lemma~\ref{lmm:bennett}. Then when $m_0\geq \frac{960 dk_p}{\epsilon} \ln \frac{2640 e tdk_p^2}{\epsilon} + \frac{19d\ln(1/\delta)}{\epsilon t} $, we have $\EEs{\cA}{\err(h_{i_1})|N_c}\leq \frac{24d\ln(em)}{m_0} + \frac{19d\ln(1/\delta)}{10k_ptm_0}\leq  \frac{0.2}{k_p}\epsilon $ . Therefore, we have that when $m\geq (10k_pt+1)(\frac{960 dk_p}{\epsilon} \ln \frac{2640 e tdk_p^2}{\epsilon} + \frac{19d\ln(1/\delta)}{\epsilon t}+1)$, the $t$-point attackable rate is $\atk(t,h^*,S_\trn,\A)\leq \epsilon$.
\end{proof}

\subsection{A proper learner for hypothesis class with no limitation over $k_p$}
\begin{algorithm}[H] \caption{A robust proper learner for $t$-point attacker}\label{alg:propert}
  {\begin{algorithmic}[1]
      \STATE \textbf{input}: a proper ERM learner $\cL$, data $S$
      \STATE uniformly at random pick $\floor{\frac{\abs{S}}{3t/\epsilon}}$ points $S_0$ from $S$ \emph{with replacement}
      \STATE \textbf{return} $\cL(S_0)$
  \end{algorithmic}}
\end{algorithm}

\begin{thm}
  For any hypothesis class with VC dimension $d$, with any proper ERM learner $\cL$, Algorithm~\ref{alg:propert} can $(t,\epsilon,\delta)$-robustly learn $\cH$ using $m$ samples where
  \[
      m=O\left(\frac{dt}{\epsilon^2} \log \frac{d}{\epsilon}+\frac{d}{\epsilon}\log \frac{1}{\delta}\right) \,.
  \]
\end{thm}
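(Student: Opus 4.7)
The plan is to exploit that the subsample $S_0$ of size $m_0 = \lfloor m\epsilon/(3t) \rfloor$ is drawn \emph{with replacement} from $S_\trn \cup \Adv$, so for each test point $x$ the subsample rarely hits any of the at most $t$ poison examples. Letting $\cE_1(x)$ be the event that every sampled index falls inside $S_\trn$, each individual draw is ``clean'' with probability at least $m/(m+t)$, so $\PP{\cE_1(x)} \geq (m/(m+t))^{m_0} \geq 1 - m_0 t/(m+t) \geq 1 - \epsilon/3$, uniformly in $x$ and $\Adv$. Crucially, conditional on $\cE_1(x)$ the subsample $S_0$ is distributed as $\Unif(S_\trn)^{m_0}$ and is independent of both the attacker's choice $\Adv$ and the test point $x$. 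Introducing a fresh clean subsample $S_0^* \sim \Unif(S_\trn)^{m_0}$, this decouples the attacker and yields, uniformly in $\Adv$,
\[
\atk_\cD(t,h^*,S_\trn,\cA) \;\leq\; \frac{\epsilon}{3} + \EEs{S_0^*}{\err_\cD(\cL(S_0^*))}.
\]

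To bound the remaining expectation I would use a two-level generalization argument. First, for any fixed $S_\trn$, any $h\in\cH$ with $\err_{S_\trn}(h)>\alpha$ is consistent with $S_0^*$ with probability at most $(1-\alpha)^{m_0}\leq e^{-\alpha m_0}$, so Sauer's lemma combined with a union bound over the $(em/d)^d$ effective hypotheses gives $\PPs{S_0^*}{\err_{S_\trn}(\cL(S_0^*))>\alpha}\leq (em/d)^d e^{-\alpha m_0}$; integrating in $\alpha$ yields $\EEs{S_0^*}{\err_{S_\trn}(\cL(S_0^*))}=O(d\log(m/d)/m_0)$ \emph{deterministically} in $S_\trn$. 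Second, by the standard realizable-case VC uniform convergence bound (relative Chernoff form), with probability at least $1-\delta$ over $S_\trn\sim\cD^m$ every $h\in\cH$ satisfies $\err_\cD(h)\leq 2\err_{S_\trn}(h)+C(d\log(em/d)+\log(1/\delta))/m$. Applying this to $\cL(S_0^*)$ and combining with the previous expectation bound gives, on the $(1-\delta)$-event over $S_\trn$,
\[
\EEs{S_0^*}{\err_\cD(\cL(S_0^*))} \;=\; O\!\left(\frac{d\log(m/d)}{m_0}+\frac{d\log m+\log(1/\delta)}{m}\right).
\]
Plugging in $m_0=m\epsilon/(3t)$ and asking each term to be at most $\epsilon/3$ inverts to $m=O(dt\log(d/\epsilon)/\epsilon^2+d\log(1/\delta)/\epsilon)$.

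The main obstacle is the conditioning step: one must argue carefully that, even though the attacker picks $\Adv(h^*,S_\trn,x)$ adaptively with full knowledge of $S_\trn$ and $x$, the conditional law of $S_0$ given $\cE_1(x)$ is genuinely $\Unif(S_\trn)^{m_0}$ and is independent of both $\Adv$ and $x$, so that the per-$x$ failure probability reduces to an ``attacker-free'' ERM analysis on a fresh subsample. Once this decoupling is in place the estimate becomes a standard two-step VC argument (subsample $\to S_\trn$ via Sauer's lemma \emph{in expectation}; $S_\trn\to\cD$ via realizable uniform convergence \emph{in probability}); the final bound's asymmetric form, in which $\log(1/\delta)$ multiplies $d/\epsilon$ rather than $t/\epsilon^2$, arises precisely because the first step is controlled by an expectation (no $\delta$ budget is spent there) and only the second step consumes the confidence parameter.
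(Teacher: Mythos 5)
Your proposal is correct and follows the paper's proof essentially step for step: condition on the event that every draw of $S_0$ lands in $S_\trn$ (probability $\geq 1-O(\epsilon)$), observe that the conditional law is $\Unif(S_\trn)^{m_0}$ and hence independent of the attacker and the test point, and then run a two-stage VC argument from $S_0$ to $S_\trn$ and from $S_\trn$ to $\cD$. The only cosmetic difference is the second-stage concentration inequality — you invoke a relative-Chernoff VC bound where the paper uses an empirical Bennett inequality — and you bound the $S_0\!\to\!S_\trn$ step in expectation by integrating the Sauer tail rather than via a high-probability event, both of which yield the same rates.
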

\begin{proof}
  Let $\cE$ denote the event that every point in $S_0$ is selected from the training data $S_\trn$. Let $m=\abs{S_\trn}$ and $h_0=\cL(S_0)$. Let us denote the size of $S_0$ by $m_0 = \floor{\frac{\abs{S}}{3t/\epsilon}}$, which can be $\floor{\frac{m}{3t/\epsilon}}$ or $\ceil{\frac{m}{3t/\epsilon}}$. Since $\PPs{\A}{\cE} \geq (1-\frac{t}{3tm_0/\epsilon})^{m_0}\geq 1-\frac{\ln 4}{3}\epsilon$, we have $\PPs{\A}{\neg \cE}\leq \frac{\ln 4}{3}\epsilon  $. Then for any $t$-point attacker $\Adv$, we have
  \begin{align*}
    &\PPs{\A}{\A(S_\trn\cup \Adv(h^*,S_\trn,x),x)\neq h^*(x)}\\
    = &\PPs{\A}{h_0(x)\neq h^*(x)\cap \cE}+\PPs{\A}{h_0(x)\neq h^*(x)\cap \neg \cE}\\
    \leq &\PPs{\A}{h_0(x)\neq h^*(x)| \cE}+\PPs{\A}{\neg \cE}\\
    \leq &\PPs{\A}{h_0(x)\neq h^*(x)| \cE}+ \frac{\ln 4}{3}\epsilon\,,
  \end{align*}
  which indicates $\atk(t,h^*,S_\trn,\A)\leq \EEs{\cA}{\err(h_{0})\Big|\cE; m_0 \!=\! \floor{\frac{m}{3t/\epsilon}}}+\EEs{\cA}{\err(h_{0})\Big|\cE; m_0 \!=\! \ceil{\frac{m}{3t/\epsilon}}}$ $+ \frac{\ln 4}{3}\epsilon$. Conditioned on $\cE$, $S_0$ is a set of i.i.d. samples uniformly drawn from $S_\trn$. By classic uniform convergence bound, $\err_{S_\trn}(h_0)\leq \frac{2}{m_0}(d\log(2em_0/d)+\log(2/\delta_0))$ with probability at least $1-\delta_0$ over the choice of $S_0$ (for a fixed $S_\trn$). Let $\cE_1$ denote the event of $\err_{S_\trn}(h_0)\leq \frac{2}{m_0}(d+1)\log(2em_0/d)$ and it is easy to check that $\PPs{\A}{\neg \cE_1}\leq \frac{d}{em_0}$. Similar to the proof of Theorem~\ref{thm:timproper}, with probability at least $1-\delta$, we have 
  \begin{align}
      &\EEs{\cA}{\err(h_{0})|\cE}\nonumber \\
      = & \EEs{\cA}{\err(h_{0})\ind{\cE_1}|\cE} + \EEs{\cA}{\err(h_{0})\ind{\neg \cE_1}|\cE}\nonumber\\
      \leq &\sqrt{\frac{36d\ln (em/\delta)(d+1)\log(2em_0/d)}{(m-1)m_0}} + \frac{15d\ln(em/\delta)}{m-1} +\frac{2}{m_0}(d+1)\log\frac{2em_0}{d}+\frac{d}{em_0}\label{eq:applybennett2}\\
      \leq & \frac{1}{2}  \left(    \frac{6d\ln(em/\delta)}{m-1} + \frac{12d\ln(2em_0/d)}{m_0}    \right) +  \frac{15d\ln(em/\delta)}{m-1} +   \frac{6d \ln (2em_0/d)}{m_0}   +\frac{d}{em_0} \nonumber\\
    \leq & \frac{13d\ln(2em_0/d)}{m_0} + \frac{18d\ln(em/\delta)}{m-1}\nonumber\\
    \leq & \frac{31d\ln(2em_0)}{m_0} + \frac{18d\ln(1/\delta)}{(3t/\epsilon-1)m_0}\,,\nonumber
  \end{align}
  where Eq.~\eqref{eq:applybennett2} adopts Lemma~\ref{lmm:bennett}. When $m_0\geq  \frac{1120d}{\epsilon} \ln  \frac{560e d}{\epsilon} + \frac{72 d\ln (1/\delta)}{t} $,  $\EEs{\cA}{\err(h_{0})|\cE} \leq 0.25\epsilon$. Hence, with probability at least $1-\delta$, the $t$-point attackable rate is $\atk(t,h^*,S_\trn,\A)\leq \epsilon$ by using $m$ training samples where
  \[m= \frac{3t}{\epsilon}\left(\frac{1120d}{\epsilon} \ln  \frac{560e d}{\epsilon} + \frac{72 d\ln (1/\delta)}{t}+1\right)\,.\]
\end{proof}

\section{Proof of Theorem~\ref{thm:lbfnt}}\label{appx:lb_finite}
\begin{proof}[of Theorem~\ref{thm:lbfnt}]
Now we show that for any sample size $m>0$, there exists a hypothesis class $\cH$ with VC dimension $5d$, a target function $h^*\in \cH$ and a data distribution $\cD$ on $D_{h^*}$ such that $\EEs{S_\trn\sim \cD^m}{\atk(t,h^*,S_\trn, \A)}\geq \min(\frac{3td}{64m},\frac{3}{8})$. We start with proving this statement in the base case of $d=1$ and then extend it to $d\geq 2$. We divide the proof into four parts: a) construction of the hypothesis class, the target function and the data distribution in $d=1$, b) computation of the VC dimension of the hypothesis class, c) construction of the attacker, and d) generalization to $d\geq 1$.
  
\paragraph{The hypothesis class, the target function and the data distribution.}
  
We denote by $\sph = \sph^3(\bZero,1)$ the sphere of the $3$-d unit ball centered at the origin. First, consider a base case where the domain $\cX = \sph\cup \bZero$, which is the union of the sphere of a unit ball centered at the origin and the origin. 
For any point $q\in \sph$, We let $C_q=\sph^3(q,1)\cap \sph$ denote the circle of intersection of the sphere of two unit balls. Then we define $h_{q,1}=\ind{C_q}$, which only classifies the circle $C_q$ positive and $h_{q,0}=\ind{\sph\setminus C_q}$ only classifies the circle and the origin negative. Our hypothesis class is $\cH = \{h_{q,j}|q\in \sph, j\in\{0,1\}\}$. We draw our target $h^*$ uniformly at random from $\cH$, which is equivalent to drawing $q\sim \Unif(\sph)$ and $j\sim \Ber(1/2)$. 
  The marginal data distribution $\cD_{q,j,\cX}$ puts probability mass $\zeta\in (0,\frac{t}{8m}]$ uniformly on the circle $C_q$ and puts the remaining probability mass on $\bZero$, where the value of $\zeta$ is determined later. We draw $S_\trn\sim \cD_{q,j}^m$. 
  

\paragraph{The VC dimension of the hypothesis class.} Then we show that the VC dimension of $\cH$ is $5$. Since all classifiers in $\cH$ will classify $\bZero$ as negative, $\bZero$ cannot be shattered and thus, we only need to find shattered points on the sphere. Then we show that $\cH$ can shatter $5$ points. It is not hard to check that the following set of $5$ points can be shattered: $\left\{(\frac{1}{2}, \frac{\sqrt{3}}{2} \cos(\frac{2k\pi}{5}), \frac{\sqrt{3}}{2} \sin(\frac{2k\pi}{5})) \right\}_{k=1}^5$.

Then we show that $\cH$ cannot shatter $6$ points. For any $6$ points $P=\{p_1,\ldots,p_6\}$, if the $6$ points can be shattered, then for any subset $P_1\subseteq P$ with size $3$, there exists a hypothesis classifying $P_1$ as $0$s and $P\setminus P_1$ as $1$s. That is, there exists a circle of radius $\frac{\sqrt{3}}{2}$ such that either only $P_1$ is on the circle or only $P\setminus P_1$ is on the circle. Then we claim that no $4$ points can be on a circle of radius $\frac{\sqrt{3}}{2}$. If there are $4$ points, w.l.o.g., $\{p_1,p_2,p_3,p_4\}$ on a circle of radius $\frac{\sqrt{3}}{2}$, then $\{p_i, p_5, p_6\}$ has to be on a circle $C_i$ of radius $\frac{\sqrt{3}}{2}$, where $1\leq j\neq i\leq 4$, $p_j\notin C_i$. But since the radius is fixed, there are only two different circles passing through $\{p_5,p_6\}$. Hence, there exists $1\leq i\neq j \leq 4$ such that $C_i = C_j$, which contradicts that $p_j\notin C_i$. 
  
Then w.l.o.g., if $\{p_1,p_2,p_3\}$ is on a circle of radius $\frac{\sqrt{3}}{2}$. Consider $\{p_1,p_2,p_4\}$ and $\{p_3,p_5,p_6\}$, if $\{p_1,p_2,p_4\}$ is on a circle of radius $\frac{\sqrt{3}}{2}$, then $\{p_3,p_4,p_5,p_6\}$ is on a circle of radius $\frac{\sqrt{3}}{2}$ (to label $\{p_1,p_2\}$ different from $\{p_3,p_4,p_5,p_6\}$); if $\{p_3,p_5,p_6\}$ is on a circle of radius $\frac{\sqrt{3}}{2}$, then there are three sub-cases: $\{p_1,p_3,p_5\}$ is on a circle of radius $\frac{\sqrt{3}}{2}$, $\{p_2,p_3,p_5\}$ is on a circle of radius $\frac{\sqrt{3}}{2}$ and both $\{p_2,p_4,p_6\},\{p_1,p_4,p_6\}$ are on two circles of radius $\frac{\sqrt{3}}{2}$. For the first case, $\{p_1,p_2,p_4,p_6\}$ is on a circle of radius $\frac{\sqrt{3}}{2}$ (to label $\{p_3,p_5\}$ different from $\{p_1,p_2,p_4,p_6\}$). For the second case, similarly $\{p_1,p_2,p_4,p_6\}$ is on a circle of radius $\frac{\sqrt{3}}{2}$. For the third case, $\{p_1,p_2,p_3,p_5\}$ is on a circle of radius $\frac{\sqrt{3}}{2}$. Therefore, any $6$ points cannot be shattered.

\paragraph{The attacker.}
We adopt the reflection function $m_{x_0}(\cdot)$ defined in the proof of Theorem~\ref{thm:lblinear} where $m_{x_0}(x) = {2\inner{x_0}{x}}x_0-x$ for $x\in \sph$. For $S_\trn\sim \cD_{q,j}^m$, we let $S_q=C_q\cap S_{\trn,\cX}$ denote the training instances in $C_q$ (with replicants) and we further define $m_{x_0}(S_\trn) = \{ (m_{x_0}(x),1-y)|(x,y)\in S_q\times \cY \}$, and let
\begin{align*}
    \Adv(h^*,S_\trn,x_0)=\begin{cases}
      m_{x_0}(S_\trn)& \text{if } x_0\notin S_{\trn,\cX}, \abs{S_q}\leq t\,,\\
      \emptyset & \text{else}\,.
    \end{cases}
\end{align*}
If $x_0\notin S_{\trn,\cX}$, then $h_{q,j}$ is consistent with $S_\trn\cup \Adv(h_{q,j},S_\trn,x_0)$. That is, $\Adv(h_{q,j},S_\trn,x_0)$ is clean-labeled.

\paragraph{Analysis.}
Due to the construction, we have
\begin{align*}
      \EEs{S_\trn\sim \cD_{q,j}^m}{\abs{S_q}} = m\zeta\,.
\end{align*}
Then by Markov's inequality, we have
\begin{align*}
      \PPs{{S_\trn\sim \cD_{q,j}^m}}{\abs{S_q}\geq t}\leq \frac{m\zeta}{t}< \frac{1}{4}\,.
\end{align*}
Let $\cE_1(\A,\Adv,h_{q,j},S_\trn,x_0)$ denote the event of $\{\A(S_\trn\cup \Adv(h_{q,j},S_\trn,x_0), x_0)\neq h_{q,j}(x_0)\}$ and let $\cE_2(S_\trn,x_0,q)$ denote the event of $\{\abs{S_q}\leq t \cap x_0\notin S_{\trn,\cX}\}$. It is easy to check that $\cE_2(S_\trn,x_0,q) =\cE_2(m_{x_0}(S_\trn),x_0,m_{x_0}(q))$. Besides, conditional on $\cE_2(S_\trn,x_0,q)$, we have the poisoned data set $S_\trn\cup \Adv(h_{q,j},S_\trn,x_0) = m_{x_0}(S_\trn)\cup \Adv(h_{m_{x_0}(q),1-j},m_{x_0}(S_\trn),x_0)$ and thus, any algorithm $\A$ will behave the same at the test instance $x_0$ no matter whether the training set is $S_\trn$ or $m_{x_0}(S_\trn)$. Since $h_{q,j}(x_0)\neq h_{m_{x_0}(q),1-j}(x_0)$, we have $\ind{\cE_1(\A,\Adv,h_{q,j},S_\trn,x_0)} = \ind{\neg\cE_1(\A,\Adv,h_{m_{x_0}(q),1-j},m_{x_0}(S_\trn),x_0)}$ conditional on $\cE_2(S_\trn,x_0,q)$. 
Let $f_{q}(x)$ denote the probability density function of $\Unif(C_q)$ and then we have $f_{q}(x) = f_{m_{x_0}(q)}(m_{x_0}(x))$. 
For any fixed $x_0$, the distributions of $q$ and $m_{x_0}(q)$ and the distributions of $j$ and $1-j$ are the same respectively. 
Since $S_\trn$ are samples drawn from $\cD_{q,j}^m$, $m_{x_0}(S_\trn)$ are actually samples drawn from $\cD_{m_{x_0}(q),1-j}^m$. Then we have
\begin{align*}
    &\EEs{h^*\sim \Unif(\cH),S_\trn\sim \cD^m}{\atk_\cD(t,h^*,S_\trn, \A)}\\
    =&\zeta\EEs{q\sim \Unif(\sph), j\sim\Ber(\frac{1}{2}),S_\trn\sim \cD_{q,j}^m,x\sim \Unif(C_q),\A}{\ind{\cE_1(\A,\Adv,h_{q,j},S_\trn,x)}}\\
    \geq &\zeta\EEs{q\sim \Unif(\sph), j\sim\Ber(\frac{1}{2}),S_\trn\sim \cD_{q,j}^m,x\sim \Unif(C_q),\A}{\ind{\cE_1(\A,\Adv,h_{q,j},S_\trn,x)\cap \cE_2(S_\trn,x,q)}}\\
    =&\zeta\int_{x\in \sph}\EEs{q\sim \Unif(\sph), j\sim\Ber(\frac{1}{2}),S_\trn\sim \cD_{q,j}^m,\A}{f_{q}(x)\ind{\cE_1(\A,\Adv,h_{q,j},S_\trn,x)\cap \cE_2(S_\trn,x,q)}}dx\\
    =&\zeta\int_{x\in \sph}{\E}_{q\sim \Unif(\sph), j\sim\Ber(\frac{1}{2}),S_\trn\sim \cD_{q,j}^m,\A} [f_{m_x(q)}(x)\ind{\neg \cE_1(\A,\Adv,h_{m_x(q),1-j},m_x(S_\trn),x)}\\
    &\cdot \ind{\cE_2(m_x(S_\trn),x,m_{x}(q))}]dx\\
    =&\zeta\int_{x}\EEs{q\sim \Unif(\sph), j\sim\Ber(\frac{1}{2}),S_\trn\sim \cD_{q,j}^m,\A}{f_{q}(x)\ind{\neg \cE_1(\A,\Adv,h_{q,j},S_\trn,x)} \cdot \ind{\cE_2(S_\trn,x,q)}}dx\\
    =&\frac{\zeta}{2}\int_{x}\EEs{q\sim \Unif(\sph), j\sim \Ber(\frac{1}{2}),S_\trn\sim \cD_{q,j}^m}{f_{q}(x)\ind{\cE_2(S_\trn,x,q)}}dx\\
    >&\frac{3\zeta}{8}\,,
\end{align*}
which completes the proof for $d=1$ by setting $\zeta = \min(\frac{t}{8m},1)$. 
  
\paragraph{Extension to general $d\geq 1$.}
To extend the base case to $d>1$, we construct $d$ separate balls and repeat the above construction on each ball individually. For $i\in[d]$, let $\sph_i = \sph^3(3ie_1,1)$ denote the sphere of a ball with radius $1$ centered at $3ie_1$. 
Consider the domain $\cX = \cup_{i\in [d]}\sph_i\cup \{\bZero\}$ as the union of $d$ non-overlapping unit balls and the origin. 
For $q_i\in \sph_i$, let $h^1_{q_i}=\ind{\sph^3(q_i,1)\cap \sph_i}$ denote the hypothesis classifying only points on the circle of $\sph^3(q_i,1)\cap \sph_i$ positive and $h^0_{q_i}=\ind{\sph_i\setminus \sph^3(q_i,1)}$ denote the hypothesis classifying only points on $\sph_i$ positive except the circle $\sph^3(q_i,1)\cap \sph_i$. Let $h^s_{q_1,\ldots,q_{d}}=\sum_{i\in[d]}h^{s_i}_{q_i}$, where $s\in\{0,1\}^{d}$ denote the hypothesis combining all $d$ balls and $\cH = \{h^s_{q_1,\ldots,q_{d}}|q_i\in \sph_i,\forall i\in [d],s\in\{0,1\}^{d}\}$. We have the VC dimension of $\cH$ is $5d$. Our target function is selected uniformly at random from $\cH$ and similar to the case of $d=1$, we assign probability $\zeta = \min(\frac{1}{d},\frac{t}{8m})$ to each circle on the balls and the remaining probability mass on the origin. Since every ball is independent with other balls and thus, we have $\EEs{h^*\sim \Unif(\cH),S_\trn\sim \cD^m}{\atk_\cD(t, h^*, S_\trn, \A)}> \frac{3d\zeta}{8}=  \min(\frac{3td}{64m},\frac{3}{8})$. 
  
 In all, there exists a target function $h^*\in \cH$ and a data distribution $\cD$ over $D_{h^*}$ such that $\EEs{S_\trn}{\atk_\cD(t, h^*, S_\trn, \A)}> \epsilon$ when $m<\frac{3td}{64\epsilon}$ for $\epsilon\leq \frac{3}{8}$.
\end{proof}
\end{document}